\documentclass[twoside]{article}

%
\usepackage[accepted]{aistats2024}
%

\usepackage{amsmath,amsfonts,amssymb,amsthm,mathrsfs,accents}
\usepackage{algorithmic}
\usepackage{algorithm}
\usepackage[breaklinks=true,bookmarks=false]{hyperref}
\usepackage[caption=false,font=small,labelfont=sf,textfont=sf]{subfig}
\usepackage{verbatim}
\usepackage{graphicx}
\usepackage{epstopdf}
\usepackage{float,url,multirow,enumitem,makecell}

\makeatletter
\def\widebar{\accentset{{\cc@style\underline{\mskip10mu}}}}
\makeatother

\makeatletter
\newsavebox\myboxA
\newsavebox\myboxB
\newlength\mylenA

\newcommand*\xoverline[2][0.5]{%
    \sbox{\myboxA}{$\m@th#2$}%
    \setbox\myboxB\null
    \ht\myboxB=\ht\myboxA%
    \dp\myboxB=\dp\myboxA%
    \wd\myboxB=#1\wd\myboxA
    \sbox\myboxB{$\m@th\overline{\copy\myboxB}$}
    \setlength\mylenA{\the\wd\myboxA}
    \addtolength\mylenA{-\the\wd\myboxB}%
    \ifdim\wd\myboxB<\wd\myboxA%
       \rlap{\hskip 0.8\mylenA\usebox\myboxB}{\usebox\myboxA}%
    \else
        \hskip -0.5\mylenA\rlap{\usebox\myboxA}{\hskip 0.5\mylenA\usebox\myboxB}%
    \fi}
\makeatother

\newcommand\scalemath[2]{\scalebox{#1}{\mbox{\ensuremath{\displaystyle #2}}}}

\newcommand*\tcircle[1]{%
  \raisebox{-0.5pt}{%
    \textcircled{\fontsize{7pt}{0}\fontfamily{phv}\selectfont #1}%
  }%
}

\theoremstyle{plain}
\newtheorem{theorem}{Theorem}

\newtheorem{lemma}{Lemma}

\newtheorem{corollary}{Corollary}
\newtheorem{definition}{Definition}

\graphicspath{{Figures/}}



\newcommand{\ba}{\mathbf{a}}
\newcommand{\bA}{\mathbf{A}}
\newcommand{\cA}{\mathcal{A}}

\newcommand{\bcA}{\boldsymbol{\cA}}

\newcommand{\bB}{\mathbf{B}}
\newcommand{\cB}{\mathcal{B}}

\newcommand{\bcB}{\boldsymbol{\cB}}

\newcommand{\C}{\mathbb{C}}
\newcommand{\bC}{\mathbf{C}}
\newcommand{\cC}{\mathcal{C}}

\newcommand{\bcC}{\boldsymbol{\cC}}

\newcommand{\bD}{\mathbf{D}}
\newcommand{\cD}{\mathcal{D}}

\newcommand{\bcD}{\boldsymbol{\cD}}

\newcommand{\be}{\mathbf{e}}
\newcommand{\ce}{\mathfrak{e}}

\newcommand{\bE}{\mathbf{E}}
\newcommand{\cE}{\mathcal{E}}

\newcommand{\bcE}{\boldsymbol{\cE}}

\newcommand{\cF}{\mathcal{F}}

\newcommand{\bcF}{\boldsymbol{\cF}}

\newcommand{\cG}{\mathcal{G}}

\newcommand{\bcG}{\boldsymbol{\cG}}

\newcommand{\cH}{\mathcal{H}}

\newcommand{\bH}{\mathbf{H}}

\newcommand{\bcH}{\boldsymbol{\cH}}

\newcommand{\bI}{\mathbf{I}}
\newcommand{\cI}{\mathcal{I}}

\newcommand{\bcI}{\boldsymbol{\cI}}

\newcommand{\bJ}{\mathbf{J}}
\newcommand{\cJ}{\mathcal{J}}

\newcommand{\bcJ}{\boldsymbol{\cJ}}

\newcommand{\bL}{\mathbf{L}}
\newcommand{\cL}{\mathcal{L}}

\newcommand{\bcL}{\boldsymbol{\cL}}

\newcommand{\cN}{\mathcal{N}}

\newcommand{\cO}{\mathcal{O}}

\newcommand{\bP}{\mathbf{P}}
\newcommand{\cP}{\mathcal{P}}

\newcommand{\bcP}{\boldsymbol{\cP}}

\newcommand{\bQ}{\mathbf{Q}}
\newcommand{\cQ}{\mathcal{Q}}

\newcommand{\bcQ}{\boldsymbol{\cQ}}

\newcommand{\R}{\mathbb{R}}

\newcommand{\cR}{\mathcal{R}}

\newcommand{\bcR}{\boldsymbol{\cR}}

\newcommand{\bS}{\mathbf{S}}
\newcommand{\cS}{\mathcal{S}}

\newcommand{\bcS}{\boldsymbol{\cS}}

\newcommand{\cT}{\mathcal{T}}

\newcommand{\bcT}{\boldsymbol{\cT}}

\newcommand{\cU}{\mathcal{U}}
\newcommand{\bU}{\mathbf{U}}

\newcommand{\bcU}{\boldsymbol{\cU}}

\newcommand{\bV}{\mathbf{V}}
\newcommand{\cV}{\mathcal{V}}

\newcommand{\bcV}{\boldsymbol{\cV}}

\newcommand{\cW}{\mathcal{W}}

\newcommand{\bcW}{\boldsymbol{\cW}}

\newcommand{\bx}{\mathbf{x}}

\newcommand{\bX}{\mathbf{X}}
\newcommand{\cX}{\mathcal{X}}

\newcommand{\bcX}{\boldsymbol{\cX}}

\newcommand{\bY}{\mathbf{Y}}
\newcommand{\cY}{\mathcal{Y}}

\newcommand{\bcY}{\boldsymbol{\cY}}

\newcommand{\bz}{\mathbf{z}}

\newcommand{\bZ}{\mathbf{Z}}
\newcommand{\cZ}{\mathcal{Z}}

\newcommand{\bcZ}{\boldsymbol{\cZ}}



\DeclareMathOperator*{\argmin}{arg\,min}


\newcommand{\tr}{\mathrm{tr}}





\begin{document}

%

%

\twocolumn[

\aistatstitle{Robust Data Clustering with Outliers via Transformed Tensor Low-Rank Representation}

\aistatsauthor{ Tong Wu }

\aistatsaddress{ Beijing Institute for General Artificial Intelligence  \\  \texttt{wutong@bigai.ai} } ]

\begin{abstract}
Recently, tensor low-rank representation (TLRR) has become a popular tool for tensor data recovery and clustering, due to its empirical success and theoretical guarantees. However, existing TLRR methods consider Gaussian or gross sparse noise, inevitably leading to performance degradation when the tensor data are contaminated by outliers or sample-specific corruptions. This paper develops an outlier-robust tensor low-rank representation (OR-TLRR) method that provides outlier detection and tensor data clustering simultaneously based on the t-SVD framework. For tensor observations with arbitrary outlier corruptions, OR-TLRR has provable performance guarantee for exactly recovering the row space of clean data and detecting outliers under mild conditions. Moreover, an extension of OR-TLRR is proposed to handle the case when parts of the data are missing. Finally, extensive experimental results on synthetic and real data demonstrate the effectiveness of the proposed algorithms. We release our code at \url{https://github.com/twugithub/2024-AISTATS-ORTLRR}.
\end{abstract}

\section{INTRODUCTION}

In this work, we are interested in clustering 3-way tensor data in the presence of outliers. Formally, suppose we are given a data tensor $\bcX \in \R^{n_1 \times n_2 \times n_3}$ with each lateral slice corresponding to one data sample, and we know the tensor can be decomposed as
\begin{align}
\bcX = \bcL_0 + \bcE_0,
\end{align}
where $\bcL_0$ is a low-rank tensor with the tensor columns drawn from a union of tensor subspaces (see Section~\ref{ssec:prelim}), and $\bcE_0$ is a column-sparse tensor. Both components are of arbitrary magnitudes. In particular, we do not know the locations of the nonzero columns of $\bcE_0$, not even how many there are. We aim to exactly recover the low-rank tensor $\bcL_0$ from $\bcX$ and cluster the samples in $\bcL_0$ according to their respective tensor subspaces. This problem is important for many applications, including image/video denoising and inpainting \cite{LiuMWY.PAMI2013,LuFCLLY.PAMI2020}, data clustering \cite{ElhamifarV.PAMI2013,ZhouLFLY.PAMI2021}, background subtraction \cite{XiaCSL.TNNLS2021}, and network traffic monitoring \cite{MardaniMG.TSP2015}.

This problem has been well studied in the matrix domain. Subspace clustering methods, for example, involve treating vectorized data samples as lying near a union of subspaces and clustering the samples based on the self-expressive model \cite{ElhamifarV.PAMI2013,LiuLYSYM.PAMI2013}. While these matrix techniques are remarkably effective, one major shortcoming is that they can only handle 2-way (matrix) data. To deal with multi-way data such as color images, hyperspectral images and videos, one needs to reshape the tensor data into a matrix. Such a vectorization process would destroy the original spatial structure of data and lead to compromised performance \cite{LuFCLLY.PAMI2020}. This suggests that it might be useful to maintain the intrinsic structure of multi-dimensional data for efficient and reliable subsequent processing.

Recently, several tensor-based clustering algorithms that exploit the spatial aspects of tensor data using multilinear algebra tools have been proposed \cite{KernfeldAK.arxiv2014,FuGTLH.TNNLS2016,ZhangLJLS.SPL2018,WuB.SPL2018,Wu.PR2020,ZhouLFLY.PAMI2021,Yang.etal.TNNLS2022,Wu.TCSVT2023}. In particular, motivated by the notion of the tensor-tensor product (t-product) \cite{KilmerM.LAA2011} which generalizes the matrix multiplication for third-order tensors based on the Discrete Fourier Transform (DFT), one can assume that multi-way data lie near a union of tensor subspaces \cite{KernfeldAK.arxiv2014,WuB.SPL2018,Wu.PR2020,ZhouLFLY.PAMI2021}. As a consequence, each data sample in a union of tensor subspaces can be well represented by a t-linear combination of other points in the dataset. Interestingly, the t-product can be generalized by replacing DFT with any invertible linear transforms \cite{KernfeldKA.LAA2015}. Such a new transform based t-product is of great interest because it allows to use data-adaptive transforms for different types of tensor data. Driven by this advantage, extensive numerical examples have shown its effectiveness in many applications \cite{LuPW.CVPR2019,SongNZ.NLAA2020,JiangNZH.TIP2020,Lu.ICCV2021,KongLL.ML2021,Wang.etal.TIP2021,Qin.etal.TIP2022,Yang.etal.TNNLS2022,JiangZZN.TNNLS2023}.

However, all the aforementioned tensor clustering methods simply assume the noise is sparse and uniformly distributed across $\bcE_0$. Thus, such methods cannot effectively handle outliers or sample-specific corruptions, which are common in real scenarios because of sensor failures, uncontrolled environments, etc. Another limitation of existing tensor subspace clustering methods is that most of them are designed for complete data. When dealing with incomplete data, we need to first recover the incomplete tensor data using any off-the-shelf tensor completion methods and then cluster the recovered data. This two-step approach may lead to sub-optimal clustering results because the downstream clustering task is independent of the optimization problem that is used for tensor recovery \cite{Yang.etal.TNNLS2022}. While the work in \cite{Yang.etal.TNNLS2022} can simultaneously recover and cluster the incomplete and noisy data, it still requires learning a dictionary from the incomplete data before clustering can be carried out.

Our goal in this paper is to perform robust low-rank tensor analysis on both complete and missing data for improved data clustering in the presence of outliers. We put forth an outlier-robust tensor low-rank representation (OR-TLRR) method which clusters the observed tensor data and detects outliers through convex optimization. Our model is more generic as it is allowed to use any invertible linear transforms that satisfy certain conditions. Interestingly, as we show in Section~\ref{ssec:theory}, the row space of $\bcL_0$ determines the segmentation result of the ``authentic" samples. We theoretically show that OR-TLRR guarantees the exact recovery of the row space of $\bcL_0$ when $\bcL_0$ and $\bcE_0$ satisfy certain assumptions. Different from \cite{ZhouLFLY.PAMI2021,Yang.etal.TNNLS2022} that use the $\ell_1$ norm of $\bcE$ to handle sparse noise, we focus on a different problem and thus there exist critical differences in theoretical analysis and guarantees. Lastly, we generalize the OR-TLRR algorithm so that it can obtain a tensor low-rank representation of the incomplete data using only the observed entries. Experiments on synthetic and real data show the efficacy of the proposed algorithms. The main contributions of this work can be summarized as follows:
\begin{itemize}
\item We propose a novel OR-TLRR method for low-rank tensor analysis based on any invertible linear transforms. OR-TLRR handles outliers and sample-specific corruptions which cannot be well handled by existing tensor clustering methods.
\item We provide theoretical performance guarantees for OR-TLRR: under mild conditions, OR-TLRR can exactly recover the row space of $\bcL_0$ and detect outliers.
\item We extend OR-TLRR for tensor subspace clustering with missing entries called OR-TLRR by entry wise zero fill (OR-TLRR-EWZF), in which the self-expressiveness error is restricted only to the observed entries.
\end{itemize}

\section{RELATED WORK}
\label{sec:review}

Existing subspace clustering methods \cite{LiuY.ICCV2011,ErikssonBN.AISTATS2012,LiuXY.AISTATS2012,ElhamifarV.PAMI2013,LiuLYSYM.PAMI2013,TangLSZ.TNNLS2014,YangRV.ICML2015,YouRV.CVPR2016,LuFLMY.PAMI2019} that are based on the self-expressive model mainly follow a two-stage approach: ($i$) learning an affinity matrix by solving an optimization problem from the data using different regularizations, and ($ii$) applying spectral clustering (e.g., Ncut \cite{ShiM.PAMI2000}) on the affinity matrix. The key component in subspace learning is to construct a good affinity graph. In particular, the matrix-based low-rank representation (LRR) \cite{LiuXY.AISTATS2012,LiuLYSYM.PAMI2013} algorithm imposes a low-rank constraint on the representation coefficient matrix to capture the global structure of data. Due to its physical interpretation and promising performance, many variants of LRR have been further studied. The latent LRR \cite{LiuY.ICCV2011} extends LRR to handle the hidden effects and integrates subspace segmentation and feature extraction into a unified framework. The structure-constrained LRR \cite{TangLSZ.TNNLS2014} improves LRR for the disjoint subspace segmentation. Typical subspace clustering methods, despite their effectiveness, neglect the spatial information within the data. Due to the powerful learning ability, deep clustering methods can capture the complex nonlinear structure within the data \cite{XieGF.ICML2016,JiZLSR.NIPS2017,JiVH.ICCV2019,LiHLPZP.AAAI2021}. However, they cannot solve the problem of clustering high-dimensional but small-scale tensor data considered in this work. In addition, they lack the theoretical guarantee for data clustering.

In order to better capture the geometry of tensor data, a high-order model based upon t-product \cite{KilmerM.LAA2011,KilmerBHH.SIAM2013} that has in particular helped advance the state-of-the-art in many applications is the union-of-tensor-subspaces model \cite{KernfeldAK.arxiv2014,WuB.SPL2018,ZhouLFLY.PAMI2021}, which dictates that data lie near a mixture of low-dimensional tensor subspaces. One striking advantage of the t-product is its capability in capturing the ``spatial-shifting" correlation that is ubiquitous in real-world data. For the purpose of inducing tensor subspace-preserving solutions, existing methods use different regularizations on the representation coefficients. For example, sparse submodule clustering (SSmC) \cite{KernfeldAK.arxiv2014} imposes a group sparsity constraint on the representation tensor, while the tensor nuclear norm \cite{LuFCLLY.PAMI2020} is adopted in tensor low-rank representation (TLRR) \cite{ZhouLFLY.PAMI2021} as a convex surrogate of the tensor tubal rank. Motivated by the general t-product definition performed on unitary transforms \cite{KernfeldKA.LAA2015}, a more general TLRR model that can handle incomplete data is proposed in \cite{Yang.etal.TNNLS2022}. However, both \cite{ZhouLFLY.PAMI2021} and \cite{Yang.etal.TNNLS2022} assume that noise is Gaussian or sparse and thus cannot effectively handle outlier corruptions. Indeed, robust tensor subspace clustering with outliers appears to be a challenging problem that has not been well studied both in practice and in theory. We conclude by noting that tensor nuclear norm has been utilized for multi-view clustering \cite{XieTZLZQ.IJCV2018,SuHWL.SP2023}. Nonetheless, these works are still based on the union of subspaces model and reduce to different variants of \cite{ElhamifarV.PAMI2013,LiuLYSYM.PAMI2013} for single-view data.

\section{NOTATIONS AND PRELIMINARIES}
\label{sec:notationprelim}

In this section, we will introduce the notations and give the basic definitions used throughout the paper.

\subsection{Notations}

For brevity, we summarize the notations in Table~\ref{tab:notation}. Throughout this paper, the fields of real number and complex number are denoted as $\R$ and $\C$, respectively. The definitions of the identity tensor and the conjugate transpose of a tensor will be given in Section~\ref{ssec:prelim}. The frontal-slice-wise product of two third-order tensors $\bcA \in \C^{n_1 \times n_2 \times n_3}$ and $\bcB \in \C^{n_2 \times n_4 \times n_3}$, denoted by $\bcA \odot \bcB$, is a tensor $\bcC \in \C^{n_1 \times n_4 \times n_3}$ such that $\bC^{(i)} = \bA^{(i)} \bB^{(i)}$, $i = 1, \dots, n_3$ \cite{KernfeldKA.LAA2015}.

\begin{table}[t]
\tiny
\centering
\caption{Notational convention.}
\setlength\tabcolsep{-0.03em}{\begin{tabular}{l|l|l|l}
\hline
$\bcA$ & A tensor. & $\ba$ & A vector.  \\
$\bA$ & A matrix. & $a$ & A scalar.  \\
\hline
$\bI_n$ & The identity matrix. & $\bA^H$ & The conjugate transpose of $\bA$.  \\
$\bA_{i,j}$ & The ($i,j$)-th element of $\bA$. & $\tr(\bA)$ & The trace of $\bA$.  \\
$\ba_j$ & The $j$-th column of $\bA$. & $\|\bA\|$ & The largest singular value of $\bA$.  \\
$\ba_j^T$ & The $j$-th row of $\bA$. & $\|\bA\|_{\ast}$ & Sum of singular values of $\bA$.  \\
\hline
$\bcI_n$ & The identity tensor. & $\bA^{(i)}$ & $\bA^{(i)} = \bcA(:,:,i)$.  \\
$\bcA_{i,j,k}$ & The ($i,j,k$)-th entry of $\bcA$. & $\bcA_{(i)}$ & $\bcA_{(i)} = \bcA(:,i,:)$.  \\
$\bcA(:,i,:)$ & The $i$-th lateral slice of $\bcA$. & $\bcA^H$ & The conjugate transpose of $\bcA$.  \\
$\bcA(:,:,i)$ & The $i$-th frontal slice of $\bcA$. & $\|\bcA\|_{2,1}$ & $\|\bcA\|_{2,1} = \sum_j \|\bcA(:,j,:)\|_F$.  \\
$\bcA(i,j,:)$ & The ($i,j$)-th tube of $\bcA$. & $\|\bcA\|_F$ & $\|\bcA\|_F = \sqrt{\sum_{i,j,k} |\bcA_{i,j,k}|^2}$.  \\
\hline
$\bcP_{\bcU} (\bcA)$ & $\bcP_{\bcU} (\bcA) = \bcU \ast_{\boldsymbol{L}} \bcU^H \ast_{\boldsymbol{L}} \bcA$. & $\bcP_{\mathbf{\Theta}}$ & The projection onto $\mathbf{\Theta}$.  \\
$\bcP_{\bcV} (\bcA)$ & $\bcP_{\bcV} (\bcA) = \bcA \ast_{\boldsymbol{L}} \bcV \ast_{\boldsymbol{L}} \bcV^H$. & $\bcP_{\mathbf{\Theta}^{\perp}}$ & The projection onto $\mathbf{\Theta}^c$.  \\
$\bcP_{\bcV}^L (\bcA)$ & $\bcP_{\bcV}^L (\bcA) = \bcV \ast_{\boldsymbol{L}} \bcV^H \ast_{\boldsymbol{L}} \bcA$. & $\bcB (\bcE)$ & $\{ \tilde{\bcE}: \tilde{\bcE}(:,i,:) = \frac{\bcE(:,i,:)}{\|\bcE(:,i,:)\|_F}$  \\
 & & & $(i \in \mathbf{\Theta}); \bcP_{\mathbf{\Theta}^{\perp}} \tilde{\bcE} = \mathbf{0} \}$.  \\
\hline
\end{tabular}}
\label{tab:notation}
\end{table}

\subsection{Preliminaries}
\label{ssec:prelim}

The third-order t-product under an invertible linear transform $L$ establishes the groundwork for the development of our algorithm and its definition was first given in \cite{KernfeldKA.LAA2015}. In this paper, the transformation matrix $\boldsymbol{L}$ defining the transform $L$ is restricted to be orthogonal, i.e., $\boldsymbol{L} \in \C^{n_3 \times n_3}$ satisfying
\begin{align}    \label{eqn:lconstraint}
\boldsymbol{L} \boldsymbol{L}^H = \boldsymbol{L}^H \boldsymbol{L} = \tau \bI_{n_3},
\end{align}
where $\tau > 0$ is a constant. We define the associated linear transform $L(\cdot) : \R^{n_1 \times n_2 \times n_3} \to \C^{n_1 \times n_2 \times n_3}$ which gives $\xoverline{\bcA}$ by performing a linear transform on any $\bcA \in \R^{n_1 \times n_2 \times n_3}$ along the 3rd dimension with inverse mapping $L^{-1}(\cdot)$ as
\begin{align}    \label{eqn:mode3prod}
\xoverline{\bcA} = L(\bcA) = \bcA \times_3 \boldsymbol{L} ~ \text{and} ~ L^{-1}(\bcA) = \bcA \times_3 \boldsymbol{L}^{-1}.
\end{align}
Here, $\times_3$ denotes the mode-3 tensor-matrix product \cite{KoldaB.Rev2009}, i.e., $\bcX \times_3 \bA = \mathtt{fold}_3 (\bA \cdot \mathtt{unfold}_3 (\bcX))$, where $\mathtt{unfold}_3 : \C^{n_1 \times n_2 \times n_3} \to \C^{n_3 \times n_1 n_2}$ is the mode-3 unfolding operator and $\mathtt{fold}_3$ is its inverse operator \cite{KoldaB.Rev2009}.
\begin{definition}[t-product \cite{KernfeldKA.LAA2015}]
The t-product of any $\bcA \in \C^{n_1 \times n_2 \times n_3}$ and $\bcB \in \C^{n_2 \times n_4 \times n_3}$ under the invertible linear transform $L$ in \eqref{eqn:mode3prod}, is defined as $\bcC = \bcA \ast_{\boldsymbol{L}} \bcB \in \C^{n_1 \times n_4 \times n_3}$ such that $L(\bcC) = L(\bcA) \odot L(\bcB)$.
\end{definition}
We denote $\widebar{\bA} \in \C^{n_1n_3 \times n_2 n_3}$ as
\begin{align*}
\widebar{\bA} = \mathtt{bdiag}(\xoverline{\bcA}) =
\begin{bmatrix}
\widebar{\bA}^{(1)} &  &  \\
 & \ddots &  \\
 & & \widebar{\bA}^{(n_3)}
\end{bmatrix},
\end{align*}
where $\mathtt{bdiag}$ is an operator which maps $\xoverline{\bcA}$ to $\widebar{\bA}$. The conjugate transpose of a tensor $\bcA \in \C^{n_1 \times n_2 \times n_3}$ is the tensor $\bcA^H \in \C^{n_2 \times n_1 \times n_3}$ satisfying $L(\bcA^H)^{(i)} = (L(\bcA)^{(i)})^H$, $i = 1, 2, \dots, n_3$. The identity tensor $\bcI_n \in \C^{n \times n \times n_3}$ is a tensor such that each frontal slice of $L(\bcI_n) = \xoverline{\bcI}_n$ is the identity matrix $\bI_n$. Then $\bcI_n = L^{-1}(\xoverline{\bcI}_n)$ gives the identity tensor under $L$. A tensor $\bcQ \in \C^{n \times n \times n_3}$ is orthogonal if it satisfies $\bcQ \ast_{\boldsymbol{L}} \bcQ^H = \bcQ^H \ast_{\boldsymbol{L}} \bcQ = \bcI_n$. A tensor is called f-diagonal if each of its frontal slices is a diagonal matrix. The spectral norm of $\bcA$ is defined as $\| \bcA \| = \| \widebar{\bA} \|$.
\begin{definition}[t-SVD \cite{SongNZ.NLAA2020}]
Let $L$ be any invertible linear transform in \eqref{eqn:mode3prod}. For any $\bcA \in \C^{n_1 \times n_2 \times n_3}$, it can be factorized as $\bcA = \bcU \ast_{\boldsymbol{L}} \bcS \ast_{\boldsymbol{L}} \bcV^H$, where $\bcU \in \C^{n_1 \times n_1 \times n_3}$ and $\bcV \in \C^{n_2 \times n_2 \times n_3}$ are orthogonal, and $\bcS \in \C^{n_1 \times n_2 \times n_3}$ is an f-diagonal tensor.
\end{definition}
\begin{definition}[Tensor tubal rank \cite{SongNZ.NLAA2020}]
For any $\bcA \in \C^{n_1 \times n_2 \times n_3}$, the tensor tubal rank $\text{rank}_t(\bcA)$ under $L$ in \eqref{eqn:mode3prod} is defined as the number of nonzero singular tubes of $\bcS$, i.e., $\text{rank}_t(\bcA) = \# \{ i: \bcS(i,i,:) \neq \mathbf{0}\}$, where $\bcS$ is from the t-SVD of $\bcA = \bcU \ast_{\boldsymbol{L}} \bcS \ast_{\boldsymbol{L}} \bcV^H$.
\end{definition}
For computational simplicity, we use the skinny t-SVD throughout this paper. For any $\bcA \in \C^{n_1 \times n_2 \times n_3}$ with tubal rank $r$, the skinny t-SVD is given by $\bcA = \bcU \ast_{\boldsymbol{L}} \bcS \ast_{\boldsymbol{L}} \bcV^H$, where $\bcU \in \C^{n_1 \times r \times n_3}$, $\bcS \in \C^{r \times r \times n_3}$, and $\bcV \in \C^{n_2 \times r \times n_3}$, in which $\bcU^H \ast_{\boldsymbol{L}} \bcU = \bcV^H \ast_{\boldsymbol{L}} \bcV = \bcI_r$.
\begin{definition}[Tensor nuclear norm \cite{SongNZ.NLAA2020}]
The tensor nuclear norm of $\bcA \in \C^{n_1 \times n_2 \times n_3}$ under $L$ in \eqref{eqn:mode3prod} is defined as $\| \bcA \|_{\ast} = \frac{1}{\tau} \sum_{i=1}^{n_3} \| \widebar{\bA}^{(i)} \|_{\ast} = \frac{1}{\tau} \| \widebar{\bA} \|_{\ast}$.
\end{definition}
\begin{definition}[Tensor subspace \cite{ZhouLFLY.PAMI2021}]    \label{def:tensorsubspace}
Given a third-order tensor $\bcD = [ \bcD_{(1)}, \dots, \bcD_{(p)} \} \in \R^{n_1 \times p \times n_3}$ in which the elements $\bcD_{(i)}$'s are linearly independent, i.e., there is not a nonzero $\bcC \in \R^{p \times 1 \times n_3}$ satisfying $\bcD \ast_{\boldsymbol{L}} \bcC = \mathbf{0}$. Then the set $\cS_{n_3}^{n_1} = \{ \bcY | \bcY = \bcD \ast_{\boldsymbol{L}} \bcC, \forall \bcC \in \R^{p \times 1 \times n_3} \}$ is called a tensor subspace of dimension $\text{dim}(\cS_{n_3}^{n_1}) = p$. Here, $\bcD_{(1)}, \dots, \bcD_{(p)}$ are called the spanning basis of $\cS_{n_3}^{n_1}$.
\end{definition}
\begin{definition}[Tensor column space \cite{ZhouF.CVPR2017}]    \label{def:columnsubspace}
For an arbitrary tensor $\bcA \in \R^{n_1 \times n_2 \times n_3}$ with tubal rank $r$, assume the t-SVD of $\bcA$ is $\bcA = \bcU \ast_{\boldsymbol{L}} \bcS \ast_{\boldsymbol{L}} \bcV^H$. Then its column space $\text{Range} (\bcA)$ is the t-linear space spanned by the columns of $\bcU \in \R^{n_1 \times r \times n_3}$, i.e., $\text{Range} (\bcA) = \{ \bcY | \bcY = \bcU \ast_{\boldsymbol{L}} \bcC, \forall \bcC \in \R^{r \times 1 \times n_3} \}$.
\end{definition}

\section{OUTLIER-ROBUST TLRR}
\label{sec:completealgo}

In this section, we propose our OR-TLRR model for tensor data clustering in the presence of outliers, followed by the theoretical analysis on the recovery guarantee of OR-TLRR. The optimization details and proofs of all the Lemmas and Theorems are provided in the supplementary material.

\subsection{Formulation of OR-TLRR}

Assume that we are given $n_2$ data samples of size $n_1 \times n_3$, stored as lateral slices in a tensor $\bcX = \bcL_0 + \bcE_0 \in \R^{n_1 \times n_2 \times n_3}$, where the columns in $\bcL_0$ belong to $c$ distinct tensor subspaces and $\bcE_0$ is a tensor corresponding to outliers. The challenge is to recover the low-rank tensor $\bcL_0$ from $\bcX$ and segment the samples in $\bcL_0$ into $c$ clusters. In many cases, the error term $\bcE_0$ has sparse column supports compared to the data size. This implies that the tensor $\ell_{2,1}$-norm is appropriate to characterize $\bcE_0$. Let $L$ be any invertible linear transform in \eqref{eqn:mode3prod} and it satisfies \eqref{eqn:lconstraint}. Mathematically, the OR-TLRR model can be described as the following optimization program based on the tensor nuclear norm:
\begin{align}    \label{eqn:ortlrrprob}
\min_{\bcZ,\bcE} \|\bcZ\|_{\ast} + \lambda \|\bcE\|_{2,1} \quad \text{s.t.} \quad \bcX = \bcX \ast_{\boldsymbol{L}} \bcZ + \bcE,
\end{align}
where $\bcZ \in \R^{n_2 \times n_2 \times n_3}$ corresponds to a coefficient tensor and $\lambda > 0$ is a parameter. In this work, we directly use the raw tensor data $\bcX$ as the dictionary. In this setting, the learned representation tensor $\bcZ$ depicts the relations among samples and can be used for clustering. Suppose we have an optimal solution $(\bcZ_{\star}, \bcE_{\star})$ for \eqref{eqn:ortlrrprob}. For clustering, we first perform outlier detection based on the magnitude of the residual $\bcE_{\star}$, and we use $\mathbf{\Theta}$ to denote the indices of the ``detected" outliers. Then we compute an affinity matrix $\hat{\bZ} \in \R^{|\mathbf{\Theta}^c| \times |\mathbf{\Theta}^c|}$ by setting $\hat{\bZ} = \frac{1}{2 n_3} \sum_{k=1}^{n_3} (|\bZ_{\star [\mathbf{\Theta}^c,\mathbf{\Theta}^c]}^{(k)}| + |(\bZ_{\star [\mathbf{\Theta}^c,\mathbf{\Theta}^c]}^{(k)})^H|)$, where $\bA_{[\mathbf{\Theta}, \mathbf{\Theta}]}$ is a submatrix of $\bA$ corresponding to rows and columns indexed by $\mathbf{\Theta}$. Finally, clustering of the ``detected" normal samples can be carried out using existing tools such as Ncut \cite{ShiM.PAMI2000}.

\subsection{Tensor Linear Representation}

Before introducing the proposed algorithm for clustering, we explain the connection between the tensor linear representation $\bcX = \bcA \ast_{\boldsymbol{L}} \bcZ$ and canonical vector linear representation. We introduce two operators: $\mathtt{vec}$ and $\mathtt{ivec}$, where $\mathtt{vec}$ vectorizes each sample $\bcX_{(j)}$ to a vector $\bx_j$ of dimension $n_1n_3$ and $\mathtt{ivec}$ is its inverse operation. For each sample $\bcX_{(j)}$, if its vectorized version $\bx_j$ can be linearly represented by a dictionary $\bA \in \R^{n_1n_3 \times p}$ as
\begin{align}    \label{eqn:linearrep}
\bx_j = \bA \bz_j, ~ \forall j = 1, \dots, n_2,
\end{align}
then one can always find two tensors $\bcA \in \R^{n_1 \times p \times n_3}$ and $\bcZ \in \C^{p \times n_2 \times n_3}$ such that tensor linear representation $\bcX = \bcA \ast_{\boldsymbol{L}} \bcZ$ holds, as stated below.
\begin{theorem}    \label{thm:linearrep}
If \eqref{eqn:linearrep} holds, then there exist two tensors $\bcA \in \R^{n_1 \times p \times n_3}$ and $\bcZ \in \C^{p \times n_2 \times n_3}$ such that
\begin{align}    \label{eqn:tensorlinearrep}
\bcX_{(j)} = \mathtt{ivec} (\bx_j) = \bcA \ast_{\boldsymbol{L}} \bcZ_{(j)}, \quad \forall j = 1, \dots, n_2,
\end{align}
where $\bcA$ can be constructed from $\bA$ by setting $\bcA_{(j)} = \mathtt{ivec} (\ba_j)$ and $\bcZ$ can be computed by $\bcZ = L^{-1} (\xoverline{\bcZ})$ in which $\xoverline{\bcZ}(:,j,k) = \bz_j$ $(k = 1, \dots, n_3)$. However, if there exists a tensor such that \eqref{eqn:tensorlinearrep} holds, \eqref{eqn:linearrep} may not hold.
\end{theorem}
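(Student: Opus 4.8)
The plan is to reduce the tensor linear representation to a set of $n_3$ matrix equations by applying the transform $L$, and to exploit the structure of the $\odot$ (frontal-slice-wise) product. First I would record what \eqref{eqn:linearrep} says slice-wise: stacking the $\bx_j$ as columns gives a matrix $\bX_{\mathrm{m}} = \bA \bZ_{\mathrm{m}} \in \R^{n_1 n_3 \times n_2}$, where $\bZ_{\mathrm{m}}$ has columns $\bz_j$. Now I build the candidate tensors. For $\bcA$, set $\bcA_{(j)} = \mathtt{ivec}(\ba_j)$ for $j = 1, \dots, p$, so that $\mathtt{vec}(\bcA_{(j)}) = \ba_j$, i.e.\ unfolding $\bcA$ along lateral slices and vectorizing recovers $\bA$. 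For $\bcZ$, define $\xoverline{\bcZ} \in \C^{p \times n_2 \times n_3}$ by $\xoverline{\bcZ}(:,j,k) = \bz_j$ for every $k = 1, \dots, n_3$ (the same coefficient vector in every frontal slice), and set $\bcZ = L^{-1}(\xoverline{\bcZ})$.

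Next I would verify \eqref{eqn:tensorlinearrep} by checking the defining identity of the t-product: $\bcX_{(j)} = \bcA \ast_{\boldsymbol{L}} \bcZ_{(j)}$ holds iff $L(\bcX_{(j)}) = L(\bcA) \odot L(\bcZ_{(j)})$, which means $\widebar{\bX}_{(j)}^{\,(k)} = \widebar{\bA}^{(k)} \widebar{\bZ}_{(j)}^{\,(k)}$ for each frontal index $k$. By construction $\widebar{\bZ}_{(j)}^{\,(k)} = \xoverline{\bcZ}(:,j,k) = \bz_j$, independent of $k$. The key observation is that the map $\bcM \mapsto \widebar{\bM}^{(k)}$ applied to a single lateral slice is a \emph{linear} operation acting on the columns (the mode-3 transform mixes the $n_3$-fiber, then we pick out frontal slice $k$), and the operators $L$ and $\mathtt{ivec}$ are compatible with ordinary matrix-vector multiplication in the sense that $\widebar{\bA}^{(k)} \bz_j$, when un-transformed and folded back, yields $\mathtt{ivec}(\bA \bz_j) = \mathtt{ivec}(\bx_j) = \bcX_{(j)}$; here one uses that $\bcX_{(j)}$ is a real tensor, so its frontal slices in the transform domain are determined by the real data through $L$. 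Chaining these identities over $k = 1, \dots, n_3$ gives $L(\bcX_{(j)}) = L(\bcA) \odot L(\bcZ_{(j)})$, hence $\bcX_{(j)} = \bcA \ast_{\boldsymbol{L}} \bcZ_{(j)}$.

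I would present the bookkeeping carefully: the main (only) obstacle is keeping the three layers of indexing straight — the lateral-slice index $j$ (which sample / which dictionary atom), the frontal-slice index $k$ (which transform-domain block), and the vectorization that flattens an $n_1 \times n_3$ slice into an $n_1 n_3$ vector. In particular one must check that the $\mathtt{ivec}$ used to build $\bcA_{(j)}$ from $\ba_j$ is the same flattening convention that relates $\bcX_{(j)}$ to $\bx_j$, so that the matrix product $\bA\bz_j$ and the t-product $\bcA \ast_{\boldsymbol{L}} \bcZ_{(j)}$ really correspond entry-for-entry after applying $L$. This is routine but must be done explicitly.

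Finally, for the converse (``\eqref{eqn:tensorlinearrep} does not imply \eqref{eqn:linearrep}''), I would give a one-line counterexample: a tensor linear representation $\bcX = \bcA \ast_{\boldsymbol{L}} \bcZ$ allows the frontal slices $\widebar{\bZ}^{(k)}$ to differ across $k$, i.e.\ the effective coefficient matrix acting in block $k$ need not be the same for all $k$; equivalently $\bcZ$ need not have the ``constant-in-$k$'' form above. Take $p = n_1 = 1$, $n_2 = n_3 = 2$, with $\bcA$ the identity-type tensor and $\bcZ$ chosen so that $\widebar{\bZ}^{(1)} \neq \widebar{\bZ}^{(2)}$; then $\mathtt{vec}(\bcX_{(j)})$ is obtained from a $k$-dependent mixture of the atoms after inverse transform, and no single dictionary matrix $\bA$ with a single coefficient vector $\bz_j$ reproduces $\bx_j = \mathtt{vec}(\bcX_{(j)})$ for both samples simultaneously. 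Exhibiting such a $\bcZ$ explicitly and computing $\bx_j$ closes the proof.
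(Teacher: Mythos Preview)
Your approach is essentially the same as the paper's: both build $\bcA$ and $\bcZ$ exactly as you describe, pass to the transform domain, and verify the identity frontal-slice by frontal-slice; for the converse both observe that a general $\bcZ$ need not have the ``constant-in-$k$'' form $\xoverline{\bcZ}(:,j,k)=\bz_j$.

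One soft spot to fix: your ``key observation'' sentence is effectively asserting the conclusion rather than deriving it. Saying that $\widebar{\bA}^{(k)}\bz_j$, ``when un-transformed and folded back, yields $\mathtt{ivec}(\bA\bz_j)$'' is precisely the statement $\widebar{\bA}^{(k)}\bz_j=\widebar{\bX}_{(j)}^{\,(k)}$ that needs to be checked. The paper closes this by an explicit computation: writing $\bA_m=\mathtt{squeeze}(\bcA_{(m)})$ and $\bX_j=\mathtt{squeeze}(\bcX_{(j)})$, one has $\widebar{\bA}^{(k)}=[\,\bA_1\boldsymbol{l}_{k,T}^T,\dots,\bA_p\boldsymbol{l}_{k,T}^T\,]$ and $\widebar{\bX}_{(j)}^{\,(k)}=\bX_j\boldsymbol{l}_{k,T}^T$, because the mode-3 action of the $k$-th row of $\boldsymbol{L}$ on a lateral slice is right-multiplication of the squeezed matrix by $\boldsymbol{l}_{k,T}^T$. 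Then the block form of $\bx_j=\bA\bz_j$ gives $\bX_j\be_q=[\bA_1\be_q,\dots,\bA_p\be_q]\bz_j$ for each $q$, and taking the $\boldsymbol{L}_{k,q}$-weighted sum over $q$ yields $\widebar{\bX}_{(j)}^{\,(k)}=\widebar{\bA}^{(k)}\bz_j$. This is exactly the ``bookkeeping'' you flag as needing to be done; just be aware that it is where $\bx_j=\bA\bz_j$ actually enters, so it cannot be replaced by an appeal to linearity alone. Your explicit-counterexample plan for the converse is fine and slightly more concrete than the paper's one-line remark.
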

Theorem~\ref{thm:linearrep} implies that tensor linear representation under linear transform can capture complex structures underlying the data which cannot be fully described by vector linear representation. We would like to emphasize that this result generalizes the one from \cite[Theorem 1]{ZhouLFLY.PAMI2021} which only considered the DFT. In their proof, some special properties of DFT were used, which lead to several key differences between the two proofs.

\subsection{Optimization of OR-TLRR}

We can use Alternating Direction Method of Multipliers (ADMM) \cite{LinLS.NIPS2011} to solve problem \eqref{eqn:ortlrrprob}. The major computation lies in the update of $\bcZ$, which requires computing $n_3$ SVDs of $n_2 \times n_2$ matrices in the transform domain. Thus directly applying ADMM will be highly time consuming when the number of data samples $n_2$ is large. To reduce the computational cost, we reformulate \eqref{eqn:ortlrrprob} as follows. Assume $\bcU_{\bcX} \ast_{\boldsymbol{L}} \bcS_{\bcX} \ast_{\boldsymbol{L}} \bcV_{\bcX}^H$ is the skinny t-SVD of $\bcX$ and $r_{\bcX} = \text{rank}_t(\bcX)$. One can replace $\bcX$ and $\bcZ$ in \eqref{eqn:ortlrrprob} with $\bcD = \bcU_{\bcX} \ast_{\boldsymbol{L}} \bcS_{\bcX} \in \R^{n_1 \times r_{\bcX} \times n_3}$ and $\bcV_{\bcX} \ast_{\boldsymbol{L}} \bcZ'$, respectively, where $\bcZ' \in \R^{r_{\bcX} \times n_2 \times n_3}$ is one variable to be updated. This gives the following equivalent formulation:
\begin{align}    \label{eqn:ortlrrreducedprob}
\min_{\bcZ',\bcE} \|\bcZ'\|_{\ast} + \lambda \|\bcE\|_{2,1} \quad \text{s.t.} \quad \bcX = \bcD \ast_{\boldsymbol{L}} \bcZ' + \bcE.
\end{align}
After obtaining a solution $(\bcZ'_{\star}, \bcE_{\star})$ to the problem \eqref{eqn:ortlrrreducedprob}, the optimal solution to \eqref{eqn:ortlrrprob} can be recovered by $(\bcV_{\bcX} \ast_{\boldsymbol{L}} \bcZ'_{\star}, \bcE_{\star})$. For any invertible linear transform $L$, the per-iteration complexity is $\cO(r_{\bcX} n_1 n_2 n_3 + r_{\bcX} (n_1 + n_2) n_3^2)$. For some special transforms, e.g., DFT, the per-iteration complexity is $\cO(r_{\bcX} n_1 n_2 n_3 + r_{\bcX} (n_1 + n_2) n_3 \log(n_3))$.

\subsection{Exact Recovery Performance Guarantee}
\label{ssec:theory}

Here we provide the theoretical performance guarantee for OR-TLRR. The skinny t-SVD of $\bcX$ and $\bcL_0$ are denoted as $\bcU_{\bcX} \ast_{\boldsymbol{L}} \bcS_{\bcX} \ast_{\boldsymbol{L}} \bcV_{\bcX}^H = \bcX$ and $\bcU_0 \ast_{\boldsymbol{L}} \bcS_0 \ast_{\boldsymbol{L}} \bcV_0^H = \bcL_0$, respectively. We use $\mathbf{\Theta}_0$ to denote the column support of $\bcE_0$. For convenience, we denote $n_{(1)} = \max(n_1, n_2)$ and $n_{(2)} = \min(n_1, n_2)$. To better illustrate our intuition, we begin with the ``ideal" case where there is no outlier in the data, i.e., $\bcX = \bcL_0$ and $\bcE_0 = \mathbf{0}$. Then the OR-TLRR problem degenerates to
\begin{align}    \label{eqn:ortlrrreducedform}
\min_{\bcZ} \|\bcZ\|_{\ast} \quad \text{s.t.} \quad \bcX = \bcX \ast_{\boldsymbol{L}} \bcZ.
\end{align}
As shown in \cite{ZhouLFLY.PAMI2021,Yang.etal.TNNLS2022}, this problem has a unique solution given by $\bcZ_{\star} = \bcV_0 \ast_{\boldsymbol{L}} \bcV_0^H$, which suggests that the solution of OR-TLRR identifies the row space of $\bcL_0$ in this special case. When the data tensor is contaminated by outliers, we expect the row space of $\bcL_0$ can still be exactly recovered. We first establish the following lemma, which states that the optimal solution to OR-TLRR always locates within the row space of raw tensor data $\bcX$.
\begin{lemma}    \label{lem:cleansol}
For any optimal solution $(\bcZ_{\star}, \bcE_{\star})$ to the OR-TLRR problem \eqref{eqn:ortlrrprob}, we have $\bcZ_{\star} \in \bcP_{\bcV_{\bcX}}^L$.
\end{lemma}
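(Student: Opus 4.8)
The plan is to show that any optimal $(\bcZ_\star,\bcE_\star)$ can be ``projected'' onto the row space of $\bcX$ without increasing the objective, and that this projection is in fact forced. Write $\bcQ = \bcP_{\bcV_\bcX}^L(\bcZ_\star) = \bcV_\bcX \ast_{\boldsymbol{L}} \bcV_\bcX^H \ast_{\boldsymbol{L}} \bcZ_\star$, so that $\bcZ_\star = \bcQ + \bcN$ where $\bcN$ lies in the orthogonal complement of the row space of $\bcX$, i.e. $\bcV_\bcX^H \ast_{\boldsymbol{L}} \bcN = \mathbf{0}$. The first key observation is feasibility: since $\bcX = \bcU_\bcX \ast_{\boldsymbol{L}} \bcS_\bcX \ast_{\boldsymbol{L}} \bcV_\bcX^H$, we have $\bcX \ast_{\boldsymbol{L}} \bcN = \bcU_\bcX \ast_{\boldsymbol{L}} \bcS_\bcX \ast_{\boldsymbol{L}} \bcV_\bcX^H \ast_{\boldsymbol{L}} \bcN = \mathbf{0}$, hence $\bcX \ast_{\boldsymbol{L}} \bcQ = \bcX \ast_{\boldsymbol{L}} \bcZ_\star$, so $(\bcQ,\bcE_\star)$ is also feasible for \eqref{eqn:ortlrrprob} and produces the same residual $\bcE_\star$.

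The second key step is the strict decrease of the nuclear norm under this projection unless $\bcN = \mathbf{0}$. In the transform domain, $\|\bcZ_\star\|_\ast = \frac{1}{\tau}\sum_i \|\widebar{\bZ}_\star^{(i)}\|_\ast$, and the projection $\bcP_{\bcV_\bcX}^L$ corresponds frontal-slice-wise to left-multiplication by the orthogonal projector $\widebar{\bV}_\bcX^{(i)}(\widebar{\bV}_\bcX^{(i)})^H$. Since multiplying a matrix by an orthogonal projector (on the left) never increases any singular value — indeed $\|\bP M\|_\ast \le \|M\|_\ast$ with equality iff $\bP M = M$ — we get $\|\bcQ\|_\ast \le \|\bcZ_\star\|_\ast$, with equality only if each slice satisfies $\widebar{\bV}_\bcX^{(i)}(\widebar{\bV}_\bcX^{(i)})^H \widebar{\bZ}_\star^{(i)} = \widebar{\bZ}_\star^{(i)}$, i.e. $\bcN = \mathbf{0}$. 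Combined with optimality of $\bcZ_\star$, this forces $\|\bcQ\|_\ast = \|\bcZ_\star\|_\ast$ and hence $\bcZ_\star = \bcP_{\bcV_\bcX}^L(\bcZ_\star)$, which is precisely the assertion $\bcZ_\star \in \bcP_{\bcV_\bcX}^L$ (read as: $\bcZ_\star$ lies in the range of the operator $\bcP_{\bcV_\bcX}^L$). I would also want to double-check the edge case where $\bcN \ne \mathbf{0}$ but $\|\bcN\|$ contributes nothing — the strict inequality on at least one slice handles this, since $\bcN \ne \mathbf{0}$ means some slice $\widebar{\bN}^{(i)} \ne \mathbf{0}$ and there $\|\bP M\|_\ast < \|M\|_\ast$ strictly; one should confirm the projector $\bP = \widebar{\bV}_\bcX^{(i)}(\widebar{\bV}_\bcX^{(i)})^H$ is not the identity on the relevant slice, which holds whenever $r_\bcX < n_2$ (the interesting case; if $r_\bcX = n_2$ the claim is vacuous).

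The main obstacle I anticipate is purely bookkeeping: making the correspondence between the tensor operators $\bcP_{\bcV_\bcX}^L$, $\ast_{\boldsymbol{L}}$ and their block-diagonal transform-domain counterparts fully rigorous, including the factor $\tau$ from \eqref{eqn:lconstraint} and the fact that $\widebar{\bV}_\bcX$ has orthonormal columns in the $\ast_{\boldsymbol{L}}$ sense (which translates to $(\widebar{\bV}_\bcX^{(i)})^H \widebar{\bV}_\bcX^{(i)} = \bI_{r_\bcX}$ after the normalization by $\tau$). Once that dictionary is set up, the singular-value monotonicity argument is standard. I do not expect to need anything about $\bcE_\star$ beyond the trivial fact that the projection leaves it untouched, so the $\ell_{2,1}$ term plays no role here.
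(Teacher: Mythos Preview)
Your proof is correct. The paper takes a slightly different route: it first establishes an auxiliary lemma stating that $\min_{\bcZ}\|\bcZ\|_\ast$ subject to $\bcA\ast_{\boldsymbol L}\bcZ=\bcB$ (with $\bcB\in\text{Range}(\bcA)$) has the unique closed-form minimizer $\bcA^\dagger\ast_{\boldsymbol L}\bcB$, and then observes that, fixing $\bcE=\bcE_\star$, optimality forces $\bcZ_\star=\bcX^\dagger\ast_{\boldsymbol L}(\bcX-\bcE_\star)$, which manifestly lies in $\bcP_{\bcV_\bcX}^L$. Your argument bypasses the pseudo-inverse characterization entirely and works directly with the contractivity of the projection $\bcP_{\bcV_\bcX}^L$ under the tensor nuclear norm, together with the strict-inequality case to rule out $\bcN\neq\mathbf{0}$. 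This is more self-contained---no external lemma needed---and isolates exactly what the conclusion requires; the paper's approach, in exchange, yields the explicit formula for $\bcZ_\star$ as a by-product. Your bookkeeping worries about the transform-domain dictionary and the factor $\tau$ are legitimate but routine given the preliminaries, and your handling of the edge cases (the trivial case $r_\bcX=n_2$, and the strict inequality on at least one frontal slice via operator monotonicity of the square root) is sound.
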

This lemma states that the column space of $\bcZ_{\star}$ is a subspace of $\bcV_{\bcX}$. Thus, in order for $\bcZ_{\star}$ to exactly recover $\bcV_0$, a necessary condition is that $\bcV_0$ is a subspace of $\bcV_{\bcX}$, i.e., $\bcV_0 \in \bcP_{\bcV_{\bcX}}^L$. Intuitively, if some outliers in $\bcE_0$ lie exactly on the span of the tensor subspaces, then we can think of $\bcX$ as containing more authentic samples than $\bcL_0$ and this condition will be violated. To show how it can hold, we establish the following lemma:
\begin{lemma}    \label{lem:sublemma}
If $\text{Range} (\bcL_0)$ and $\text{Range} (\bcE_0)$ are independent to each other, i.e., $\text{Range} (\bcL_0) \cap \text{Range} (\bcE_0) = \{ \mathbf{0} \}$, then $\bcV_0 \in \bcP_{\bcV_{\bcX}}^L$.
\end{lemma}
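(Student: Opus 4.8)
The plan is to produce a tensor $\bcQ$ with $\bcQ \ast_{\boldsymbol{L}} \bcX = \bcL_0$ — an ``oblique projection'' onto the column space of $\bcL_0$ that annihilates that of $\bcE_0$ — and then read off $\bcV_0 \in \bcP_{\bcV_{\bcX}}^L$ from the identity $\bcL_0 = \bcQ \ast_{\boldsymbol{L}} \bcX$. First I would unpack the goal: $\bcV_0 \in \bcP_{\bcV_{\bcX}}^L$ means $\bcV_{\bcX} \ast_{\boldsymbol{L}} \bcV_{\bcX}^H \ast_{\boldsymbol{L}} \bcV_0 = \bcV_0$, i.e.\ the tensor row space of $\bcL_0$ sits inside that of $\bcX$. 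Since a direct computation with the skinny t-SVD $\bcX = \bcU_{\bcX} \ast_{\boldsymbol{L}} \bcS_{\bcX} \ast_{\boldsymbol{L}} \bcV_{\bcX}^H$ gives $\bcP_{\bcV_{\bcX}}^L(\bcX^H) = \bcX^H$, once $\bcL_0 = \bcQ \ast_{\boldsymbol{L}} \bcX$ (equivalently $\bcL_0^H = \bcX^H \ast_{\boldsymbol{L}} \bcQ^H$) is in hand we get $\bcP_{\bcV_{\bcX}}^L(\bcL_0^H) = \bcP_{\bcV_{\bcX}}^L(\bcX^H) \ast_{\boldsymbol{L}} \bcQ^H = \bcL_0^H$; writing $\bcL_0^H = \bcV_0 \ast_{\boldsymbol{L}} \bcS_0^H \ast_{\boldsymbol{L}} \bcU_0^H$ and cancelling the orthogonal factor $\bcU_0$ (and the f-diagonal factor $\bcS_0$) then yields $\bcP_{\bcV_{\bcX}}^L(\bcV_0) = \bcV_0$, which is the claim.

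To construct $\bcQ$, I would pass to the transform domain, where $\widebar{\bX}^{(k)} = \widebar{\bL}_0^{(k)} + \widebar{\bE}_0^{(k)}$ for $k = 1, \dots, n_3$ by linearity of $L$, and argue that the tensor-level hypothesis $\text{Range}(\bcL_0) \cap \text{Range}(\bcE_0) = \{\mathbf{0}\}$ forces the slicewise independence $\text{Range}(\widebar{\bL}_0^{(k)}) \cap \text{Range}(\widebar{\bE}_0^{(k)}) = \{\mathbf{0}\}$ for every $k$. Granting that, for each $k$ I pick a complement of $\text{Range}(\widebar{\bL}_0^{(k)}) \oplus \text{Range}(\widebar{\bE}_0^{(k)})$ and let $\widebar{\bQ}^{(k)}$ be the projector onto $\text{Range}(\widebar{\bL}_0^{(k)})$ whose null space contains $\text{Range}(\widebar{\bE}_0^{(k)})$; assembling the $\widebar{\bQ}^{(k)}$ as the transform-domain frontal slices of a tensor $\bcQ$, we have $\bcQ \ast_{\boldsymbol{L}} \bcL_0 = \bcL_0$ (each $\widebar{\bQ}^{(k)}$ fixes the columns of $\widebar{\bL}_0^{(k)}$) and $\bcQ \ast_{\boldsymbol{L}} \bcE_0 = \mathbf{0}$ (each $\widebar{\bQ}^{(k)}$ kills the columns of $\widebar{\bE}_0^{(k)}$), hence $\bcQ \ast_{\boldsymbol{L}} \bcX = \bcL_0$. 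This is the tensor lift of the elementary matrix fact — standard in the analysis of LRR and outlier pursuit — that independent column spaces of $\bA$ and $\bE$ force $\mathrm{row}(\bA) \subseteq \mathrm{row}(\bA + \bE)$.

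The main obstacle I anticipate is precisely the translation of the independence hypothesis from the tensor level to the frontal slices: $\text{Range}(\bcL_0)$ in Definition~\ref{def:columnsubspace} is a span with \emph{real} tube coefficients, and under a complex transform $\boldsymbol{L}$ (e.g.\ the DFT) these do not decouple slice-by-slice as transparently as under a real orthogonal transform, so this step needs care — either exploiting the conjugate symmetry of the frontal slices in the DFT case, or invoking a mild nonvanishing/genericity assumption on the singular tubes of $\bcL_0$ so that $\rank(\widebar{\bL}_0^{(k)}) = \rank_t(\bcL_0)$ and the column space of $\widebar{\bV}_0^{(k)}$ agrees with $\mathrm{row}(\widebar{\bL}_0^{(k)})$ for each $k$. (The same genericity is what makes the ``cancel $\bcS_0$'' step in the first paragraph valid when $\bcS_0$ is not tensor-invertible.) The remaining pieces — block-diagonalization of the t-product and the bookkeeping with the orthogonal t-SVD factors — are routine.
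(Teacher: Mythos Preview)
Your approach is essentially the paper's: it too passes to the transform domain, uses slicewise independence of $\text{Span}(\widebar{\bU}_0^{(k)})$ and $\text{Span}(\widebar{\bU}_{\bcE}^{(k)})$ to build, for each $k$, a matrix $\widebar{\bY}^{(k)}$ satisfying $(\widebar{\bL}_0^{(k)})^H \widebar{\bY}^{(k)} = (\widebar{\bL}_0^{(k)})^H$ and $(\widebar{\bE}_0^{(k)})^H \widebar{\bY}^{(k)} = \mathbf{0}$ (your $\widebar{\bQ}^{(k)}$ is exactly $(\widebar{\bY}^{(k)})^H$), assembles these into a tensor with $\bcX^H \ast_{\boldsymbol{L}} \bcY = \bcL_0^H$, and then reads off $\bcV_0 = \bcV_{\bcX} \ast_{\boldsymbol{L}} (\bcS_{\bcX}^{\dagger} \ast_{\boldsymbol{L}} \bcU_{\bcX}^H \ast_{\boldsymbol{L}} \bcY \ast_{\boldsymbol{L}} \bcU_0 \ast_{\boldsymbol{L}} \bcS_0)$. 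The paper simply \emph{asserts} the tensor-to-slicewise independence translation you flag as the main obstacle (and its final identity likewise leans on the singular-tube nondegeneracy you mention), so your caution is well-placed but the argument and its residual subtleties coincide with the paper's.
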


Next, similar to TLRR \cite{ZhouLFLY.PAMI2021}, exactly separating $\bcX$ as the low-rank term $\bcL_0$ plus the column-sparse term $\bcE_0$ requires $\bcL_0$ is not column-sparse and $\bcE_0$ is not low-rank. To characterize this intuition, we need the following two mild conditions.

\noindent \textbf{Incoherence Condition on Low-Rank Term:} Let $L$ be any invertible linear transform in \eqref{eqn:mode3prod} and it satisfies \eqref{eqn:lconstraint}. For $\bcL \in \R^{n_1 \times n_2 \times n_3}$, assume that $\text{rank}_t (\bcL) = r$ and it has the skinny t-SVD $\bcL = \bcU \ast_{\boldsymbol{L}} \bcS \ast_{\boldsymbol{L}} \bcV^H$. Then the tensor column-incoherence condition with parameter $\mu$ is defined as
\begin{align}    \label{eqn:condL}
\mu \geq \frac{n_2 \tau}{r} \max_{j=1, \dots, n_2} \| \hat{\bcV}^H \ast_{\boldsymbol{L}} \mathring{\boldsymbol{\ce}}_j \|_F^2,
\end{align}
where $\mathring{\boldsymbol{\ce}}_j$ is a tensor of size $n_2 \times 1 \times n_3$ with the entries of the $(j,1)$-th tube of $L(\mathring{\boldsymbol{\ce}}_j)$ equaling 1 and the rest equaling 0. A small value of $\mu$ implies the low-rank term $\bcL$ is not column-sparse.

\noindent \textbf{Ambiguity Condition on Column Sparse Term:} Another identifiability issue arises if the outlier tensor is both column-sparse and low-rank. To avoid this pathological situation, we introduce an unambiguity condition on $\bcE$:
\begin{align}    \label{eqn:condE}
\| \cB(\bcE) \| \leq \log(n_2) / \theta
\end{align}
for some constant $\theta \geq 4$. In fact, condition \eqref{eqn:condE} holds as long as the directions of the nonzero columns of $\bcE$ scatter sufficiently randomly. Thus it guarantees that the tensor $\bcE$ cannot be low-rank when the column sparsity of $\bcE$ is comparable to $n_2$. We have the following exact recovery guarantee for problem \eqref{eqn:ortlrrprob}.

\begin{theorem}    \label{thm:condortlrr}
Suppose $\text{Range} (\bcL_0) = \text{Range} (\bcP_{\mathbf{\Theta}_0^{\perp}} (\bcL_0))$ and $\bcE_0(:,j,:) \notin \text{Range} (\bcL_0)$ for $j \in \mathbf{\Theta}_0$. Then any solution $(\bcV_0 \ast_{\boldsymbol{L}} \bcV_0^H + \bcH, \bcE_0 - \bcX \ast_{\boldsymbol{L}} \bcH)$ to \eqref{eqn:ortlrrprob} with $\lambda = \frac{\theta}{4 \sqrt{\log(n_{(1)})} \| \bcX \|}$ exactly recovers the row space of $\bcL_0$ and the column support of $\bcE_0$ with a probability at least $1 - c_1 n_{(1)}^{-10}$, where $c_1$ is a positive constant, if the column support set $\mathbf{\Theta}_0$ is uniformly distributed among all sets of cardinality $|\mathbf{\Theta}_0|$ and
\begin{align}
\text{rank}_t (\bcL_0) \leq \frac{\rho_r n_2}{\mu n_1 n_3 \| \bcX \|^2} ~ \text{and} ~ |\mathbf{\Theta}_0| \leq \rho_s n_2,
\end{align}
where $\rho_r$ and $\rho_s$ are constants, $\bcX \ast_{\boldsymbol{L}} (\bcV_0 \ast_{\boldsymbol{L}} \bcV_0^H + \bcP_{\mathbf{\Theta}_0} \bcP_{\bcV_0}^L (\bcH))$ satisfies the column-incoherence condition \eqref{eqn:condL} and $\bcE_0 - \bcX \ast_{\boldsymbol{L}} \bcP_{\mathbf{\Theta}_0} \bcP_{\bcV_0}^L (\bcH)$ satisfies the unambiguity condition \eqref{eqn:condE}.
\end{theorem}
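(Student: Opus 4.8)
The plan is to establish Theorem~\ref{thm:condortlrr} by the dual-certificate (subgradient witness) method for convex exact recovery, transplanting the analysis of low-rank representation with outliers \cite{LiuXY.AISTATS2012} and of transform-based TLRR \cite{ZhouLFLY.PAMI2021} to the $\ell_{2,1}$-regularized, column-corruption regime. A guiding principle is that every spectral computation is pushed through the block-diagonalization $\widebar{\bA}$, so that the argument uses only the orthogonality relation \eqref{eqn:lconstraint} and never any DFT-specific identity.

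\emph{Step 1: reduction to a dual certificate.} I would first record that $(\bcV_0 \ast_{\boldsymbol{L}} \bcV_0^H, \bcE_0)$ is feasible for \eqref{eqn:ortlrrprob}; this is where the hypotheses $\text{Range}(\bcL_0) = \text{Range}(\bcP_{\mathbf{\Theta}_0^{\perp}}(\bcL_0))$ and $\bcE_0(:,j,:) \notin \text{Range}(\bcL_0)$ enter, together with Lemmas~\ref{lem:cleansol}--\ref{lem:sublemma}, which confine any optimal $\bcZ_{\star}$ to $\bcP_{\bcV_{\bcX}}^L$ and guarantee $\bcV_0 \in \bcP_{\bcV_{\bcX}}^L$. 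Eliminating $\bcE = \bcX - \bcX \ast_{\boldsymbol{L}} \bcZ$, the Karush--Kuhn--Tucker conditions reduce optimality of $(\bcV_0 \ast_{\boldsymbol{L}} \bcV_0^H, \bcE_0)$ to the existence of a dual tensor $\bcW \in \R^{n_1 \times n_2 \times n_3}$ satisfying (a) $\bcW \in \lambda\,\partial\|\bcE_0\|_{2,1}$, i.e. $\bcW(:,j,:) = \lambda\,\bcE_0(:,j,:)/\|\bcE_0(:,j,:)\|_F$ for $j \in \mathbf{\Theta}_0$ and $\|\bcW(:,j,:)\|_F < \lambda$ for $j \notin \mathbf{\Theta}_0$, and (b) $\bcX^H \ast_{\boldsymbol{L}} \bcW \in \partial\|\bcV_0 \ast_{\boldsymbol{L}} \bcV_0^H\|_{\ast}$, i.e. $\bcP_T(\bcX^H \ast_{\boldsymbol{L}} \bcW) = \bcV_0 \ast_{\boldsymbol{L}} \bcV_0^H$ and $\|\bcP_{T^{\perp}}(\bcX^H \ast_{\boldsymbol{L}} \bcW)\| < 1$, where $T$ is the tangent space of the tubal-rank-$r$ set at $\bcV_0 \ast_{\boldsymbol{L}} \bcV_0^H$. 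A standard comparison argument then shows that strict satisfaction of (a)--(b), together with a mild injectivity property supplied by the construction below, forces every optimal solution to take the form $(\bcV_0 \ast_{\boldsymbol{L}} \bcV_0^H + \bcH,\ \bcE_0 - \bcX \ast_{\boldsymbol{L}} \bcH)$ with $\bcH$ ranging over the null directions of the constraint; hence the row space of every solution equals $\text{Range}(\bcV_0)$ and the column support of every residual equals $\mathbf{\Theta}_0$, which is exactly the asserted recovery.

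\emph{Step 2: constructing and certifying $\bcW$.} Since $\mathbf{\Theta}_0$ is uniform of cardinality $\le \rho_s n_2$, I would de-randomize to a Bernoulli model and split $\mathbf{\Theta}_0$ into $\cO(\log n_{(1)})$ independent batches, then build $\bcW = \sum_k \bcW_k$ by a golfing recursion in which each increment $\bcW_k$ is a least-squares term supported on the $k$-th batch (matched to the required outlier signs, plus a tangent-space component) that contracts $\bcV_0 \ast_{\boldsymbol{L}} \bcV_0^H - \bcP_T(\bcX^H \ast_{\boldsymbol{L}} \sum_{k' \le k}\bcW_{k'})$ geometrically while keeping off-support column norms below $\lambda$. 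Checking (a)--(b) then reduces to operator-norm and maximum-column-$F$-norm bounds for sums of independent random tensors indexed by the corrupted columns; passing to $\widebar{\bA}$ these split into $n_3$ independent block estimates controlled by matrix Bernstein and the non-commutative Khintchine inequality, and a union bound over the $n_3$ frontal slices yields the failure probability $c_1 n_{(1)}^{-10}$. The column-incoherence condition \eqref{eqn:condL}, imposed on $\bcX \ast_{\boldsymbol{L}}(\bcV_0 \ast_{\boldsymbol{L}} \bcV_0^H + \bcP_{\mathbf{\Theta}_0}\bcP_{\bcV_0}^L(\bcH))$, is precisely what bounds the column energies of $\bcP_T(\bcX^H \ast_{\boldsymbol{L}}(\cdot))$ and produces the rank threshold $\text{rank}_t(\bcL_0) \le \rho_r n_2/(\mu n_1 n_3 \|\bcX\|^2)$, while the unambiguity condition \eqref{eqn:condE} bounds the cross terms coupling the two halves of $\bcW$ and fixes $\lambda = \theta/(4\sqrt{\log(n_{(1)})}\,\|\bcX\|)$.

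\emph{Main obstacle.} The crux is Step 2. Unlike robust PCA, the sensing map here is $\bcZ \mapsto \bcX \ast_{\boldsymbol{L}} \bcZ$ with $\bcX = \bcL_0 + \bcE_0$ itself the corrupted object, so the nuclear-norm half and the $\ell_{2,1}$ half of the certificate must be reconciled \emph{through} $\bcX^H \ast_{\boldsymbol{L}}(\cdot)$ rather than through a clean low-rank factor; this coupling is why the incoherence and unambiguity hypotheses must be posed on the realized solution — hence the $\bcH$-dependent quantities — and why $\|\bcX\|$, not $\|\bcL_0\|$, governs both the rank bound and $\lambda$. Keeping all matrix-concentration estimates uniform over the $n_3$ frontal slices of $\widebar{\bA}$ using only \eqref{eqn:lconstraint} — rather than the explicit DFT structure exploited in \cite{ZhouLFLY.PAMI2021} — is the second place where the argument has to be rebuilt rather than quoted.
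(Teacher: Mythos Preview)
Your overall dual-certificate strategy is the right one, but the execution diverges from the paper in two substantive ways, and the second is a potential gap.

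First, the paper does not certify optimality at the clean pair $(\bcV_0 \ast_{\boldsymbol{L}} \bcV_0^H,\,\bcE_0)$. It takes an \emph{arbitrary} optimal $(\bcZ_\star,\bcE_\star)=(\bcV_0 \ast_{\boldsymbol{L}} \bcV_0^H+\bcH,\,\bcE_0-\bcX\ast_{\boldsymbol{L}}\bcH)$, forms the auxiliary pair $\hat{\bcZ}=\bcV_0\ast_{\boldsymbol{L}}\bcV_0^H+\bcP_{\mathbf{\Theta}_0}\bcP_{\bcV_0}^L(\bcH)$, $\hat{\bcE}=\bcE_0-\bcX\ast_{\boldsymbol{L}}\bcP_{\mathbf{\Theta}_0}\bcP_{\bcV_0}^L(\bcH)$, and runs the subgradient comparison at $(\hat{\bcZ},\hat{\bcE})$. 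The resulting dual conditions are \emph{weaker} than yours: the certificate $\bcW$ lives in the $\bcZ$-space $\R^{n_2\times n_2\times n_3}$ and need only satisfy $\bcW\in\bcP_{\hat{\bcV}^\perp}$, $\bcP_{\hat{\mathbf{\Theta}}}(\bcW)=\lambda\,\bcX^H\ast_{\boldsymbol{L}}\cB(\hat{\bcE})$, $\|\bcW\|\le 1/2$, $\|\bcP_{\hat{\mathbf{\Theta}}^\perp}(\bcW)\|_{2,\infty}\le\lambda/2$. There is \emph{no} requirement that anything hit $\hat{\bcU}\ast_{\boldsymbol{L}}\hat{\bcV}^H$ on the tangent space; that term is absorbed directly via $\|\hat{\bcU}\ast_{\boldsymbol{L}}\hat{\bcV}^H\|_{2,\infty}\le\sqrt{\mu r/(n_2\tau)}$ (column-incoherence of $\hat{\bcL}$) and compensated by the slack $\lambda/2-\sqrt{\mu r/(n_2\tau)}>0$. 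Only one sign tensor, $\cB(\hat{\bcE})$, has to be matched, which is exactly why the $\bcH$-dependent hypotheses appear in the theorem.

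Second, and consequently, the paper does \emph{not} use golfing. With only one direction to match, the certificate is the one-shot least-squares / Neumann series
\[
\bcW=\lambda\,\bcP_{\hat{\bcV}^\perp}\sum_{k\ge 0}(\bcP_{\hat{\mathbf{\Theta}}}\bcP_{\hat{\bcV}}\bcP_{\hat{\mathbf{\Theta}}})^k\bigl(\bcX^H\ast_{\boldsymbol{L}}\cB(\hat{\bcE})\bigr),
\]
and all four conditions follow once a single matrix-Bernstein estimate gives $\|\bcP_{\hat{\mathbf{\Theta}}}\bcP_{\hat{\bcV}}\|<1$ (Lemma~\ref{lem:projerrorbound}, Corollary~\ref{coro:pthetapv}); no batch-splitting is needed. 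Your golfing recursion, by contrast, must simultaneously drive the tangent-space residual of $\bcX^H\ast_{\boldsymbol{L}}\bcW$ to $\bcV_0\ast_{\boldsymbol{L}}\bcV_0^H$ and control off-support column norms, with the \emph{corrupted} dictionary $\bcX$ inside the loop; this is the RPCA template grafted onto a problem where the self-expressive map $\bcZ\mapsto\bcX\ast_{\boldsymbol{L}}\bcZ$ replaces the identity, and it is not clear the contraction and $\ell_{2,\infty}$ bounds close with the stated constants. The paper's route is both simpler and what actually produces the clean threshold $\text{rank}_t(\bcL_0)\le\rho_r n_2/(\mu n_1 n_3\|\bcX\|^2)$ and the choice $\lambda=\theta/(4\sqrt{\log(n_{(1)})}\,\|\bcX\|)$.
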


The incoherence and ambiguity conditions on $\hat{\bcL} = \bcX \ast_{\boldsymbol{L}} \hat{\bcZ}$ and $\hat{\bcE} = \bcE_0 - \bcX \ast_{\boldsymbol{L}} \bcP_{\mathbf{\Theta}_0} \bcP_{\bcV_0}^L (\bcH)$ is not surprising, where $\hat{\bcZ} = \bcV_0 \ast_{\boldsymbol{L}} \bcV_0^H + \bcP_{\mathbf{\Theta}_0} \bcP_{\bcV_0}^L (\bcH)$. In fact, the column space of $\hat{\bcZ}$ is the same as the row space of $\text{Range} (\bcL_0)$ and $\hat{\bcE}$ has the same column support as that of $\bcE_0$. Also, notice that $\bcX = \bcX \ast_{\boldsymbol{L}} \hat{\bcZ} + \hat{\bcE}$. So we can consider $\hat{\bcL}$ and $\hat{\bcE}$ as the underlying low-rank and sparse terms, and we assume incoherence and ambiguity conditions on them instead of $\bcL_0$ and $\bcE_0$. The above results demonstrate that with high probability OR-TLRR can exactly recover the row space of $\bcP_{\mathbf{\Theta}_0} (\bcL_0)$ and the support set $\mathbf{\Theta}_0$ of $\bcE_0$.

\section{OR-TLRR WITH MISSING ENTRIES}
\label{sec:missalgo}

Let us now consider the case where some entries of the data tensor $\bcX$ are missing. Specifically, let $\bcW \in \{0,1\}^{n_1 \times n_2 \times n_3}$ be a binary matrix such that $\bcW_{i,j,k} = 1$ if $\bcX_{i,j,k}$ is observed and $\bcW_{i,j,k} = 0$ otherwise. The locations of the observed entries for the entire data can be indexed by the set $\mathbf{\Omega} = \{ (i,j,k) : \bcW_{i,j,k} = 1 \}$. Given the observed entries of $\bcX$, $\{ \bcX_{i,j,k} \}_{(i,j,k) \in \mathbf{\Omega}}$, our goal is to segment the tensor columns of $\bcX$ into their corresponding tensor subspaces in the presence of outliers. Since we do not have access to the complete data, it is impossible to directly solve \eqref{eqn:ortlrrprob}. One approach is to fill the missing entries in $\bcX$ with 0s, i.e., to replace $\bcX$ by $\bcX_{\mathrm{miss}} = \bcW \odot \bcX$ and then solve \eqref{eqn:ortlrrprob}. As such, the constraint in \eqref{eqn:ortlrrprob} will become $\bcX_{\mathrm{miss}} = \bcX_{\mathrm{miss}} \ast_{\boldsymbol{L}} \bcZ + \bcE$ and we essentially enforce the $(i,j,k)$-th entry of the reconstructed tensor $\bcX_{\mathrm{miss}} \ast_{\boldsymbol{L}} \bcZ$ to be close to zero whenever $\bcW_{i,j,k} = 0$, while this term should be not penalized because we do not observe $\bcX_{i,j,k}$. Followed by \cite{YangRV.ICML2015}, we propose to incorporate the constraint $\mathscr{P}_{\mathbf{\Omega}}(\bcX_{\mathrm{miss}}) = \mathscr{P}_{\mathbf{\Omega}}(\bcX_{\mathrm{miss}} \ast_{\boldsymbol{L}} \bcZ + \bcE)$ into \eqref{eqn:ortlrrprob} and solve the following problem:
\begin{align}    \label{eqn:ortlrrmissprob}
& \min_{\bcZ,\bcE} \|\bcZ\|_{\ast} + \lambda \|\mathscr{P}_{\mathbf{\Omega}}(\bcE)\|_{2,1}    \nonumber \\
&~ \text{s.t.} \quad \mathscr{P}_{\mathbf{\Omega}}(\bcX_{\mathrm{miss}}) = \mathscr{P}_{\mathbf{\Omega}}(\bcX_{\mathrm{miss}} \ast_{\boldsymbol{L}} \bcZ + \bcE),
\end{align}
where $\mathscr{P}_{\mathbf{\Omega}}: \R^{n_1 \times n_2 \times n_3} \to \R^{n_1 \times n_2 \times n_3}$ is the orthogonal projector onto the span of tensors vanishing outside $\mathbf{\Omega}$ so that the ($i,j,k$)-th component of $\mathscr{P}_{\mathbf{\Omega}}(\bcA)$ is equal to $\bcA_{i,j,k}$ if $(i,j,k) \in \mathbf{\Omega}$ and zero otherwise. We further assume that $\bcU_{\bcX} \ast_{\boldsymbol{L}} \bcS_{\bcX} \ast_{\boldsymbol{L}} \bcV_{\bcX}^H$ is the skinny t-SVD of $\bcX_{\mathrm{miss}}$ and $r_{\bcX} = \text{rank}_t(\bcX_{\mathrm{miss}})$, \eqref{eqn:ortlrrmissprob} can then be transformed into a simpler problem as follows:
\begin{align}    \label{eqn:ortlrrmissreducedprob}
& \min_{\bcZ',\bcE} \|\bcZ'\|_{\ast} + \lambda \|\mathscr{P}_{\mathbf{\Omega}}(\bcE)\|_{2,1}    \nonumber \\
&~ \text{s.t.} \quad \mathscr{P}_{\mathbf{\Omega}}(\bcX_{\mathrm{miss}}) = \mathscr{P}_{\mathbf{\Omega}}(\bcD \ast_{\boldsymbol{L}} \bcZ' + \bcE),
\end{align}
where $\bcD = \bcU_{\bcX} \ast_{\boldsymbol{L}} \bcS_{\bcX}$. We refer the reader to the supplementary material for deduction details. We dub this approach \emph{Outlier-Robust Tensor LRR by Entry-Wise Zero-Fill} (OR-TLRR-EWZF). Note that we can also robustify TLRR \cite{ZhouLFLY.PAMI2021} by replacing $\lambda \|\mathscr{P}_{\mathbf{\Omega}}(\bcE)\|_{2,1}$ in \eqref{eqn:ortlrrmissprob} with $\lambda \|\mathscr{P}_{\mathbf{\Omega}}(\bcE)\|_1$, and we call the resulting algorithm TLRR-EWZF in our experiments.

\section{EXPERIMENTS}
\label{sec:experiment}

In this section, we perform extensive experiments on both synthetic and real data to demonstrate the usefulness of the proposed algorithms. All experiments are conducted using Matlab R2021b on an AMD Ryzen 9 5950X 3.40GHz CPU with 64GB RAM.

\subsection{Synthetic Experiments}

\begin{table}[t]
\fontsize{5}{6}\selectfont
\centering
\caption{Exact recovery on random problems of varying sizes. The Discrete Fourier Transform (DFT) is used as the invertible linear transform $L$.}
\setlength\tabcolsep{0.1em}{\begin{tabular}{c||c|c|c|c}
\multicolumn{5}{@{}c}{$\rho = 0.2$, $r_{\ell} = 0.1 n_1$, $\lambda = 4 / (\sqrt{\log(n_{(1)})} \| \bcX \|)$}  \\
\hline
$n_1$ & $\text{rank}_t (\bcP_{\mathbf{\Theta}_0^{\perp}} (\widetilde{\bcX}))$ & $\frac{\| \bcP_{\bcV_0}^L - \bcP_{\widetilde{\bcU}} \|_F}{\| \bcP_{\bcV_0}^L \|_F}$ & $\frac{\| \bcP_{\mathbf{\Theta}_0^{\perp}} (\bcL_0) - \bcP_{\mathbf{\Theta}_0^{\perp}} (\widetilde{\bcX}) \|_F}{\| \bcP_{\mathbf{\Theta}_0^{\perp}} (\bcL_0) \|_F}$ & $\text{dist} (\mathbf{\Theta}_0, \widetilde{\mathbf{\Theta}})$  \\
\hline
\hline
60 & 30 & 2.5664e-15 & 2.7575e-09 & 0  \\
100 & 50 & 2.7290e-15 & 4.0480e-09 & 0  \\
150 & 75 & 3.0159e-15 & 1.0702e-08 & 0  \\
\hline
\end{tabular}}
\newline
\setlength\tabcolsep{0.1em}{\begin{tabular}{c||c|c|c|c}
\multicolumn{5}{@{}c}{$\rho = 0.4$, $r_{\ell} = 0.1 n_1$, $\lambda = 4 / (\sqrt{\log(n_{(1)})} \| \bcX \|)$}  \\
\hline
$n_1$ & $\text{rank}_t (\bcP_{\mathbf{\Theta}_0^{\perp}} (\widetilde{\bcX}))$ & $\frac{\| \bcP_{\bcV_0}^L - \bcP_{\widetilde{\bcU}} \|_F}{\| \bcP_{\bcV_0}^L \|_F}$ & $\frac{\| \bcP_{\mathbf{\Theta}_0^{\perp}} (\bcL_0) - \bcP_{\mathbf{\Theta}_0^{\perp}} (\widetilde{\bcX}) \|_F}{\| \bcP_{\mathbf{\Theta}_0^{\perp}} (\bcL_0) \|_F}$ & $\text{dist} (\mathbf{\Theta}_0, \widetilde{\mathbf{\Theta}})$  \\
\hline
\hline
60 & 30 & 2.5826e-15 & 1.5814e-27 & 0  \\
100 & 50 & 2.8084e-15 & 2.7471e-27 & 0  \\
150 & 75 & 3.0104e-15 & 5.5133e-27 & 0  \\
\hline
\end{tabular}}
\label{tab:syntheticDFT}
\vspace{-3mm}
\end{table}

\begin{table}[htbp]
\fontsize{5}{6}\selectfont
\centering
\caption{Exact recovery on random problems of varying sizes. The Discrete Cosine Transform (DCT) is used as the invertible linear transform $L$.}
\setlength{\tabcolsep}{0.1em}{\begin{tabular}{c||c|c|c|c}
\multicolumn{5}{@{}c}{$\rho = 0.2$, $r_{\ell} = 0.1 n_1$, $\lambda = 40 / (\sqrt{\log(n_{(1)})} \| \bcX \|)$}  \\
\hline
$n_1$ & $\text{rank}_t (\bcP_{\mathbf{\Theta}_0^{\perp}} (\widetilde{\bcX}))$ & $\frac{\| \bcP_{\bcV_0}^L - \bcP_{\widetilde{\bcU}} \|_F}{\| \bcP_{\bcV_0}^L \|_F}$ & $\frac{\| \bcP_{\mathbf{\Theta}_0^{\perp}} (\bcL_0) - \bcP_{\mathbf{\Theta}_0^{\perp}} (\widetilde{\bcX}) \|_F}{\| \bcP_{\mathbf{\Theta}_0^{\perp}} (\bcL_0) \|_F}$ & $\text{dist} (\mathbf{\Theta}_0, \widetilde{\mathbf{\Theta}})$  \\
\hline
\hline
60 & 30 & 2.4522e-15 & 1.1794e-05 & 0  \\
100 & 50 & 2.5648e-15 & 5.3367e-06 & 0  \\
150 & 75 & 2.9294e-15 & 5.1267e-06 & 0  \\
\hline
\end{tabular}}
\setlength{\tabcolsep}{0.1em}{\begin{tabular}{c||c|c|c|c}
\multicolumn{5}{@{}c}{$\rho = 0.4$, $r_{\ell} = 0.1 n_1$, $\lambda = 40 / (\sqrt{\log(n_{(1)})} \| \bcX \|)$}  \\
\hline
$n_1$ & $\text{rank}_t (\bcP_{\mathbf{\Theta}_0^{\perp}} (\widetilde{\bcX}))$ & $\frac{\| \bcP_{\bcV_0}^L - \bcP_{\widetilde{\bcU}} \|_F}{\| \bcP_{\bcV_0}^L \|_F}$ & $\frac{\| \bcP_{\mathbf{\Theta}_0^{\perp}} (\bcL_0) - \bcP_{\mathbf{\Theta}_0^{\perp}} (\widetilde{\bcX}) \|_F}{\| \bcP_{\mathbf{\Theta}_0^{\perp}} (\bcL_0) \|_F}$ & $\text{dist} (\mathbf{\Theta}_0, \widetilde{\mathbf{\Theta}})$  \\
\hline
\hline
60 & 30 & 2.5582e-15 & 2.8832e-13 & 0  \\
100 & 50 & 2.6576e-15 & 8.6553e-15 & 0  \\
150 & 75 & 2.9046e-15 & 1.1909e-15 & 0  \\
\hline
\end{tabular}}
\label{tab:syntheticDCT}
\vspace{-3mm}
\end{table}

\begin{table}[htbp]
\fontsize{5}{6}\selectfont
\centering
\caption{Exact recovery on random problems of varying sizes. The Random Orthogonal Matrix (ROM) is used as the invertible linear transform $L$.}
\setlength{\tabcolsep}{0.1em}{\begin{tabular}{c||c|c|c|c}
\multicolumn{5}{@{}c}{$\rho = 0.2$, $r_{\ell} = 0.1 n_1$, $\lambda = 40 / (\sqrt{\log(n_{(1)})} \| \bcX \|)$}  \\
\hline
$n_1$ & $\text{rank}_t (\bcP_{\mathbf{\Theta}_0^{\perp}} (\widetilde{\bcX}))$ & $\frac{\| \bcP_{\bcV_0}^L - \bcP_{\widetilde{\bcU}} \|_F}{\| \bcP_{\bcV_0}^L \|_F}$ & $\frac{\| \bcP_{\mathbf{\Theta}_0^{\perp}} (\bcL_0) - \bcP_{\mathbf{\Theta}_0^{\perp}} (\widetilde{\bcX}) \|_F}{\| \bcP_{\mathbf{\Theta}_0^{\perp}} (\bcL_0) \|_F}$ & $\text{dist} (\mathbf{\Theta}_0, \widetilde{\mathbf{\Theta}})$  \\
\hline
\hline
60 & 30 & 3.7444e-14 & 1.0834e-05 & 0  \\
100 & 50 & 4.2402e-14 & 6.0406e-06 & 0  \\
150 & 75 & 4.0723e-14 & 5.6253e-06 & 0  \\
\hline
\end{tabular}}
\setlength{\tabcolsep}{0.1em}{\begin{tabular}{c||c|c|c|c}
\multicolumn{5}{@{}c}{$\rho = 0.4$, $r_{\ell} = 0.1 n_1$, $\lambda = 40 / (\sqrt{\log(n_{(1)})} \| \bcX \|)$}  \\
\hline
$n_1$ & $\text{rank}_t (\bcP_{\mathbf{\Theta}_0^{\perp}} (\widetilde{\bcX}))$ & $\frac{\| \bcP_{\bcV_0}^L - \bcP_{\widetilde{\bcU}} \|_F}{\| \bcP_{\bcV_0}^L \|_F}$ & $\frac{\| \bcP_{\mathbf{\Theta}_0^{\perp}} (\bcL_0) - \bcP_{\mathbf{\Theta}_0^{\perp}} (\widetilde{\bcX}) \|_F}{\| \bcP_{\mathbf{\Theta}_0^{\perp}} (\bcL_0) \|_F}$ & $\text{dist} (\mathbf{\Theta}_0, \widetilde{\mathbf{\Theta}})$  \\
\hline
\hline
60 & 30 & 3.4499e-14 & 1.2335e-07 & 0  \\
100 & 50 & 3.9048e-14 & 3.4851e-15 & 0  \\
150 & 75 & 3.7525e-14 & 1.2012e-15 & 0  \\
\hline
\end{tabular}}
\label{tab:syntheticROM}
\vspace{-3mm}
\end{table}

Here, we verify the recovery guarantee in Theorem~\ref{thm:condortlrr} on randomly generated tensors. Three invertible linear transforms $L$ are adopted: (a) Discrete Fourier Transform (DFT); (b) Discrete Cosine Transform (DCT); (c) Random Orthogonal Matrix (ROM). We consider $c = 5$ tensor subspaces and generate the random tensor $\bcQ = [\bcQ_1, \dots, \bcQ_5] \in \R^{n_1 \times n_2 \times n_3}$ with $\bcQ_{\ell} = \bcA_{\ell} \ast_{\boldsymbol{L}} \bcB_{\ell}$, where the entries of $\bcA_{\ell} \in \R^{n_1 \times r_{\ell} \times n_3}$ and $\bcB_{\ell} \in \R^{r_{\ell} \times s_{\ell} \times n_3}$ are independently sampled from $\cN(0, 1/n_1)$ distribution. We simply set $s_{\ell} = n_1$, hence $n_2 = 5 n_1$. Then the lateral slices are independently sampled with probability $\rho$ as outliers, and the entries of these selected columns are randomly sampled from i.i.d. $\cN(0, \zeta/(n_1 n_3))$ distribution, where $\zeta$ is the averaged magnitude of the samples in $\bcQ$. In this way, the samples and outliers approximately have the same magnitude. The remaining entries of $\bcE_0$ are 0s. We denote the column support of $\bcE_0$ as $\mathbf{\Theta}_0$. Finally, the observed tensor $\bcX$ is generated by $\bcX = \bcL_0 + \bcE_0$, where $\bcL_0$ is obtained by setting the lateral slices of $\bcQ$ corresponding to nonzero columns of $\bcE_0$ to be $\mathbf{0}$s.

To verify that OR-TLRR can perform well for various tensor sizes, we set $n_3 = 100$, $n_1 = [ 60, 100, 150 ]$, $r_{\ell} = 0.1 n_1$, and $\rho = [ 0.2, 0.4 ]$. We run every experiment for 20 times and the results reported here correspond to the average of these 20 random trials. We set $\lambda = \alpha / (\sqrt{\log(n_{(1)})} \| \bcX \|)$ and empirically choose $\alpha$ from $[1, 2, 4, 8, 10]$ for DFT and from $[5, 10, 20, 30, 40, 50]$ for DCT and ROM. Note that $\widetilde{\bcX}$ denotes the recovered tensor, $\widetilde{\bcU}$ is the column space of $\widetilde{\bcZ}$, and the recovered outlier support set of $\widetilde{\bcE}$ is denoted as $\widetilde{\mathbf{\Theta}}$. Table~\ref{tab:syntheticDFT} - Table~\ref{tab:syntheticROM} respectively give the recovery results for the three different choices of the linear transforms $L$. The recovered tubal rank of $\bcP_{\mathbf{\Theta}_0^{\perp}} (\bcX)$ is exactly equal to $5 r_{\ell}$ and the relative errors $\| \bcP_{\bcV_0}^L - \bcP_{\widetilde{\bcU}} \|_F / \| \bcP_{\bcV_0}^L \|_F$ and $\| \bcP_{\mathbf{\Theta}_0^{\perp}} (\bcL_0) - \bcP_{\mathbf{\Theta}_0^{\perp}} (\widetilde{\bcX}) \|_F / \| \bcP_{\mathbf{\Theta}_0^{\perp}} (\bcL_0) \|_F$ are very small (less than $10^{-4}$). The Hamming distance $\text{dist} (\mathbf{\Theta}_0, \widetilde{\mathbf{\Theta}})$ between $\mathbf{\Theta}_0$ and $\widetilde{\mathbf{\Theta}}$ is always 0. Thus, the recovery guarantee claimed in Theorem~\ref{thm:condortlrr} has been fully verified by these numerical results. We also investigate the OR-TLRR-EWZF performance with missing data, where we set the percentage of missing entries $\delta$ to be 10\% and 20\%. We implement OR-TLRR-EWZF with $\lambda = 2 / (\sqrt{\log(n_{(1)})} \| \bcX_{\mathrm{miss}} \|)$ for DFT and $\lambda = 30 / (\sqrt{\log(n_{(1)})} \| \bcX_{\mathrm{miss}} \|)$ for DCT and ROM. The resulting Hamming distance $\text{dist} (\mathbf{\Theta}_0, \widetilde{\mathbf{\Theta}})$ is again always 0, which suggests that our algorithm can detect the outliers successfully in this setting.

\begin{table}[t]
\centering
\scriptsize
\caption{Description of datasets.}
\begin{tabular}{ccccc}
\hline
Dataset & \#Class & \#Total image & Size & Type  \\
\hline
\hline
ORL & 40 & 400 & 32 $\times$ 32 & Face  \\
COIL20 & 20 & 1440 & 32 $\times$ 32 & Object  \\
Umist & 20 & 575 & 56 $\times$ 46 & Face  \\
FRDUE & 152 & 3040 & 25 $\times$ 22 & Face  \\
USPS & 10 & 9298 & 16 $\times$ 16 & Handwritten  \\
\hline
\end{tabular}
\label{tab:datadesc}
\end{table}

\begin{figure}[t]
\centering
\subfloat[ORL]{\includegraphics[height=0.3in,width=0.9in]{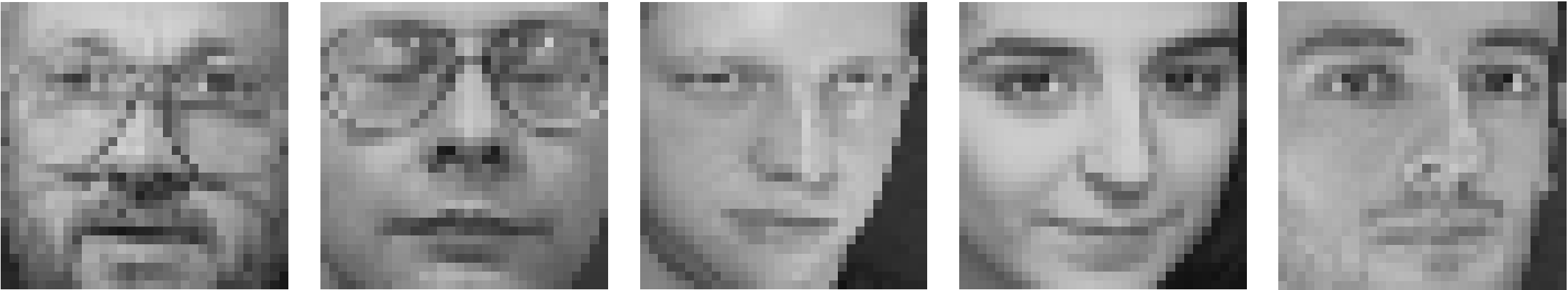}}
\quad
\subfloat[COIL20]{\includegraphics[height=0.3in,width=0.9in]{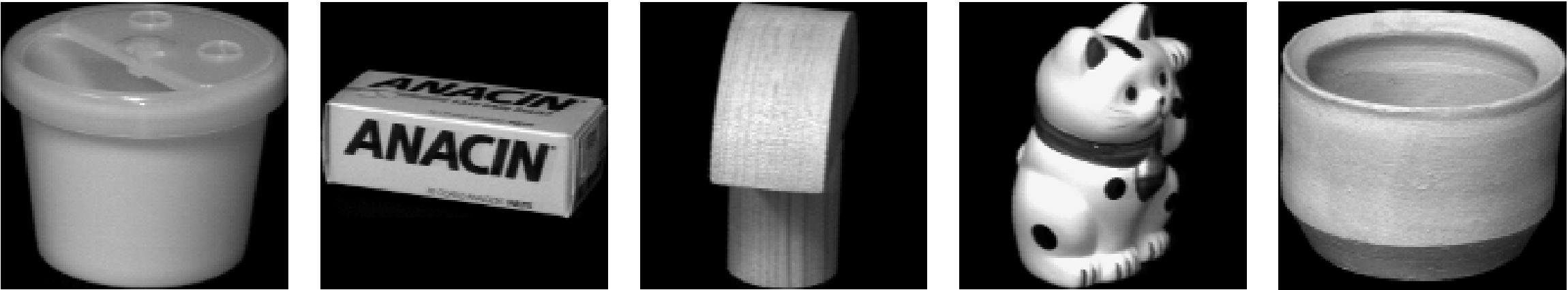}}
\quad
\subfloat[Umist]{\includegraphics[height=0.3in,width=0.9in]{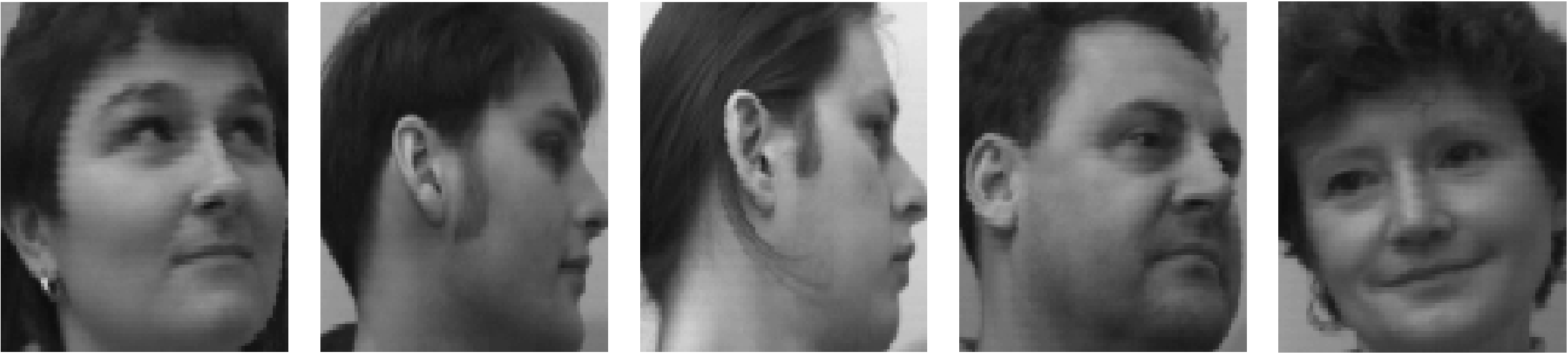}}
\\
\subfloat[FRDUE]{\includegraphics[height=0.3in,width=0.9in]{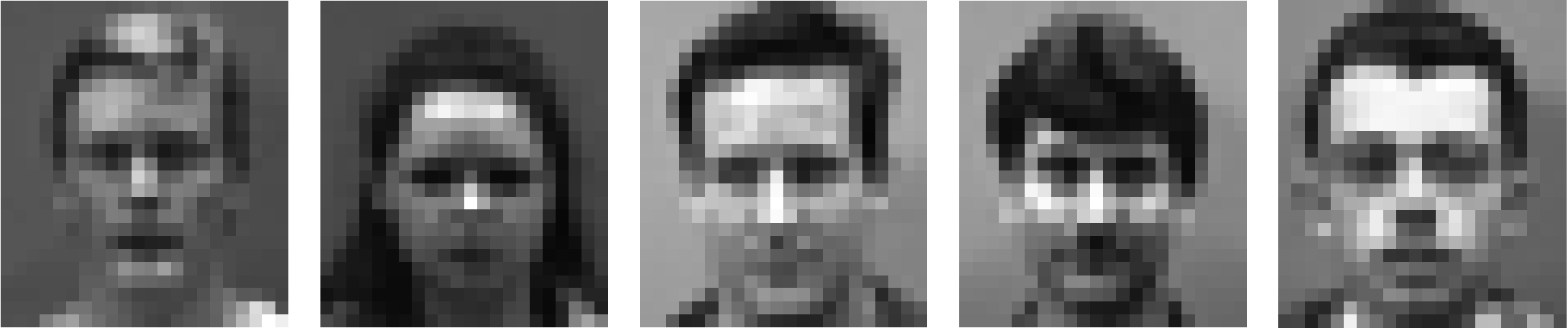}}
\quad
\subfloat[USPS]{\includegraphics[height=0.3in,width=0.9in]{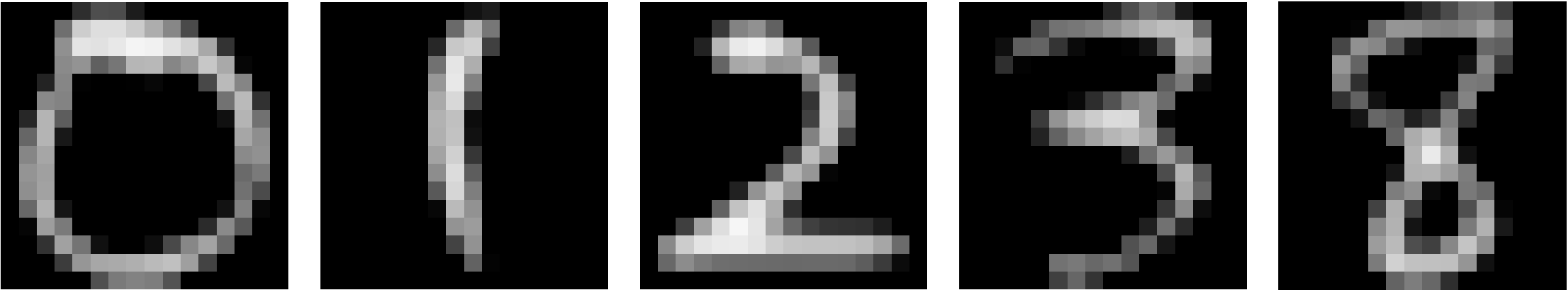}}
\quad
\subfloat[Mirflickr-25k]{\includegraphics[height=0.3in,width=0.9in]{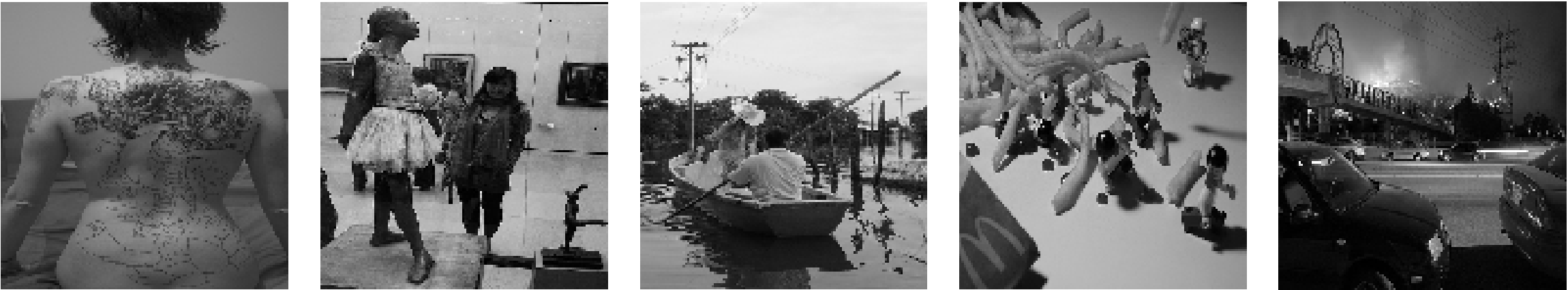}}
\caption{Sample images from the datasets.}
\label{fig:imgexample}
\end{figure}

\subsection{Real-World Applications}

In this subsection, the proposed algorithms are based on two different linear transforms for all experiments, i.e., DFT and DCT. The corresponding methods of OR-TLRR and OR-TLRR-EWZF are called OR-TLRR-DFT/OR-TLRR-DCT and OR-TLRR-EWZF-DFT/OR-TLRR-EWZF-DCT, respectively for short. We fix $\lambda = 1 / (\sqrt{\log(n_{(1)})} \| \bcX \|)$ for OR-TLRR-DFT and $\lambda = 1 / (\sqrt{\log(n_{(1)})} \| \bcX_{\mathrm{miss}} \|)$ for OR-TLRR-EWZF-DFT, and set $\lambda = 100 / (\sqrt{\log(n_{(1)})} \| \bcX \|)$ for OR-TLRR-DCT and $\lambda = 100 / (\sqrt{\log(n_{(1)})} \| \bcX_{\mathrm{miss}} \|)$ for OR-TLRR-EWZF-DCT. Table~\ref{tab:datadesc} gives the brief introduction of five testing databases, including ORL\footnote{\scriptsize \url{http://cam-orl.co.uk/facedatabase.html}}, COIL20\footnote{\scriptsize \url{https://www.cs.columbia.edu/CAVE/software/softlib/coil-20.php}}, Umist\footnote{\scriptsize \url{https://www.visioneng.org.uk/datasets/}}, FRDUE\footnote{\scriptsize \url{https://bamdevmishra.in/codes/tensorcompletion/}}, and USPS \cite{Hull.PAMI1994}.

\begin{table*}[t]
\footnotesize
\centering
\caption{AUC of outlier detection and clustering results (ACC, NMI and PUR) on ORL-MIRFLICKR-25k and COIL20-MIRFLICKR-25k datasets for complete data experiments.}
\setlength\tabcolsep{0.1em}{\begin{tabular}{c||c|c|c|c||c|c|c|c||c|c|c|c}
\hline
Datasets (\# Outlier) & \multicolumn{4}{c||}{ORL-MIR-25k (100)} & \multicolumn{4}{c||}{COIL20-MIR-25k (100)} & \multicolumn{4}{c}{COIL20-MIR-25k (200)}  \\
\hline
Methods & AUC & ACC & NMI & PUR & AUC & ACC & NMI & PUR & AUC & ACC & NMI & PUR  \\
\hline
R-PCA & \textbf{0.9734} & 0.4972 & 0.7115 & 0.5327 & \textbf{0.8732} & 0.6171 & 0.7301 & 0.6606 & 0.8444 & 0.6253 & 0.7215 & 0.6468  \\
OR-PCA & 0.9733 & 0.4972 & 0.7073 & 0.5328 & 0.8636 & 0.6228 & 0.7311 & 0.6663 & 0.8368 & 0.6130 & 0.7143 & 0.6510  \\
LRR & 0.7448 & 0.5058 & 0.6843 & 0.5360 & 0.8630 & 0.5002 & 0.6358 & 0.5956 & \textbf{0.8836} & 0.5252 & 0.6489 & 0.6114  \\
TRPCA-DFT & 0.9548 & 0.5549 & 0.7271 & 0.5806 & 0.7687 & 0.5679 & 0.7056 & 0.6441 & 0.7381 & 0.6087 & 0.7171 & 0.6581  \\
TRPCA-DCT & 0.9586 & 0.5522 & 0.7237 & 0.5779 & 0.7740 & 0.5344 & 0.6912 & 0.6308 & 0.7430 & 0.5740 & 0.7011 & 0.6377  \\
OR-TPCA & 0.7297 & 0.3851 & 0.5955 & 0.4098 & 0.0267 & 0.5675 & 0.6610 & 0.6027 & 0.0100 & 0.5750 & 0.6671 & 0.6085  \\
TLRR-DFT & 0.9550 & 0.5094 & 0.6812 & 0.5373 & 0.8219 & 0.5635 & 0.6479 & 0.6043 & 0.8049 & 0.5682 & 0.6454 & 0.6050  \\
TLRR-DCT & 0.9573 & 0.5488 & 0.7193 & 0.5753 & 0.7871 & 0.6222 & 0.7127 & 0.6647 & 0.7688 & 0.6624 & 0.7356 & 0.6947  \\
\hline
OR-TLRR-DFT & 0.9401 & \textbf{0.5915} & \textbf{0.7525} & \textbf{0.6184} & 0.7910 & \textbf{0.6700} & \textbf{0.7488} & \textbf{0.6967} & 0.7666 & \textbf{0.6725} & \textbf{0.7471} & \textbf{0.6966}  \\
OR-TLRR-DCT & 0.9460 & 0.5691 & 0.7444 & 0.5969 & 0.7643 & 0.5552 & 0.6855 & 0.6245 & 0.7384 & 0.6087 & 0.7165 & 0.6583  \\
\hline
\end{tabular}}
\label{tab:orl-coil20-comp}
\vspace{-3mm}
\end{table*}

\begin{table*}[t]
\footnotesize
\centering
\caption{AUC of outlier detection and clustering results (ACC, NMI and PUR) on ORL-MIRFLICKR-25k dataset for missing data experiments.}
\setlength{\tabcolsep}{0.1em}{\begin{tabular}{c||c|c|c|c||c|c|c|c}
\hline
\multirow{2}{*}{Methods} & \multicolumn{4}{c||}{$\delta = 10\%$} & \multicolumn{4}{c}{$\delta = 20\%$}  \\
\cline{2-9}
 & AUC & ACC & NMI & PUR & AUC & ACC & NMI & PUR  \\
\hline
TNN+TRPCA-DFT & 0.9552 & 0.5551 & 0.7260 & 0.5806 & 0.9535 & 0.5493 & 0.7224 & 0.5753  \\
TNN+TRPCA-DCT & \textbf{0.9581} & 0.5513 & 0.7241 & 0.5774 & \textbf{0.9583} & 0.5529 & 0.7244 & 0.5786  \\
TNN+OR-TPCA-DFT & 0.6970 & 0.3772 & 0.5871 & 0.4025 & 0.6752 & 0.3735 & 0.5820 & 0.3976  \\
TNN+TLRR-DFT & 0.9537 & 0.5077 & 0.6766 & 0.5336 & 0.9570 & 0.4847 & 0.6587 & 0.5105  \\
TNN+TLRR-DCT & 0.9580 & 0.5420 & 0.7136 & 0.5690 & 0.9578 & 0.5504 & 0.7175 & 0.5773  \\
\hline
TNN+OR-TLRR-DFT & 0.9384 & \textbf{0.5853} & \textbf{0.7475} & \textbf{0.6108} & 0.9376 & \textbf{0.5843} & \textbf{0.7481} & \textbf{0.6132}  \\
TNN+OR-TLRR-DCT & 0.9445 & 0.5722 & 0.7441 & 0.6009 & 0.9450 & 0.5674 & 0.7409 & 0.5949  \\
TLRR-EWZF-DFT & 0.9549 & 0.5088 & 0.6760 & 0.5341 & 0.9544 & 0.4892 & 0.6590 & 0.5152  \\
TLRR-EWZF-DCT & 0.9568 & 0.5398 & 0.7128 & 0.5687 & 0.9568 & 0.5398 & 0.7128 & 0.5674  \\
OR-TLRR-EWZF-DFT & 0.9391 & 0.5813 & 0.7449 & 0.6089 & 0.9407 & 0.5802 & 0.7417 & 0.6080  \\
OR-TLRR-EWZF-DCT & 0.9456 & 0.5608 & 0.7382 & 0.5899 & 0.9468 & 0.5631 & 0.7331 & 0.5898  \\
\hline
\end{tabular}}
\label{tab:orl-miss}
\vspace{-3mm}
\end{table*}

\subsubsection{Application to Outlier Detection}

Here we evaluate the effectiveness of OR-TLRR on outlier detection and clustering tasks. For the complete data experiments, we compare OR-TLRR-DFT/OR-TLRR-DCT with other state-of-the-art low-rank matrix/tensor factorization methods including R-PCA \cite{CandesLMW.JACM2011}, OR-PCA \cite{ZhangLZC.AAAI2015}, LRR \cite{LiuLYSYM.PAMI2013}, TRPCA-DFT \cite{LuFCLLY.PAMI2020}, TRPCA-DCT \cite{Lu.ICCV2021}, OR-TPCA \cite{ZhouF.CVPR2017}, TLRR-DFT \cite{ZhouLFLY.PAMI2021} and TLRR-DCT \cite{Yang.etal.TNNLS2022}. For TRPCA/OR-TPCA, let the estimated clean ``inlier'' data be $\bcL$. We first compute a matrix $\bL_0 = \sum_{k=1}^{n_3} \bL^{(k)}$ and perform SVD on $\bL_0 = \bU_0 \bS_0 \bV_0^T$. Then we apply spectral clustering on $\bV_0 \bV_0^T$ to obtain the result. For the case of data having missing entries, we compare OR-TLRR-EWZF/TLRR-EWZF with a baseline that first completes the unknown entries of $\bcX$ using tensor nuclear norm based tensor completion model \cite{LuFLY.IJCAI2018,LuPW.CVPR2019}, and then do clustering using TRPCA \cite{LuFCLLY.PAMI2020,Lu.ICCV2021}, OR-TPCA \cite{ZhouF.CVPR2017}, TLRR \cite{ZhouLFLY.PAMI2021,Yang.etal.TNNLS2022}, and OR-TLRR with the same linear transform that is used in the tensor completion step. The percentage of missing entries $\delta$ is again set to be 10\% and 20\%. Unless otherwise stated, all regularization parameters involved in the competing methods were either tuned or selected as suggested in the respective papers.

\begin{table*}[t]
\fontsize{5}{8}\selectfont
\centering
\caption{Clustering results (ACC, NMI and PUR) on Umist and FRDUE datasets for complete data experiments.}
\setlength{\tabcolsep}{0.1em}{\begin{tabular}{c|c||c|c|c|c|c|c|c|c|c|c|c|c}
\hline
Dataset & Metric & R-PCA & OR-PCA & LRR & SSC & SSC-OMP & TRPCA-DFT & TRPCA-DCT & OR-TPCA & TLRR-DFT & TLRR-DCT & OR-TLRR-DFT & OR-TLRR-DCT  \\
\hline
\hline
\multirow{3}{*}{Umist} & ACC & 0.4299 & 0.4155 & 0.3515 & 0.4972 & 0.4030 & 0.5774 & 0.5741 & 0.3454 & 0.5452 & 0.5007 & \textbf{0.6117} & 0.5125  \\
 & NMI & 0.5831 & 0.5786 & 0.4718 & 0.6818 & 0.5633 & 0.7282 & 0.7331 & 0.3930 & 0.6648 & 0.6573 & \textbf{0.7536} & 0.6756  \\
 & PUR & 0.4758 & 0.4661 & 0.4167 & 0.5897 & 0.5078 & 0.6374 & 0.6478 & 0.3871 & 0.6226 & 0.5743 & \textbf{0.6790} & 0.5652  \\
\hline
\multirow{3}{*}{FRDUE} & ACC & 0.6784 & 0.6760 & 0.7765 & 0.8200 & 0.4913 & 0.7058 & 0.7011 & 0.6807 & 0.8192 & 0.8109 & \textbf{0.8525} & 0.8231  \\
 & NMI & 0.8624 & 0.8619 & 0.9181 & 0.9399 & 0.7378 & 0.8804 & 0.8785 & 0.8757 & 0.9365 & 0.9321 & \textbf{0.9549} & 0.9403  \\
 & PUR & 0.7162 & 0.7131 & 0.8151 & 0.8649 & 0.5319 & 0.7435 & 0.7384 & 0.7243 & 0.8548 & 0.8403 & \textbf{0.8832} & 0.8578  \\
\hline
\end{tabular}}
\label{tab:umist-frdue-comp}
\vspace{-3mm}
\end{table*}

\begin{table*}[htbp]
\footnotesize
\centering
\caption{Clustering results (ACC, NMI and PUR) on Umist and FRDUE datasets for missing data experiments.}
\setlength{\tabcolsep}{0.1em}{\begin{tabular}{c||c|c|c||c|c|c||c|c|c||c|c|c}
\hline
Datasets & \multicolumn{6}{c||}{Umist} & \multicolumn{6}{c}{FRDUE}  \\
\hline
\multirow{2}{*}{Methods} & \multicolumn{3}{c||}{$\delta = 10\%$} & \multicolumn{3}{c||}{$\delta = 20\%$} & \multicolumn{3}{c||}{$\delta = 10\%$} & \multicolumn{3}{c}{$\delta = 20\%$}  \\
\cline{2-13}
 & ACC & NMI & PUR & ACC & NMI & PUR & ACC & NMI & PUR & ACC & NMI & PUR  \\
\hline
TNN+TRPCA-DFT & 0.5715 & 0.7261 & 0.6319 & 0.5737 & 0.7267 & 0.6333 & 0.7134 & 0.8828 & 0.7496 & 0.7274 & 0.8883 & 0.7626  \\
TNN+TRPCA-DCT & 0.5730 & 0.7297 & 0.6379 & 0.5743 & 0.7308 & 0.6390 & 0.7153 & 0.8825 & 0.7515 & 0.7237 & 0.8865 & 0.7590  \\
TNN+OR-TPCA-DFT & 0.3524 & 0.4037 & 0.4014 & 0.3495 & 0.3995 & 0.3951 & 0.7285 & 0.8941 & 0.7653 & 0.7467 & 0.9006 & 0.7817  \\
TNN+TLRR-DFT & 0.5298 & 0.6512 & 0.6024 & 0.5261 & 0.6529 & 0.6064 & 0.8389 & 0.9460 & 0.8701 & 0.8476 & 0.9504 & 0.8775  \\
TNN+TLRR-DCT & 0.5020 & 0.6557 & 0.5751 & 0.4982 & 0.6507 & 0.5719 & 0.8195 & 0.9388 & 0.8525 & 0.8274 & 0.9439 & 0.8597  \\
\hline
TNN+OR-TLRR-DFT & \textbf{0.6017} & \textbf{0.7477} & \textbf{0.6707} & \textbf{0.6015} & \textbf{0.7476} & \textbf{0.6707} & 0.8577 & 0.9577 & 0.8873 & 0.8682 & 0.9625 & 0.8967  \\
TNN+OR-TLRR-DCT & 0.5048 & 0.6749 & 0.5611 & 0.5015 & 0.6722 & 0.5573 & 0.8339 & 0.9458 & 0.8650 & 0.8377 & 0.9487 & 0.8696  \\
TLRR-EWZF-DFT & 0.5704 & 0.7249 & 0.6461 & 0.5512 & 0.7112 & 0.6289 & 0.8616 & 0.9568 & 0.8928 & 0.8726 & 0.9624 & 0.9012  \\
TLRR-EWZF-DCT & 0.5071 & 0.6731 & 0.5795 & 0.5030 & 0.6776 & 0.5713 & 0.8520 & 0.9570 & 0.8831 & 0.8546 & 0.9573 & 0.8842  \\
OR-TLRR-EWZF-DFT & 0.5705 & 0.7215 & 0.6356 & 0.5554 & 0.7034 & 0.6203 & \textbf{0.8641} & \textbf{0.9626} & \textbf{0.8959} & \textbf{0.8893} & \textbf{0.9752} & \textbf{0.9160}  \\
OR-TLRR-EWZF-DCT & 0.5079 & 0.6740 & 0.5639 & 0.5145 & 0.6787 & 0.5716 & 0.8496 & 0.9566 & 0.8820 & 0.8555 & 0.9579 & 0.8858  \\
\hline
\end{tabular}}
\label{tab:umist-frdue-miss}
\end{table*}

We create two datasets by combining ORL/COIL20 with the MIRFLICKR-25k dataset (containing 25,000 images) \cite{HuiskesL.ACM2008}, which we call ORL-MIRFLICKR-25k and COIL20-MIRFLICKR-25k, respectively. For ORL-MIRFLICKR-25k, we randomly select 100 images from MIRFLICKR-25k as outliers and the random selection is repeated 20 times. Fig.~\ref{fig:imgexample} (a) and Fig.~\ref{fig:imgexample} (f) show some examples of ORL and MIRFLICKR-25k, respectively. For COIL20-MIRFLICKR-25k, the outliers consist of 100 or 200 images selected from MIRFLICKR-25k. Theorem~\ref{thm:condortlrr} implies the optimal solution $\bcE$ can help detect outliers in the data. Since all the methods have the sparse component $\bcE$, in this paper, we use $k$-means to cluster all the $\| \bcE(:,j,:) \|_F^2$'s into two classes (outliers vs. non-outliers) for outlier detection. The performance of outlier detection is evaluated by AUC. While investigating the segmentation performance, we first remove the outliers detected by $k$-means (some normal samples may be removed, and these samples are given wrong labels) and apply Ncut \cite{ShiM.PAMI2000} on the remaining samples (possibly include undetected outliers) for clustering. We evaluate the clustering performance of the normal samples using three metrics, including ACC, NMI \cite{VinhEB.JMLR2010} and PUR \cite{ManningRS.2010}.

Table~\ref{tab:orl-coil20-comp} summarizes the results for the complete data experiments. From these results, we can find that matrix-based algorithms seem to work better than tensor-based ones in terms of outlier detection. OR-TLRR-DFT always achieves the best clustering performance. On the widely used ACC metric, OR-TLRR-DFT respectively improves by 3.9\%, 7.6\% and 1.5\% over the runner-up on the three testing cases (left-right). Such improvement can be attributed to the reason that (1) it takes advantage of the multi-dimensional structure of tensor data; (2) it uses the $\ell_{2,1}$ norm which can better depict outliers than the $\ell_1$ norm. From Table~\ref{tab:orl-miss} and Table~\ref{tab:coil20-miss} (in the supplementary material), we observe that TNN+OR-TLRR-DFT outperforms all other methods in terms of clustering on the ORL-MIRFLICKR-25k dataset, and it has very marginal improvement over the second best, OR-TLRR-EWZF-DFT. As for the COIL20-MIRFLICKR-25k dataset, OR-TLRR-EWZF-DCT consistently achieves the best clustering performance in ACC, NMI and PUR. This may be explained because the tensor completion step fails as the data tensor is not low-rank.

\subsubsection{Application to Image Clustering}

Here, we conduct experiments on Umist and FRDUE datasets for face clustering. Due to shadows and facial expressions on face images displayed in Fig.~\ref{fig:imgexample} (c) and Fig.~\ref{fig:imgexample} (d), it can be argued that the $\ell_{2,1}$ norm can characterize the noise better than the $\ell_1$ norm. Table~\ref{tab:umist-frdue-comp} presents the clustering results for all the methods. As can be seen from this table, OR-TLRR-DFT shows noticeable improvement in all the metrics compared to other algorithms. The accuracy outperforms the second best by 5.9\% and 3.6\% on Umist and FRDUE datasets, respectively. Similar to previous results, we observe from Table~\ref{tab:umist-frdue-miss} that TNN+OR-TLRR-DFT performs the best on Umist dataset and OR-TLRR-EWZF-DFT ourperforms other methods on FRDUE dataset. Understanding the root cause behind this phenomenon is the subject of future research. We also examine the robustness of our methods for dealing with sample-specific corruptions, and we include these results in the supplementary material.

\section{CONCLUSION}
\label{sec:conclude}

In this work, we proposed an outlier-robust tensor low-rank representation method for tensor data clustering in the presence of outliers. We proved that OR-TLRR can recover the row space of the authentic data and identify the outliers with high probability. We further extended the proposed method to the missing data scenario. Extensive numerical results on synthetic and real data demonstrated the effectiveness of our algorithms. In this work, we directly use the raw tensor data as the dictionary. While the performance and the corresponding theoretical guarantee of other possible options for the dictionary, e.g., apply TRPCA on the raw data and use the estimation of the clean data as the dictionary, will be investigated in future work. Our future work also includes learning the optimal linear transform for different types of data.

\bibliography{TW_aistats_bib}

\begin{thebibliography}{}

\bibitem[Cai et~al., 2010]{CaiCS.SIAM2010}
Cai, J.-F., Cand\`{e}s, E.~J., and Shen, Z. (2010).
\newblock A singular value thresholding algorithm for matrix completion.
\newblock {\em SIAM Journal on Optimization}, 20(4):1956--1982.

\bibitem[Cand{\`e}s et~al., 2011]{CandesLMW.JACM2011}
Cand{\`e}s, E.~J., Li, X., Ma, Y., and Wright, J. (2011).
\newblock Robust principal component analysis?
\newblock {\em Journal of the ACM}, 58(3):1--37.

\bibitem[Donoho, 1995]{Donoho.TIT1995}
Donoho, D.~L. (1995).
\newblock De-noising by soft-thresholding.
\newblock {\em IEEE Transactions on Information Theory}, 41(3):613--627.

\bibitem[Elhamifar and Vidal, 2013]{ElhamifarV.PAMI2013}
Elhamifar, E. and Vidal, R. (2013).
\newblock Sparse subspace clustering: Algorithm, theory, and applications.
\newblock {\em IEEE Transactions on Pattern Analysis and Machine Intelligence},
  35(11):2765--2781.

\bibitem[Eriksson et~al., 2012]{ErikssonBN.AISTATS2012}
Eriksson, B., Balzano, L., and Nowak, R. (2012).
\newblock High-rank matrix completion.
\newblock In {\em Proceedings of the International Conference on Artificial
  Intelligence and Statistics}, pages 373--381.

\bibitem[Fu et~al., 2016]{FuGTLH.TNNLS2016}
Fu, Y., Gao, J., Tien, D., Lin, Z., and Hong, X. (2016).
\newblock Tensor {LRR} and sparse coding-based subspace clustering.
\newblock {\em IEEE Transactions on Neural Networks and Learning Systems},
  27(10):2120--2133.

\bibitem[Huiskes and Lew, 2008]{HuiskesL.ACM2008}
Huiskes, M.~J. and Lew, M.~S. (2008).
\newblock The {MIR} flickr retrieval evaluation.
\newblock In {\em Proceedings of the ACM International Conference on Multimedia
  Information Retrieval}, pages 39--43.

\bibitem[Hull, 1994]{Hull.PAMI1994}
Hull, J.~J. (1994).
\newblock A database for handwritten text recognition research.
\newblock {\em IEEE Transactions on Pattern Analysis and Machine Intelligence},
  5(16):550--554.

\bibitem[Ji et~al., 2017]{JiZLSR.NIPS2017}
Ji, P., Zhang, T., Li, H., Salzmann, M., and Reid, I. (2017).
\newblock Deep subspace clustering networks.
\newblock In {\em Proceedings of the Annual Conference on Neural Information
  Processing Systems}, volume~30, pages 24--33.

\bibitem[Ji et~al., 2019]{JiVH.ICCV2019}
Ji, X., Vedaldi, A., and Henriques, J. (2019).
\newblock Invariant information clustering for unsupervised image
  classification and segmentation.
\newblock In {\em Proceedings of the IEEE/CVF Conference on Computer Vision},
  pages 9865--9874.

\bibitem[Jiang et~al., 2020]{JiangNZH.TIP2020}
Jiang, T.-X., Ng, M.~K., Zhao, X.-L., and Huang, T.-Z. (2020).
\newblock Framelet representation of tensor nuclear norm for third-order tensor
  completion.
\newblock {\em IEEE Transactions on Image Processing}, 29:7233--7244.

\bibitem[Jiang et~al., 2023]{JiangZZN.TNNLS2023}
Jiang, T.-X., Zhao, X.-L., Zhang, H., and Ng, M.~K. (2023).
\newblock Dictionary learning with low-rank coding coefficients for tensor
  completion.
\newblock {\em IEEE Transactions on Neural Networks and Learning Systems},
  34(2):932--946.

\bibitem[Kernfeld et~al., 2014]{KernfeldAK.arxiv2014}
Kernfeld, E., Aeron, S., and Kilmer, M. (2014).
\newblock Clustering multi-way data: A novel algebraic approach.
\newblock {\em arXiv preprint}.

\bibitem[Kernfeld et~al., 2015]{KernfeldKA.LAA2015}
Kernfeld, E., Kilmer, M., and Aeron, S. (2015).
\newblock Tensor-tensor products with invertible linear transforms.
\newblock {\em Linear Algebra and its Applications}, 485:545--570.

\bibitem[Kilmer et~al., 2013]{KilmerBHH.SIAM2013}
Kilmer, M.~E., Braman, K., Hao, N., and Hoover, R.~C. (2013).
\newblock Third-order tensors as operators on matrices: A theoretical and
  computational framework with applications in imaging.
\newblock {\em SIAM Journal on Matrix Analysis and Applications},
  34(1):148--172.

\bibitem[Kilmer and Martin, 2011]{KilmerM.LAA2011}
Kilmer, M.~E. and Martin, C.~D. (2011).
\newblock Factorization strategies for third-order tensors.
\newblock {\em Linear Algebra and its Applications}, 435(3):641--658.

\bibitem[Kolda and Bader, 2009]{KoldaB.Rev2009}
Kolda, T.~G. and Bader, B.~W. (2009).
\newblock Tensor decompositions and applications.
\newblock {\em SIAM Review}, 51(3):455--500.

\bibitem[Kong et~al., 2021]{KongLL.ML2021}
Kong, H., Lu, C., and Lin, Z. (2021).
\newblock Tensor {Q}-rank: new data dependent definition of tensor rank.
\newblock {\em Machine Learning}, 110(7):1867--1900.

\bibitem[Li et~al., 2021]{LiHLPZP.AAAI2021}
Li, Y., Hu, P., Liu, Z., Peng, D., Zhou, J.~T., and Peng, X. (2021).
\newblock Contrastive clustering.
\newblock In {\em Proceedings of the AAAI Conference on Artificial
  Intelligence}, volume~35, pages 8547--8555.

\bibitem[Lin et~al., 2011]{LinLS.NIPS2011}
Lin, Z., Liu, R., and Su, Z. (2011).
\newblock Linearized alternating direction method with adaptive penalty for
  low-rank representation.
\newblock In {\em Proceedings of the Annual Conference on Neural Information
  Processing Systems}, pages 612--620.

\bibitem[Liu et~al., 2013a]{LiuLYSYM.PAMI2013}
Liu, G., Lin, Z., Yan, S., Sun, J., Yu, Y., and Ma, Y. (2013a).
\newblock Robust recovery of subspace structures by low-rank representation.
\newblock {\em IEEE Transactions on Pattern Analysis and Machine Intelligence},
  35(1):171--184.

\bibitem[Liu et~al., 2012]{LiuXY.AISTATS2012}
Liu, G., Xu, H., and Yan, S. (2012).
\newblock Exact subspace segmentation and outlier detection by low-rank
  representation.
\newblock In {\em Proceedings of the International Conference on Artificial
  Intelligence and Statistics}, pages 703--711.

\bibitem[Liu and Yan, 2011]{LiuY.ICCV2011}
Liu, G. and Yan, S. (2011).
\newblock Latent low-rank representation for subspace segmentation and feature
  extraction.
\newblock In {\em Proceedings of the IEEE Conference on Computer Vision}, pages
  1615--1622.

\bibitem[Liu et~al., 2013b]{LiuMWY.PAMI2013}
Liu, J., Musialski, P., Wonka, P., and Ye, J. (2013b).
\newblock Tensor completion for estimating missing values in visual data.
\newblock {\em IEEE Transactions on Pattern Analysis and Machine Intelligence},
  35(1):208--220.

\bibitem[Lu, 2021]{Lu.ICCV2021}
Lu, C. (2021).
\newblock Transforms based tensor robust {PCA}: Corrupted low-rank tensors
  recovery via convex optimization.
\newblock In {\em Proceedings of the IEEE/CVF Conference on Computer Vision},
  pages 1145--1152.

\bibitem[Lu et~al., 2020]{LuFCLLY.PAMI2020}
Lu, C., Feng, J., Chen, Y., Liu, W., Lin, Z., and Yan, S. (2020).
\newblock Tensor robust principal component analysis with a new tensor nuclear
  norm.
\newblock {\em IEEE Transactions on Pattern Analysis and Machine Intelligence},
  42(4):925--938.

\bibitem[Lu et~al., 2019a]{LuFLMY.PAMI2019}
Lu, C., Feng, J., Lin, Z., Mei, T., and Yan, S. (2019a).
\newblock Subspace clustering by block diagonal representation.
\newblock {\em IEEE Transactions on Pattern Analysis and Machine Intelligence},
  41(2):487--501.

\bibitem[Lu et~al., 2018]{LuFLY.IJCAI2018}
Lu, C., Feng, J., Lin, Z., and Yan, S. (2018).
\newblock Exact low tubal rank tensor recovery from {G}aussian measurements.
\newblock In {\em Proceedings of the International Joint Conferences on
  Artificial Intelligence}, pages 2504--2510.

\bibitem[Lu et~al., 2019b]{LuPW.CVPR2019}
Lu, C., Peng, X., and Wei, Y. (2019b).
\newblock Low-rank tensor completion with a new tensor nuclear norm induced by
  invertible linear transforms.
\newblock In {\em Proceedings of the IEEE/CVF Conference on Computer Vision and
  Pattern Recognition}, pages 5996--6004.

\bibitem[Manning et~al., 2010]{ManningRS.2010}
Manning, C., Raghavan, P., and Sch{\"{u}}tze, H. (2010).
\newblock {\em Introduction to information retrieval}.
\newblock Cambridge University Press.

\bibitem[Mardani et~al., 2015]{MardaniMG.TSP2015}
Mardani, M., Mateos, G., and Giannakis, G.~B. (2015).
\newblock Subspace learning and imputation for streaming big data matrices and
  tensors.
\newblock {\em IEEE Transactions on Signal Processing}, 63(10):2663--2677.

\bibitem[Qin et~al., 2022]{Qin.etal.TIP2022}
Qin, W., Wang, H., Zhang, F., Wang, J., Luo, X., and Huang, T. (2022).
\newblock Low-rank high-order tensor completion with applications in visual
  data.
\newblock {\em IEEE Transactions on Image Processing}, 31:2433--2448.

\bibitem[Shi and Malik, 2000]{ShiM.PAMI2000}
Shi, J. and Malik, J. (2000).
\newblock Normalized cuts and image segmentation.
\newblock {\em IEEE Transactions on Pattern Analysis and Machine Intelligence},
  22(8):888--905.

\bibitem[Song et~al., 2020]{SongNZ.NLAA2020}
Song, G., Ng, M.~K., and Zhang, X. (2020).
\newblock Robust tensor completion using transformed tensor singular value
  decomposition.
\newblock {\em Numerical Linear Algebra with Applications}, 27(3):e2299.

\bibitem[Su et~al., 2023]{SuHWL.SP2023}
Su, Y., Hong, Z., Wu, X., and Lu, C. (2023).
\newblock Invertible linear transforms based adaptive multi-view subspace
  clustering.
\newblock {\em Signal Processing}, 209:109014.

\bibitem[Tang et~al., 2014]{TangLSZ.TNNLS2014}
Tang, K., Liu, R., Su, Z., and Zhang, J. (2014).
\newblock Structure-constrained low-rank representation.
\newblock {\em IEEE Transactions on Neural Networks and Learning Systems},
  25(12):2167--2179.

\bibitem[Tropp, 2012]{Tropp.FoCM2012}
Tropp, J.~A. (2012).
\newblock User-friendly tail bounds for sums of random matrices.
\newblock {\em Foundations of Computational Mathematics}, 12:389--434.

\bibitem[Vinh et~al., 2010]{VinhEB.JMLR2010}
Vinh, N.~X., Epps, J., and Bailey, J. (2010).
\newblock Information theoretic measures for clusterings comparison: Variants,
  properties, normalization and correction for chance.
\newblock {\em Journal of Machine Learning Research}, 11:2837--2854.

\bibitem[Wang et~al., 2021]{Wang.etal.TIP2021}
Wang, J.-L., Huang, T.-Z., Zhao, X.-L., Jiang, T.-X., and Ng, M.~K. (2021).
\newblock Multi-dimensional visual data completion via low-rank tensor
  representation under coupled transform.
\newblock {\em IEEE Transactions on Image Processing}, 30:3581--3596.

\bibitem[Wu, 2020]{Wu.PR2020}
Wu, T. (2020).
\newblock Graph regularized low-rank representation for submodule clustering.
\newblock {\em Pattern Recognition}, 100:107145.

\bibitem[Wu, 2023]{Wu.TCSVT2023}
Wu, T. (2023).
\newblock Online tensor low-rank representation for streaming data clustering.
\newblock {\em IEEE Transactions on Circuits and Systems for Video Technology},
  33(2):602--617.

\bibitem[Wu and Bajwa, 2018]{WuB.SPL2018}
Wu, T. and Bajwa, W.~U. (2018).
\newblock A low tensor-rank representation approach for clustering of imaging
  data.
\newblock {\em IEEE Signal Processing Letters}, 25(8):1196--1200.

\bibitem[Xia et~al., 2021]{XiaCSL.TNNLS2021}
Xia, G., Chen, B., Sun, H., and Liu, Q. (2021).
\newblock Nonconvex low-rank kernel sparse subspace learning for keyframe
  extraction and motion segmentation.
\newblock {\em IEEE Transactions on Neural Networks and Learning Systems},
  32(4):1612--1626.

\bibitem[Xie et~al., 2016]{XieGF.ICML2016}
Xie, J., Girshick, R., and Farhadi, A. (2016).
\newblock Unsupervised deep embedding for clustering analysis.
\newblock In {\em Proceedings of the International Conference on Machine
  Learning}, pages 478--487.

\bibitem[Xie et~al., 2018]{XieTZLZQ.IJCV2018}
Xie, Y., Tao, D., Zhang, W., Liu, Y., Zhang, L., and Qu, Y. (2018).
\newblock On unifying multi-view self-representations for clustering by tensor
  multi-rank minimization.
\newblock {\em International Journal of Computer Vision}, 126(11):1157--1179.

\bibitem[Yang et~al., 2015]{YangRV.ICML2015}
Yang, C., Robinson, D., and Vidal, R. (2015).
\newblock Sparse subspace clustering with missing entries.
\newblock In {\em Proceedings of the International Conference on Machine
  Learning}, pages 2463--2472.

\bibitem[Yang et~al., 2022]{Yang.etal.TNNLS2022}
Yang, J.-H., Chen, C., Dai, H.-N., Ding, M., Wu, Z.-B., and Zheng, Z. (2022).
\newblock Robust corrupted data recovery and clustering via generalized
  transformed tensor low-rank representation.
\newblock {\em IEEE Transactions on Neural Networks and Learning Systems}.

\bibitem[You et~al., 2016]{YouRV.CVPR2016}
You, C., Robinson, D.~P., and Vidal, R. (2016).
\newblock Scalable sparse subspace clustering by orthogonal matching pursuit.
\newblock In {\em Proceedings of the IEEE/CVF Conference on Computer Vision and
  Pattern Recognition}, pages 3918--3927.

\bibitem[Zhang et~al., 2015]{ZhangLZC.AAAI2015}
Zhang, H., Lin, Z., Zhang, C., and Chang, E. (2015).
\newblock Exact recoverability of robust {PCA} via outlier pursuit with tight
  recovery bounds.
\newblock In {\em Proceedings of the AAAI Conference on Artificial
  Intelligence}.

\bibitem[Zhang et~al., 2018]{ZhangLJLS.SPL2018}
Zhang, J., Li, X., Jing, P., Liu, J., and Su, Y. (2018).
\newblock Low-rank regularized heterogeneous tensor decomposition for subspace
  clustering.
\newblock {\em IEEE Signal Processing Letters}, 25(3):333--337.

\bibitem[Zhou and Feng, 2017]{ZhouF.CVPR2017}
Zhou, P. and Feng, J. (2017).
\newblock Outlier-robust tensor {PCA}.
\newblock In {\em Proceedings of the IEEE/CVF Conference on Computer Vision and
  Pattern Recognition}, pages 2263--2271.

\bibitem[Zhou et~al., 2021]{ZhouLFLY.PAMI2021}
Zhou, P., Lu, C., Feng, J., Lin, Z., and Yan, S. (2021).
\newblock Tensor low-rank representation for data recovery and clustering.
\newblock {\em IEEE Transactions on Pattern Analysis and Machine Intelligence},
  43(5):1718--1732.

\end{thebibliography}
\bibliographystyle{apalike}

\onecolumn
\aistatstitle{Supplementary Material: \\ Robust Data Clustering with Outliers via Transformed Tensor Low-Rank Representation}

\setcounter{section}{0}
\renewcommand{\thesection}{\Alph{section}}
\renewcommand{\thesubsection}{\thesection.\arabic{subsection}}

\section{EXTENSION OF PRELIMINARIES}
\label{sec:preliminproof}

\subsection{Notations}

The operator norm of an operator on tensor is $\| \mathscr{L} \| = \sup_{\|\bcA\|_F = 1} \| \mathscr{L}(\bcA) \|_F$. The inner product between two matrices $\bA$ and $\bB$ in $\C^{n_1 \times n_2}$ is defined as $\langle \bA, \bB \rangle = \tr(\bA^H \bB)$, whereas the inner product between two tensors $\bcA$ and $\bcB$ in $\C^{n_1 \times n_2 \times n_3}$ is defined as $\langle \bcA, \bcB \rangle = \sum_{i=1}^{n_3} \langle \bA^{(i)}, \bB^{(i)} \rangle$. The dual norm of tensor $\ell_{2,1}$ norm is the tensor $\ell_{2,\infty}$ norm, which is defined as $\|\bcA\|_{2,\infty} = \max_j \|\bcA(:,j,:)\|_F$. The infinity norm of $\bcA$ is $\|\bcA\|_{\infty} = \max_{i,j,k} |\bcA_{i,j,k}|$.

\begin{definition}[Tensor pseudo-inverse \cite{ZhouLFLY.PAMI2021}]
For an arbitrary tensor $\bcA \in \C^{n_1 \times n_2 \times n_3}$, its pseudo-inverse under $L$ in \eqref{eqn:mode3prod} is defined as a tensor $\bcA^{\dagger} \in \C^{n_2 \times n_1 \times n_3}$ which satisfies ($i$) $\bcA \ast_{\boldsymbol{L}} \bcA^{\dagger} \ast_{\boldsymbol{L}} \bcA = \bcA$, ($ii$) $\bcA^{\dagger} \ast_{\boldsymbol{L}} \bcA \ast_{\boldsymbol{L}} \bcA^{\dagger} = \bcA^{\dagger}$, ($iii$) $(\bcA \ast_{\boldsymbol{L}} \bcA^{\dagger})^H = \bcA \ast_{\boldsymbol{L}} \bcA^{\dagger}$, and ($iv$) $(\bcA^{\dagger} \ast_{\boldsymbol{L}} \bcA)^H = \bcA^{\dagger} \ast_{\boldsymbol{L}} \bcA$.
\end{definition}
\begin{definition}[Standard tensor basis \cite{LuPW.CVPR2019}]    \label{def:stdtensorbasis}
The tensor \textbf{column basis} with respect to the transform $L$, denoted as $\mathring{\boldsymbol{\ce}}_i$, is a tensor of size $n_1 \times 1 \times n_3$ with the entries of the $(i,1)$-th tube of $L(\mathring{\boldsymbol{\ce}}_i)$ equaling 1 and the rest equaling 0. Similarly, the \textbf{row basis} $\mathring{\boldsymbol{\ce}}_j^H$ is of size $1 \times n_2 \times n_3$ with the entries of the $(1,j)$-th tube of $L(\mathring{\boldsymbol{\ce}}_j^H)$ equaling to 1 and the rest equaling to 0. The \textbf{tube basis} $\dot{\boldsymbol{\ce}}_k$ is a tensor of size $1 \times 1 \times n_3$ with the $(1,1,k)$-th entry of $L(\dot{\boldsymbol{\ce}}_k)$ equaling 1 and the rest equaling 0.
\end{definition}

Denote $\bar{\boldsymbol{\ce}}_{ijk}$ as a unit tensor with only the $(i,j,k)$-th entry equaling 1 and others equaling 0. Based on Definition~\ref{def:stdtensorbasis}, $\bar{\boldsymbol{\ce}}_{ijk}$ can be expressed as $\bar{\boldsymbol{\ce}}_{ijk} = L(\mathring{\boldsymbol{\ce}}_i \ast_{\boldsymbol{L}} \dot{\boldsymbol{\ce}}_k \ast_{\boldsymbol{L}} \mathring{\boldsymbol{\ce}}_j^H)$. Then for any tensor $\bcA \in \R^{n_1 \times n_2 \times n_3}$, we have $\bcA_{i,j,k} = \langle \bcA, \bar{\boldsymbol{\ce}}_{ijk} \rangle$ and $\bcA = \sum_{i,j,k} \langle \bcA, \bar{\boldsymbol{\ce}}_{ijk} \rangle \bar{\boldsymbol{\ce}}_{ijk}$.

Next, we define some commonly used operators in this document. Assume that $\bcU_{\bcX} \ast_{\boldsymbol{L}} \bcS_{\bcX} \ast_{\boldsymbol{L}} \bcV_{\bcX}^H$, $\bcU_0 \ast_{\boldsymbol{L}} \bcS_0 \ast_{\boldsymbol{L}} \bcV_0^H$, and $\bcU \ast_{\boldsymbol{L}} \bcS \ast_{\boldsymbol{L}} \bcV^H$ are the skinny t-SVDs of $\bcX$, $\bcL_0$ and $\bcZ$, respectively. The projection onto the column space $\bcU$ is given by $\bcP_{\bcU} (\bcA) = \bcU \ast_{\boldsymbol{L}} \bcU^H \ast_{\boldsymbol{L}} \bcA$, and similarly for the row space $\bcP_{\bcV} (\bcA) = \bcA \ast_{\boldsymbol{L}} \bcV \ast_{\boldsymbol{L}} \bcV^H$. Sometimes, we need to apply $\bcP_{\bcV}$ on the left side of a tensor. This special operator is denoted by $\bcP_{\bcV}^L (\bcA) = \bcV \ast_{\boldsymbol{L}} \bcV^H \ast_{\boldsymbol{L}} \bcA$. The tensor $\bcP_{\mathbf{\Theta}} (\bcA)$ is obtained from $\bcA$ by setting tensor column $\bcA(:,j,:)$ to zero for all $j \notin \mathbf{\Theta}$. The projection onto the space spanned by $\bcU$ and $\bcV$ is given by $\bcP_{\bcT} (\bcA) = \bcP_{\bcU} (\bcA) + \bcP_{\bcV} (\bcA) - \bcP_{\bcU} \bcP_{\bcV} (\bcA)$, where $\bcP_{\bcU} \bcP_{\bcV} (\bcA) = \bcU \ast_{\boldsymbol{L}} \bcU^H \ast_{\boldsymbol{L}} \bcA \ast_{\boldsymbol{L}} \bcV \ast_{\boldsymbol{L}} \bcV^H$. The complementary operators, $\bcP_{\bcU^{\perp}}$, $\bcP_{\bcV^{\perp}}$, $\bcP_{\bcV^{\perp}}^L$ and $\bcP_{\bcT^{\perp}}$ are defined as usual. Finally, we use $\mathbf{\Theta}^c$ to denote the complement of $\mathbf{\Theta}$ and $\bcP_{\mathbf{\Theta}^{\perp}}$ is the projection onto $\mathbf{\Theta}^c$.

\subsection{Preliminaries}

We consider the invertible linear transform $L : \R^{n_1 \times n_2 \times n_3} \to \C^{n_1 \times n_2 \times n_3}$ defined as follows:
\begin{align}
\xoverline{\bcA} = L(\bcA) = \bcA \times_3 \boldsymbol{L},
\end{align}
where $\times_3$ denotes the mode-3 tensor-matrix product, and $\boldsymbol{L} \in \C^{n_3 \times n_3}$ is an arbitrary invertible matrix. In this document, we always use $\boldsymbol{L}$ satisfying
\begin{align}    \label{eqn:lconstraintcopy}
\boldsymbol{L} \boldsymbol{L}^H = \boldsymbol{L}^H \boldsymbol{L} = \tau \bI_{n_3},
\end{align}
where $\tau > 0$. Using \eqref{eqn:lconstraintcopy}, we have the following properties:
\begin{align*}
\|\bcA\|_F = \frac{1}{\sqrt{\tau}} \|\xoverline{\bcA}\|_F = \frac{1}{\sqrt{\tau}} \|\widebar{\bA}\|_F \quad \text{and} \quad \langle \bcA, \bcB \rangle = \frac{1}{\tau} \langle \xoverline{\bcA}, \xoverline{\bcB} \rangle = \frac{1}{\tau} \langle \widebar{\bA}, \widebar{\bB} \rangle.
\end{align*}

\section{OPTIMIZATION DETAILS OF OR-TLRR}
\label{sec:compalgodetail}

In this section, we elaborate the details of our optimization strategy to solve problem \eqref{eqn:ortlrrreducedprob} in the manuscript. To solve \eqref{eqn:ortlrrreducedprob} in an efficient manner, we resort to ``variable splitting" of $\bcZ'$, which transforms \eqref{eqn:ortlrrreducedprob} into the following:
\begin{align}    \label{eqn:convertform}
\min_{\bcZ',\bcJ,\bcE} \|\bcZ'\|_{\ast} + \lambda \|\bcE\|_{2,1} \quad \text{s.t.} \quad \bcZ' = \bcJ, \bcX = \bcD \ast_{\boldsymbol{L}} \bcJ + \bcE.
\end{align}
The augmented Lagrangian function of \eqref{eqn:convertform} is
\begin{align}    \label{eqn:augform}
&\cL_1(\bcZ',\bcJ,\bcE,\bcY_1,\bcY_2,\beta) = \|\bcZ'\|_{\ast} + \lambda \|\bcE\|_{2,1} + \langle \bcY_1, \bcZ' - \bcJ \rangle + \langle \bcY_2, \bcX - \bcD \ast_{\boldsymbol{L}} \bcJ - \bcE \rangle    \nonumber \\
&\qquad + \frac{\beta}{2} \big( \|\bcZ' - \bcJ\|_F^2 + \|\bcX - \bcD \ast_{\boldsymbol{L}} \bcJ - \bcE\|_F^2 \big),
\end{align}
where the tensors $\bcY_1$ and $\bcY_2$ comprise Lagrange multipliers and $\beta > 0$ is a penalty parameter. The optimization of \eqref{eqn:augform} can be done iteratively by minimizing $\cL_1$ with respect to $\bcZ'$, $\bcJ$ and $\bcE$ over one tensor at a time while keeping the others fixed.

\textbf{Updating $\bcZ'$:} When other variables are fixed, the problem of updating $\bcZ'$ can be written as
\begin{align*}
\bcZ'^{(t+1)} = \argmin_{\bcZ'} \|\bcZ'\|_{\ast} + \frac{\beta^{(t)}}{2} \|\bcB_1^{(t+1)} - \bcZ'\|_F^2,
\end{align*}
where $\bcB_1^{(t+1)} = \bcJ^{(t)} - \frac{\bcY_1^{(t)}}{\beta^{(t)}}$. We can optimize its equivalent problem:
\begin{align*}
\widebar{\bZ}'^{(t+1)} = \argmin_{\widebar{\bZ}'} \frac{1}{n_3} \Big( \|\widebar{\bZ}'\|_{\ast} + \frac{\beta^{(t)}}{2} \|\widebar{\bB}_1^{(t+1)} - \widebar{\bZ}'\|_F^2 \Big),
\end{align*}
where $\widebar{\bB}_1^{(t+1)} = \mathtt{bdiag} (\xoverline{\bcB}_1^{(t+1)})$ and $\xoverline{\bcB}_1^{(t+1)} = L(\bcB_1^{(t+1)})$. Since $\widebar{\bZ}'$ is a block diagonal matrix, we only need to update all the diagonal block matrices $\widebar{\bZ}'^{(k)}$ by
\begin{align}    \label{eqn:solZ}
\widebar{\bZ}'^{(k)^{(t+1)}} = \Upsilon_{\frac{1}{\beta^{(t)}}} (\widebar{\bB}_1^{(k)^{(t+1)}}),
\end{align}
where $\Upsilon_{\zeta} (\cdot)$ denotes singular value thresholding (SVT) operator \cite{CaiCS.SIAM2010}. Finally, we can compute $\bcZ'^{(t+1)} = L^{-1} (\xoverline{\bcZ}'^{(t+1)})$.

\textbf{Updating $\bcE$:} We can update $\bcE$ by solving
\begin{align*}
\bcE^{(t+1)} = \argmin_{\bcE} \lambda \|\bcE\|_{2,1} + \frac{\beta^{(t)}}{2} \|\bcB_2^{(t+1)} - \bcE\|_F^2,
\end{align*}
where $\bcB_2^{(t+1)} = \bcX - \bcD \ast_{\boldsymbol{L}} \bcJ^{(t)} + \frac{\bcY_2^{(t)}}{\beta^{(t)}}$. Then the closed-form solution for $\bcE$ is given by
\begin{align}    \label{eqn:solE}
\bcE_{(j)}^{(t+1)} = \begin{cases}
\frac{ \| \bcB_{2(j)}^{(t+1)} \|_F - \frac{\lambda}{\beta^{(t)}} } {\| \bcB_{2(j)}^{(t+1)} \|_F} \bcB_{2(j)}^{(t+1)}, & \text{if} ~ \| \bcB_{2(j)}^{(t+1)} \|_F > \frac{\lambda}{\beta^{(t)}}, \\
\mathbf{0}, & \text{otherwise},
\end{cases}
\end{align}
where $\bcB_{2(j)}^{(t+1)} = \bcB_2^{(t+1)}(:,j,:)$.

\textbf{Updating $\bcJ$:} Keeping other tensors in \eqref{eqn:augform} fixed, the subproblem of updating $\bcJ$ at the ($t+1$)-th iteration has the form
\begin{align}    \label{eqn:probJ}
\bcJ^{(t+1)} & = \argmin_{\bcJ} \|\bcP_1^{(t+1)} - \bcJ\|_F^2 + \|\bcP_2^{(t+1)} - \bcD \ast_{\boldsymbol{L}} \bcJ\|_F^2    \nonumber \\
& = (\bcD^H \ast_{\boldsymbol{L}} \bcD + \bcI_{r_{\bcX}})^{-1} \ast_{\boldsymbol{L}} \Big( \bcP_1^{(t+1)} + \bcD^H \ast_{\boldsymbol{L}} \bcP_2^{(t+1)} \Big),
\end{align}
where $\bcP_1^{(t+1)} = \bcZ'^{(t+1)} + \frac{\bcY_1^{(t)}}{\beta^{(t)}}$ and $\bcP_2^{(t+1)} = \bcX - \bcE^{(t+1)} + \frac{\bcY_2^{(t)}}{\beta^{(t)}}$. We can transform \eqref{eqn:probJ} into the transform domain and it can be shown that the $k$-th frontal slice of $\xoverline{\bcJ}^{(t+1)}$, denoted by $\widebar{\bJ}^{(k)^{(t+1)}}$, has the following closed-form solution:
\begin{align}    \label{eqn:solJ}
\widebar{\bJ}^{(k)^{(t+1)}} = \widebar{\bQ}^{(k)} \Big( \widebar{\bP}_1^{(k)^{(t+1)}} + (\widebar{\bD}^{(k)})^H \widebar{\bP}_2^{(k)^{(t+1)}} \Big),
\end{align}
where $\bcQ = (\bcD^H \ast_{\boldsymbol{L}} \bcD + \bcI_{r_{\bcX}})^{-1}$. The update for $\bcJ^{(t+1)}$ can be expressed as $\bcJ^{(t+1)} = L^{-1} (\xoverline{\bcJ}^{(t+1)})$.

After obtaining a solution $(\bcZ'_{\star}, \bcE_{\star})$ to the problem \eqref{eqn:ortlrrreducedprob}, the optimal solution to \eqref{eqn:ortlrrprob} in the manuscript can be recovered by $(\bcV_{\bcX} \ast_{\boldsymbol{L}} \bcZ'_{\star}, \bcE_{\star})$. The ADMM framework for solving \eqref{eqn:ortlrrprob} in the manuscript can be summarized in Algorithm~\ref{algo:ortlrr}. Since problem \eqref{eqn:convertform} is a convex problem, whose convergence behavior can be guaranteed by leveraging the results in \cite{LinLS.NIPS2011}.

\textbf{Complexity Analysis.} At each iteration, when updating $\bcJ^{(t+1)}$, the computational cost for the matrix product and inverse transform is $\cO(r_{\bcX} n_1 n_2 n_3 + r_{\bcX} (n_1 + n_2) n_3^2)$. The major cost of updating $\bcZ'^{(t+1)}$ includes $n_3$ SVDs on $r_{\bcX} \times n_2$ matrices at the cost of $r_{\bcX}^2 n_2 n_3$ and computing the inverse transform at the cost of $\cO(r_{\bcX} n_2 n_3^2)$. As for updating $\bcE^{(t+1)}$, the step of computing tensor product $\bcD \ast_{\boldsymbol{L}} \bcJ^{(t+1)}$ takes $\cO(r_{\bcX} n_1 n_2 n_3 + r_{\bcX} (n_1 + n_2) n_3^2)$. Thus the total cost of Algorithm~\ref{algo:ortlrr} is $\cO(r_{\bcX} n_1 n_2 n_3 + r_{\bcX} (n_1 + n_2) n_3^2)$ per iteration. We conclude this section by noting that for some special invertible linear transforms $L$, e.g., DFT, since the application of DFT on an $n_3$-dimensional vector requires $\cO(n_3 \log(n_3))$ operations, the per-iteration complexity is $\cO(r_{\bcX} n_1 n_2 n_3 + r_{\bcX} (n_1 + n_2) n_3 \log(n_3))$.

\algsetup{indent=0.2em}
\begin{algorithm}[ht]
\caption{Outlier-Robust Tensor LRR (OR-TLRR)}
\label{algo:ortlrr}%
\textbf{Input:} Data tensor $\bcX \in \R^{n_1 \times n_2 \times n_3}$, linear transform $L$, and parameter $\lambda$. \\
\textbf{Initialize:} $\bcD = \bcU_{\bcX} \ast_{\boldsymbol{L}} \bcS_{\bcX}$ with the skinny t-SVD $\bcU_{\bcX} \ast_{\boldsymbol{L}} \bcS_{\bcX} \ast_{\boldsymbol{L}} \bcV_{\bcX}^H$ of $\bcX$, $\bcZ^{(0)} = \bcJ^{(0)} = \bcY_1^{(0)} = \mathbf{0} \in \R^{r_{\bcX} \times n_2 \times n_3}$, $\bcE^{(0)} = \bcY_2^{(0)} = \mathbf{0} \in \R^{n_1 \times n_2 \times n_3}$, $\gamma = 1.1$, $\beta^{(0)} = 10^{-5}$, $\beta_{\mathrm{max}} = 10^8$, $\epsilon = 10^{-5}$, and $t = 0$.

\begin{algorithmic}[1]
\WHILE{not converged}
\STATE Fix other variables and update $\bcZ'$ via \eqref{eqn:solZ}.
\STATE Fix other variables and update $\bcE$ via \eqref{eqn:solE}.
\STATE Fix other variables and update $\bcJ$ via \eqref{eqn:solJ}.
\STATE Update Lagrange multipliers: \\
$\bcY_1^{(t+1)} = \bcY_1^{(t)} + \beta^{(t)} (\bcZ'^{(t+1)} - \bcJ^{(t+1)})$, \\
$\bcY_2^{(t+1)} = \bcY_2^{(t)} + \beta^{(t)} (\bcX - \bcD \ast_{\boldsymbol{L}} \bcJ^{(t+1)} - \bcE^{(t+1)})$.
\STATE $\beta^{(t+1)} = \min(\beta_{\mathrm{max}}, \gamma \beta^{(t)})$.
\STATE Check the convergence conditions:
\begin{displaymath}
\text{max} \left\{\!\!\!\!\begin{array}{l}
\|\bcZ'^{(t+1)} - \bcZ'^{(t)}\|_{\infty}, \|\bcJ^{(t+1)} - \bcJ^{(t)}\|_{\infty} \\
\|\bcE^{(t+1)} - \bcE^{(t)}\|_{\infty}, \|\bcZ'^{(t+1)} - \bcJ^{(t+1)}\|_{\infty} \\
\|\bcX - \bcD \ast_{\boldsymbol{L}} \bcJ^{(t+1)} - \bcE^{(t+1)}\|_{\infty}
\end{array}\!\!\!\!\!\right\} \leq \epsilon.
\end{displaymath}
\STATE $t = t+1$.
\ENDWHILE
\end{algorithmic}
\textbf{Output:} $\bcZ_{\star} = \bcV_{\bcX} \ast_{\boldsymbol{L}} \bcZ'^{(t+1)}$ and $\bcE_{\star} = \bcE^{(t+1)}$.
\end{algorithm}

\section{OPTIMIZATION DETAILS OF OR-TLRR WITH MISSING ENTRIES}
\label{sec:missalgodetail}

In this section, we present our algorithm to solve problem \eqref{eqn:ortlrrmissreducedprob} in the manuscript. By introducing two auxiliary variables $\bcJ$ and $\bcH$, we make the objective function of \eqref{eqn:ortlrrmissreducedprob} separable and reformulate \eqref{eqn:ortlrrmissreducedprob} as
\begin{align}    \label{eqn:convertmissform}
& \min_{\bcZ',\bcJ,\bcH,\bcE} \|\bcZ'\|_{\ast} + \lambda \|\mathscr{P}_{\mathbf{\Omega}}(\bcE)\|_{2,1}    \nonumber \\
&\quad \text{s.t.} \quad \bcZ' = \bcJ, \bcH = \bcD \ast_{\boldsymbol{L}} \bcJ + \bcE, \mathscr{P}_{\mathbf{\Omega}}(\bcX_{\text{miss}}) = \mathscr{P}_{\mathbf{\Omega}}(\bcH).
\end{align}
The corresponding augmented Lagrangian function of \eqref{eqn:convertmissform} is
\begin{align}    \label{eqn:augmissform}
&\cL_2(\bcZ',\bcJ,\bcH,\bcE,\bcY_1,\bcY_2,\beta) = \|\bcZ'\|_{\ast} + \lambda \|\mathscr{P}_{\mathbf{\Omega}}(\bcE)\|_{2,1} + \Psi_{\mathbf{\Omega}}(\bcX_{\mathrm{miss}} - \bcH) + \langle \bcY_1, \bcZ' - \bcJ \rangle + \langle \bcY_2, \bcH - \bcD \ast_{\boldsymbol{L}} \bcJ - \bcE \rangle     \nonumber \\
&\qquad\qquad + \frac{\beta}{2} \big( \|\bcZ' - \bcJ\|_F^2 + \|\bcH - \bcD \ast_{\boldsymbol{L}} \bcJ - \bcE\|_F^2 \big),
\end{align}
where $\Psi_{\mathbf{\Omega}}(\cdot)$ denotes the indicator function
\begin{displaymath}
\Psi_{\mathbf{\Omega}}(\bcA - \bcB) = \left\{ \begin{array}{ll}
0, & \text{if $\bcA_{\mathbf{\Omega}} = \bcB_{\mathbf{\Omega}}$,} \\
\infty, & \text{otherwise}.
\end{array} \right.
\end{displaymath}
The difference between the optimization of \eqref{eqn:augmissform} and that of \eqref{eqn:augform} mainly lies in the step of updating $\bcE$ and one additional step for updating $\bcH$. Concretely, the $\bcH$-subproblem is
\begin{align}    \label{eqn:probH}
\bcH^{(t+1)} = \argmin_{\bcH} \|\bcH - \bcD \ast_{\boldsymbol{L}} \bcJ^{(t)} - \bcE^{(t)} + \frac{\bcY_2^{(t)}}{\beta^{(t)}}\|_F^2 + \Psi_{\mathbf{\Omega}}(\bcX_{\mathrm{miss}} - \bcH).
\end{align}
Thus, we have $\bcH^{(t+1)} = \mathscr{P}_{\mathbf{\Omega}^c}(\bcD \ast_{\boldsymbol{L}} \bcJ^{(t)} + \bcE^{(t)} - \frac{\bcY_2^{(t)}}{\beta^{(t)}}) + \mathscr{P}_{\mathbf{\Omega}}(\bcX_{\mathrm{miss}})$, where $\mathbf{\Omega}^c$ is the complement of $\mathbf{\Omega}$. On the other hand, the variable $\bcE$ can be updated by solving
\begin{align}    \label{eqn:probE}
\bcE^{(t+1)} = \argmin_{\bcE} \lambda \|\mathscr{P}_{\mathbf{\Omega}}(\bcE)\|_{2,1} + \frac{\beta^{(t)}}{2} \|\bcB^{(t+1)} - \bcE\|_F^2,
\end{align}
where $\bcB^{(t+1)} = \bcH^{(t+1)} - \bcD \ast_{\boldsymbol{L}} \bcJ^{(t)} + \frac{\bcY_2^{(t)}}{\beta^{(t)}}$. Then the closed-form solution of $\bcE$ is given by
\begin{align*}
\bcE_{(j)}^{(t+1)} = \begin{cases}
\frac{ \| \bcW_{(j)} \odot \bcB_{(j)}^{(t+1)} \|_F - \frac{\lambda}{\beta^{(t)}} } {\| \bcW_{(j)} \odot \bcB_{(j)}^{(t+1)} \|_F} \bcW_{(j)} \odot \bcB_{(j)}^{(t+1)} + (1 - \bcW_{(j)}) \odot \bcB_{(j)}^{(t+1)}, ~ \text{if} ~ \| \bcW_{(j)} \odot \bcB_{(j)}^{(t+1)} \|_F > \frac{\lambda}{\beta^{(t)}}, \\
(1 - \bcW_{(j)}) \odot \bcB_{(j)}^{(t+1)}, \qquad \text{otherwise},
\end{cases}
\end{align*}
where $\bcB_{(j)}^{(t+1)} = \bcB^{(t+1)}(:,j,:)$. The implementation of the ADMM algorithm is outlined in Algorithm~\ref{algo:ortlrr-ewzf}. We dub this approach \emph{Outlier-Robust Tensor LRR by Entry-Wise Zero-Fill} (OR-TLRR-EWZF). We conclude this section by noting that we can also robustify TLRR \cite{ZhouLFLY.PAMI2021} by replacing $\lambda \|\mathscr{P}_{\mathbf{\Omega}}(\bcE)\|_{2,1}$ in \eqref{eqn:ortlrrmissprob} with $\lambda \|\mathscr{P}_{\mathbf{\Omega}}(\bcE)\|_1$ as a means of performing TLRR with missing data. Effectively, the overall optimization process also relies on Algorithm~\ref{algo:ortlrr-ewzf}, with the difference being that the closed-form solution of $\bcE$ can now be computed by
\begin{align*}
\bcE_{(j)}^{(t+1)} = \Pi_{\frac{\lambda}{\beta^{(t)}}} \Big( \bcW_{(j)} \odot \bcB_{(j)}^{(t+1)} \Big) + (1 - \bcW_{(j)}) \odot \bcB_{(j)}^{(t+1)},
\end{align*}
where $\Pi_{\frac{\lambda}{\beta^{(t)}}} (\cdot)$ is the soft-thresholding operator \cite{Donoho.TIT1995}. We call the resulting algorithm TLRR-EWZF in our experiments.

\algsetup{indent=0.2em}
\begin{algorithm}[t]
\caption{Outlier-Robust Tensor LRR by Entry-Wise Zero-Fill (OR-TLRR-EWZF)}
\label{algo:ortlrr-ewzf}%
\textbf{Input:} Partially observed data tensor $\mathscr{P}_{\mathbf{\Omega}}(\bcX) \in \R^{n_1 \times n_2 \times n_3}$, linear transform $L$, and parameter $\lambda$. \\
\textbf{Initialize:} $\bcD = \bcU_{\bcX} \ast_{\boldsymbol{L}} \bcS_{\bcX}$ with the skinny t-SVD $\bcU_{\bcX} \ast_{\boldsymbol{L}} \bcS_{\bcX} \ast_{\boldsymbol{L}} \bcV_{\bcX}^H$ of $\bcX_{\mathrm{miss}}$, $\bcZ^{(0)} = \bcJ^{(0)} = \bcY_1^{(0)} = \mathbf{0} \in \R^{r_{\bcX} \times n_2 \times n_3}$, $\bcE^{(0)} = \bcH^{(0)} = \bcY_2^{(0)} = \mathbf{0} \in \R^{n_1 \times n_2 \times n_3}$, $\gamma = 1.1$, $\beta^{(0)} = 10^{-5}$, $\beta_{\mathrm{max}} = 10^8$, $\epsilon = 10^{-5}$, and $t = 0$.

\begin{algorithmic}[1]
\WHILE{not converged}
\STATE Fix other variables and update $\bcH$ by solving \eqref{eqn:probH}.
\STATE Fix other variables and update $\bcZ'$ by solving:
\begin{align*}
\bcZ'^{(t+1)} = \argmin_{\bcZ'} \|\bcZ'\|_{\ast} + \frac{\mu^{(t)}}{2} \|\bcZ' - \bcJ^{(t)} + \frac{\bcY_1^{(t)}}{\beta^{(t)}}\|_F^2.
\end{align*}
\STATE Fix other variables and update $\bcE$ by solving \eqref{eqn:probE}.
\STATE Fix other variables and update $\bcJ$ by solving:
\begin{align*}
\bcJ^{(t+1)} = \argmin_{\bcJ} \|\bcZ'^{(t+1)} - \bcJ + \frac{\bcY_1^{(t)}}{\beta^{(t)}}\|_F^2 + \|\bcH^{(t+1)} - \bcD \ast_{\boldsymbol{L}} \bcJ - \bcE^{(t+1)} + \frac{\bcY_2^{(t)}}{\beta^{(t)}}\|_F^2.
\end{align*}
\STATE Update Lagrange multipliers: \\
$\bcY_1^{(t+1)} = \bcY_1^{(t)} + \beta^{(t)} (\bcZ'^{(t+1)} - \bcJ^{(t+1)})$, \\
$\bcY_2^{(t+1)} = \bcY_2^{(t)} + \beta^{(t)} (\bcH^{(t+1)} - \bcD \ast_{\boldsymbol{L}} \bcJ^{(t+1)} - \bcE^{(t+1)})$.
\STATE $\beta^{(t+1)} = \min(\beta_{\mathrm{max}}, \gamma \beta^{(t)})$.
\STATE Check the convergence conditions:
\begin{displaymath}
\text{max} \left\{\!\!\!\!\begin{array}{l}
\|\bcZ'^{(t+1)} - \bcZ'^{(t)}\|_{\infty}, \|\bcJ^{(t+1)} - \bcJ^{(t)}\|_{\infty} \\
\|\bcE^{(t+1)} - \bcE^{(t)}\|_{\infty}, \|\bcH^{(t+1)} - \bcH^{(t)}\|_{\infty} \\
\|\bcZ'^{(t+1)} - \bcJ^{(t+1)}\|_{\infty} \\
\|\bcH^{(t+1)} - \bcD \ast_{\boldsymbol{L}} \bcJ^{(t+1)} - \bcE^{(t+1)}\|_{\infty}
\end{array}\!\!\!\!\!\right\} \leq \epsilon.
\end{displaymath}
\STATE $t = t+1$.
\ENDWHILE
\end{algorithmic}
\textbf{Output:} $\bcZ_{\star} = \bcV_{\bcX} \ast_{\boldsymbol{L}} \bcZ'^{(t+1)}$ and $\bcE_{\star} = \bcE^{(t+1)}$.
\end{algorithm}

\section{PROOF OF THEOREM~\ref{thm:linearrep}}
\label{sec:prooflinearrep}

\begin{proof}
Let $\bx_j = \mathtt{vec} (\bcX_{(j)})$ and $\bX_j = \mathtt{squeeze} (\bcX_{(j)}) \in \R^{n_1 \times n_3}$. Then, we can obtain
\begin{align*}
\bx_j = \begin{bmatrix}
\bX_j \be_1 \\
\vdots \\
\bX_j \be_{n_3}
\end{bmatrix},
\end{align*}
where $\be_q \in \R^{n_3}$ is the $q$-th standard basis of $\R^{n_3}$ whose $q$-th entry is 1 and the rest is 0. Next, we define a tensor $\bcA \in \R^{n_1 \times p \times n_3}$ with $\bcA_{(j)} = \mathtt{ivec} (\ba_j)$, and let $\bA_j = \mathtt{squeeze} (\bcA_{(j)}) \in \R^{n_1 \times n_3}$, $j = 1, \dots, p$. Then we can write $\bx_j = \bA \bz_j$ as its equivalent form:
\begin{align*}
\begin{bmatrix}
\bX_j \be_1 \\
\vdots \\
\bX_j \be_{n_3}
\end{bmatrix}
=
\begin{bmatrix}
\bA_1 \be_1 & \ldots & \bA_p \be_1 \\
\vdots & \ddots & \vdots \\
\bA_1 \be_{n_3} & \ldots & \bA_p \be_{n_3}
\end{bmatrix}
\bz_j.
\end{align*}
For the $s$-th row of $\boldsymbol{L}$ ($s = 1, \dots, n_3$), we have
\begin{align}
\bX_j \boldsymbol{l}_{s,T}^T & = \sum_{q=1}^{n_3} \boldsymbol{L}_{s,q} \bX_j \be_q  \nonumber \\
& = \sum_{q=1}^{n_3} \boldsymbol{L}_{s,q} [\bA_1 \be_q, \dots, \bA_p \be_q] \bz_j  \nonumber \\
& = [\bA_1, \dots, \bA_p] \sum_{q=1}^{n_3} \boldsymbol{L}_{s,q}
\begin{bmatrix}
\be_q & &  \\
 & \ddots &  \\
 & & \be_q
\end{bmatrix}
\bz_j  \nonumber \\
& = [\bA_1, \dots, \bA_p]
\begin{bmatrix}
\boldsymbol{l}_{s,T}^T & &  \\
 & \ddots &  \\
 & & \boldsymbol{l}_{s,T}^T
\end{bmatrix}
\bz_j  \nonumber \\
& = [\bA_1 \boldsymbol{l}_{s,T}^T, \dots, \bA_p \boldsymbol{l}_{s,T}^T] \bz_j.
\end{align}
We further define
\begin{align*}
\bA_{\boldsymbol{l}_{s,T}} = [\bA_1 \boldsymbol{l}_{s,T}^T, \dots, \bA_p \boldsymbol{l}_{s,T}^T] \in \C^{n_1 \times n_3}
\end{align*}
and a tensor $\xoverline{\bcZ} \in \R^{p \times n_2 \times n_3}$ with $\xoverline{\bcZ}(:,j,k) = \bz_j \in \R^p$, $k = 1, \dots, n_3$. The tensor $\bcZ$ can be computed by setting $\bcZ = L^{-1} (\xoverline{\bcZ})$. Note that the block diagonal matrix $\widebar{\bA}$ can now be written as
\begin{align*}
\widebar{\bA} =
\begin{bmatrix}
\bA_{\boldsymbol{l}_{1,T}} &  &  \\
 & \ddots &  \\
 & & \bA_{\boldsymbol{l}_{n_3,T}}
\end{bmatrix}.
\end{align*}
Let $\widebar{\bX}_j = \mathtt{bdiag} (L(\bcX_{(j)}))$ and $\widebar{\bZ}_j = \mathtt{bdiag} (L(\bcZ_{(j)}))$, we can establish that
\begin{align*}
\widebar{\bX}_j & =
\begin{bmatrix}
\bX_j \boldsymbol{l}_{1,T}^T & &  \\
 & \ddots &  \\
 & & \bX_j \boldsymbol{l}_{n_3,T}^T
\end{bmatrix}  \nonumber \\
& =
\begin{bmatrix}
\bA_{\boldsymbol{l}_{1,T}} &  &  \\
 & \ddots &  \\
 & & \bA_{\boldsymbol{l}_{n_3,T}}
\end{bmatrix}
\begin{bmatrix}
\bz_j & &  \\
 & \ddots &  \\
 & & \bz_j
\end{bmatrix}  \nonumber \\
& = \widebar{\bA} \widebar{\bZ}_j.
\end{align*}
So for any $\bz_j$, we have
\begin{align}    \label{eqn:linearcomb}
\bcX_{(j)} = \bcA \ast_{\boldsymbol{L}} \bcZ_{(j)}, \quad \forall j = 1, \dots, n_2.
\end{align}
Conversely, for any tensor linear representation \eqref{eqn:linearcomb}, $\bcZ_{(j)}$ may not satisfy $\xoverline{\bcZ}(:,j,k) = \bz_j$, i.e., there may not exist $\bz_j$ such that $\bx_j = \bA \bz_j$. Thus, if the linear representation relationship in vector space holds, then there exists a feasible solution such that the tensor linear representation also holds. However, there is no guarantee that the reverse is also true.
\end{proof}

\section{PROOFS OF LEMMA~\ref{lem:cleansol} AND LEMMA~\ref{lem:sublemma}}
\label{sec:proofsubspacelemma}

Before proving Lemma~\ref{lem:cleansol}, we first give the following lemma which is used in the proof.

\begin{lemma}    \label{lemma:tlrrnoiselesssol}
Let $\bcA \in \R^{n_1 \times n_4 \times n_3}$ be a nonzero dictionary and $\bcX = \bcA \ast_{\boldsymbol{L}} \bcZ$ has feasible solutions, i.e., $\bcX \in \text{Range} (\bcA)$. Then the following problem
\begin{align*}
\min_{\bcZ} \|\bcZ\|_{\ast} \quad \text{s.t.} \quad \bcX = \bcA \ast_{\boldsymbol{L}} \bcZ
\end{align*}
has a unique minimizer defined by
\begin{align*}
\bcZ_{\star} = \bcA^{\dagger} \ast_{\boldsymbol{L}} \bcX.
\end{align*}
\end{lemma}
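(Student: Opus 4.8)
The plan is to diagonalize the t-product by passing to the transform domain, which splits the tensor problem into $n_3$ decoupled matrix nuclear-norm minimizations, and then to quote (and briefly re-derive) the corresponding matrix fact. First I would use the definitions: $\|\bcZ\|_{\ast} = \frac{1}{\tau}\sum_{i=1}^{n_3}\|\widebar{\bZ}^{(i)}\|_{\ast}$, and the constraint $\bcX = \bcA \ast_{\boldsymbol{L}} \bcZ$ is, by the definition of $\ast_{\boldsymbol{L}}$ together with the block-diagonal structure of $\widebar{\bA}$, equivalent to the frontal-slice-wise equations $\widebar{\bX}^{(i)} = \widebar{\bA}^{(i)}\widebar{\bZ}^{(i)}$ for $i = 1,\dots,n_3$. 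Since $\bcX \in \text{Range}(\bcA)$, each $\widebar{\bX}^{(i)}$ lies in the column space of $\widebar{\bA}^{(i)}$, so each of these $n_3$ constraints is feasible; hence the optimization separates into the independent problems $\min_{\widebar{\bZ}^{(i)}}\|\widebar{\bZ}^{(i)}\|_{\ast}$ subject to $\widebar{\bX}^{(i)} = \widebar{\bA}^{(i)}\widebar{\bZ}^{(i)}$.

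Next I would invoke the classical matrix result underlying LRR (see \cite{LiuLYSYM.PAMI2013}): if $\bC$ lies in the column space of $\bB$, then $\bB^{\dagger}\bC$ is the unique minimizer of $\|\bZ\|_{\ast}$ over all $\bZ$ with $\bB\bZ = \bC$. For completeness I would include its short proof: any feasible $\bZ$ can be written $\bZ = \bB^{\dagger}\bC + \bW$ with $\bB\bW = \bzero$, which forces $(\bB^{\dagger}\bC)^H\bW = \bzero$ (orthogonal column spaces); then $(\bB^{\dagger}\bC + \bW)^H(\bB^{\dagger}\bC + \bW) = (\bB^{\dagger}\bC)^H(\bB^{\dagger}\bC) + \bW^H\bW \succeq (\bB^{\dagger}\bC)^H(\bB^{\dagger}\bC)$, so by Weyl's inequality and the monotonicity of $\lambda \mapsto \sqrt{\lambda}$ the singular values of $\bB^{\dagger}\bC + \bW$ dominate those of $\bB^{\dagger}\bC$ termwise, giving $\|\bZ\|_{\ast} \geq \|\bB^{\dagger}\bC\|_{\ast}$ with equality only when $\bW^H\bW = \bzero$, i.e. $\bW = \bzero$. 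Applied slice by slice this yields $\widebar{\bZ}_{\star}^{(i)} = (\widebar{\bA}^{(i)})^{\dagger}\widebar{\bX}^{(i)}$.

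It then remains to reassemble these slices into the claimed tensor. I would check that the tensor $\bcB$ defined by $L(\bcB)^{(i)} = (\widebar{\bA}^{(i)})^{\dagger}$ satisfies the four defining identities of the pseudo-inverse: using $L(\bcA^H)^{(i)} = (\widebar{\bA}^{(i)})^H$ and the fact that $\ast_{\boldsymbol{L}}$ acts frontal-slice-wise in the transform domain, each identity reduces to the matrix Moore--Penrose equations for $\widebar{\bA}^{(i)}$, so $\bcB = \bcA^{\dagger}$. Consequently the $i$-th transformed frontal slice of $\bcA^{\dagger}\ast_{\boldsymbol{L}}\bcX$ is exactly $(\widebar{\bA}^{(i)})^{\dagger}\widebar{\bX}^{(i)} = \widebar{\bZ}_{\star}^{(i)}$, so the unique per-slice minimizers assemble to $\bcZ_{\star} = \bcA^{\dagger}\ast_{\boldsymbol{L}}\bcX$, and uniqueness of each slice together with separability of the objective gives uniqueness of $\bcZ_{\star}$.

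I expect the only real work to be bookkeeping: verifying that feasibility, the nuclear norm, and the pseudo-inverse all decouple correctly across frontal slices under $L$ (and, if $\bcZ$ is taken to be real-valued, noting that $\bcA^{\dagger}\ast_{\boldsymbol{L}}\bcX$ is already real since $\bcA$ and $\bcX$ are, so imposing the reality constraint changes nothing). There is no genuinely hard inequality here beyond the elementary matrix fact recalled above.
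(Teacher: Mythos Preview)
Your proposal is correct and follows essentially the same route as the paper: pass to the transform domain so the problem decouples into $n_3$ independent matrix nuclear-norm minimizations, apply the matrix LRR result from \cite{LiuLYSYM.PAMI2013} slice by slice, and reassemble via the slice-wise characterization of $\bcA^{\dagger}$. The only difference is that you additionally re-derive the matrix fact and explicitly verify that the tensor pseudo-inverse is the slice-wise matrix pseudo-inverse, whereas the paper simply cites the former and tacitly uses the latter.
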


\begin{proof}
For the problem
\begin{align}    \label{eqn:tnnminimize}
\min_{\bcZ} \|\bcZ\|_{\ast} \quad \text{s.t.} \quad \bcX = \bcA \ast_{\boldsymbol{L}} \bcZ
\end{align}
is equivalent to
\begin{align}    \label{eqn:rankminimize}
\min_{\widebar{\bZ}} \|\widebar{\bZ}\|_{\ast} \quad \text{s.t.} \quad \widebar{\bX} = \widebar{\bA} \widebar{\bZ}.
\end{align}
Since $\widebar{\bX}$, $\widebar{\bA}$ and $\widebar{\bZ}$ are three block-diagonal matrices, \eqref{eqn:rankminimize} can be divided into $n_3$ simple problems.
\begin{align}    \label{eqn:rankminimize2}
\min_{\widebar{\bZ}^{(k)}} \|\widebar{\bZ}^{(k)}\|_{\ast} \quad \text{s.t.} \quad \widebar{\bX}^{(k)} = \widebar{\bA}^{(k)} \widebar{\bZ}^{(k)}, k = 1, \dots, n_3.
\end{align}
Since $\bcX \in \text{Range} (\bcA)$, we have $\widebar{\bX}^{(k)} \in \text{Span} (\widebar{\bA}^{(k)})$, where $\text{Span} (\bB)$ denotes the linear space spanned by the columns of a matrix $\bB$ \cite{LiuLYSYM.PAMI2013}. Then, by \cite[Theorem 4.1]{LiuLYSYM.PAMI2013}, we have that $\widebar{\bZ}^{(k)} = (\widebar{\bA}^{(k)})^{\dagger} \widebar{\bX}^{(k)}$ is the unique optimal solution to \eqref{eqn:rankminimize2}. Hence, we can obtain the unique solution $\widebar{\bZ} = (\widebar{\bA})^{\dagger} \widebar{\bX}$ to problem \eqref{eqn:rankminimize}. Furthermore, the unique solution to \eqref{eqn:tnnminimize} is $\bcZ = \bcA^{\dagger} \ast_{\boldsymbol{L}} \bcX$. The proof is completed.
\end{proof}

\subsection{Proof of Lemma~\ref{lem:cleansol}}

\begin{proof}
Note that OR-TLRR problem \eqref{eqn:ortlrrprob} always has feasible solutions, e.g., $(\bcZ = \mathbf{0}, \bcE = \bcX)$. Thus, an optimal solution, denoted as $(\bcZ_{\star}, \bcE_{\star})$, exists. By Lemma~\ref{lemma:tlrrnoiselesssol}, we have
\begin{align*}
\bcZ_{\star} & = \argmin_{\bcZ} \| \bcZ \|_{\ast} \quad \text{s.t.} \quad \bcX - \bcE_{\star} = \bcX \ast_{\boldsymbol{L}} \bcZ  \\
& = \bcX^{\dagger} \ast_{\boldsymbol{L}} (\bcX - \bcE_{\star}),
\end{align*}
which simply leads to $\bcZ_{\star} \in \bcP_{\bcV_{\bcX}}^L$.
\end{proof}

\subsection{Proof of Lemma~\ref{lem:sublemma}}

\begin{proof}
Suppose the skinny t-SVDs of $\bcX$, $\bcL_0$ and $\bcE_0$ are $\bcU_{\bcX} \ast_{\boldsymbol{L}} \bcS_{\bcX} \ast_{\boldsymbol{L}} \bcV_{\bcX}^H$, $\bcU_0 \ast_{\boldsymbol{L}} \bcS_0 \ast_{\boldsymbol{L}} \bcV_0^H$ and $\bcU_{\bcE} \ast_{\boldsymbol{L}} \bcS_{\bcE} \ast_{\boldsymbol{L}} \bcV_{\bcE}^H$, respectively. Let $\bcU_0^{\perp}$ and $\bcU_{\bcE}^{\perp}$ be the orthogonal complements of $\bcU_0$ and $\bcU_{\bcE}$, respectively. Since $\text{Range} (\bcL_0)$ and $\text{Range} (\bcE_0)$ are independent, $\text{Span} (\widebar{\bU}_0^{(k)})$ and $\text{Span} (\widebar{\bU}_{\bcE}^{(k)})$ are independent, $\forall k = 1, \dots, n_3$, which further suggests that $[\widebar{\bU}_0^{(k)^{\perp}}, \widebar{\bU}_{\bcE}^{(k)^{\perp}}]$ spans the whole space. Thus the following system has feasible solutions $\widebar{\bY}_0^{(k)}$ and $\widebar{\bY}_{\bcE}^{(k)}$:
\begin{align*}
\widebar{\bU}_0^{(k)^{\perp}} (\widebar{\bU}_0^{(k)^{\perp}})^H \widebar{\bY}_0^{(k)} + \widebar{\bU}_{\bcE}^{(k)^{\perp}} (\widebar{\bU}_{\bcE}^{(k)^{\perp}})^H \widebar{\bY}_{\bcE}^{(k)} = \bI_{n_1}.
\end{align*}
Denote $\widebar{\bY}^{(k)} = \bI_{n_1} - \widebar{\bU}_0^{(k)^{\perp}} (\widebar{\bU}_0^{(k)^{\perp}})^H \widebar{\bY}_0^{(k)}$, then it can be verified that
\begin{align*}
(\widebar{\bL}_0^{(k)})^H \widebar{\bY}^{(k)} = (\widebar{\bL}_0^{(k)})^H ~ \text{and} ~ (\widebar{\bE}_0^{(k)})^H \widebar{\bY}^{(k)} = \mathbf{0}.
\end{align*}
We have that $(\widebar{\bX}^{(k)})^H \widebar{\bY}^{(k)} = (\widebar{\bL}_0^{(k)})^H$, $\forall k = 1, \dots, n_3$. Hence the system $\bcX^H \ast_{\boldsymbol{L}} \bcY = \bcL_0^H$ has a feasible solution $\bcY$, which simply leads to $\bcV_{\bcX} \ast_{\boldsymbol{L}} \bcS_{\bcX}^{\dagger} \ast_{\boldsymbol{L}} \bcU_{\bcX}^H \ast_{\boldsymbol{L}} \bcY \ast_{\boldsymbol{L}} \bcU_0 \ast_{\boldsymbol{L}} \bcS_0 = \bcV_0$ and thus $\bcV_0 \in \bcP_{\bcV_{\bcX}}^L$.
\end{proof}

\section{PROOFS OF THEOREM~\ref{thm:condortlrr}}
\label{sec:proofrecoverytheorem}

Now we prove Theorem~\ref{thm:condortlrr} in manuscript. Section~\ref{ssec:equivcond} identifies the necessary and sufficient conditions (called equivalent conditions) for any pair $(\bcZ, \bcE)$ to produce the exact recovery results. Section~\ref{ssec:dualcond} gives the dual conditions under which OR-TLRR succeeds. Then we construct a dual certificate in Section~\ref{ssec:dualcert} such that the dual conditions hold. Section~\ref{ssec:lemmaproof} gives the proofs of some lemmas which are used in Section~\ref{ssec:dualcert}. For convenience, we consider the general Bernoulli sampling. Specifically, for $\mathbf{\Theta}_0 = \{ j | \delta_j = 1 \}$, $\delta_j$ are i.i.d. Bernoulli variables and take value 1 with probability $\rho$ and 0 with probability $1 - \rho$. Thus, we denote the Bernoulli sampling by $\mathbf{\Theta}_0 \sim \text{Ber} (\rho)$.

\subsection{Equivalent Conditions}
\label{ssec:equivcond}

\begin{theorem}    \label{thm:zecond}
Let the pair $(\bcZ', \bcE')$ satisfy $\bcX = \bcX \ast_{\boldsymbol{L}} \bcZ' + \bcE'$. Denote the skinny t-SVD of $\bcZ'$ as $\bcU' \ast_{\boldsymbol{L}} \bcS' \ast_{\boldsymbol{L}} \bcV'^H$, and the column support of $\bcE'$ as $\mathbf{\Theta}'$. If $\bcP_{\bcV_0}^L (\bcZ') = \bcZ'$ and $\bcP_{\mathbf{\Theta}_0} (\bcE') = \bcE'$, then $\bcU' \ast_{\boldsymbol{L}} \bcU'^H = \bcV_0 \ast_{\boldsymbol{L}} \bcV_0^H$ and $\mathbf{\Theta}' = \mathbf{\Theta}_0$.
\end{theorem}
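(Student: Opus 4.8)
The plan is to establish the two equalities in turn, under the standing hypotheses of Theorem~\ref{thm:condortlrr} that are in force in this section, namely $\text{Range}(\bcL_0) = \text{Range}(\bcP_{\mathbf{\Theta}_0^{\perp}}(\bcL_0))$ and $\bcE_0(:,j,:) \notin \text{Range}(\bcL_0)$ for all $j \in \mathbf{\Theta}_0$. Write $r_0 = \text{rank}_t(\bcL_0)$. First I would record the trivial directions: $\bcP_{\bcV_0}^L(\bcZ') = \bcZ'$ says $\bcZ' = \bcV_0 \ast_{\boldsymbol{L}} \bcC$ with $\bcC := \bcV_0^H \ast_{\boldsymbol{L}} \bcZ' \in \R^{r_0 \times n_2 \times n_3}$, so $\text{Range}(\bcZ') \subseteq \text{Range}(\bcV_0)$ (equivalently $\bcU' \ast_{\boldsymbol{L}} \bcU'^H \preceq \bcV_0 \ast_{\boldsymbol{L}} \bcV_0^H$) and, by submultiplicativity of the tubal rank, $\text{rank}_t(\bcX \ast_{\boldsymbol{L}} \bcZ') \le \text{rank}_t(\bcZ') \le r_0$; while $\bcP_{\mathbf{\Theta}_0}(\bcE') = \bcE'$ is just $\mathbf{\Theta}' \subseteq \mathbf{\Theta}_0$. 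It then suffices to promote both inclusions to equalities, and the key is the intermediate identity $\text{Range}(\bcX \ast_{\boldsymbol{L}} \bcZ') = \text{Range}(\bcL_0)$.

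To prove that identity, I would restrict the feasibility constraint $\bcX = \bcX \ast_{\boldsymbol{L}} \bcZ' + \bcE'$ to the lateral slices indexed by $\mathbf{\Theta}_0^{c}$: since $\bcP_{\mathbf{\Theta}_0^{\perp}}(\bcE') = \mathbf{0}$ and $\bcP_{\mathbf{\Theta}_0^{\perp}}(\bcX) = \bcP_{\mathbf{\Theta}_0^{\perp}}(\bcL_0)$ (as $\bcE_0$ is supported on $\mathbf{\Theta}_0$), this gives $\bcP_{\mathbf{\Theta}_0^{\perp}}(\bcX \ast_{\boldsymbol{L}} \bcZ') = \bcP_{\mathbf{\Theta}_0^{\perp}}(\bcL_0)$. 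Every lateral slice of a tensor lies in its range, so the surviving columns $\bcL_0(:,j,:)$, $j \in \mathbf{\Theta}_0^{c}$, all lie in $\text{Range}(\bcX \ast_{\boldsymbol{L}} \bcZ')$; because $\text{Range}(\bcL_0) = \text{Range}(\bcP_{\mathbf{\Theta}_0^{\perp}}(\bcL_0))$ is precisely the tensor span of these columns, $\text{Range}(\bcL_0) \subseteq \text{Range}(\bcX \ast_{\boldsymbol{L}} \bcZ')$. On the other hand $\text{rank}_t(\bcX \ast_{\boldsymbol{L}} \bcZ') \le r_0$, the tubal dimension of $\text{Range}(\bcL_0)$, so the inclusion is an equality and, in particular, $\text{rank}_t(\bcX \ast_{\boldsymbol{L}} \bcZ') = r_0$.

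Both conclusions now follow. For the support: if some $j \in \mathbf{\Theta}_0$ had $\bcE'(:,j,:) = \mathbf{0}$, then $\bcX(:,j,:) = (\bcX \ast_{\boldsymbol{L}} \bcZ')(:,j,:) \in \text{Range}(\bcX \ast_{\boldsymbol{L}} \bcZ') = \text{Range}(\bcL_0)$, whence $\bcE_0(:,j,:) = \bcX(:,j,:) - \bcL_0(:,j,:) \in \text{Range}(\bcL_0)$ (since $\bcL_0(:,j,:) \in \text{Range}(\bcL_0)$ always), contradicting the hypothesis; hence $\mathbf{\Theta}_0 \subseteq \mathbf{\Theta}'$ and so $\mathbf{\Theta}' = \mathbf{\Theta}_0$. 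For the column space: $\text{rank}_t(\bcZ') \ge \text{rank}_t(\bcX \ast_{\boldsymbol{L}} \bcZ') = r_0$, while also $\text{rank}_t(\bcZ') \le r_0$, so $\text{rank}_t(\bcZ') = r_0 = \text{rank}_t(\bcV_0)$; combined with $\text{Range}(\bcZ') \subseteq \text{Range}(\bcV_0)$, a tensor subspace of tubal dimension $r_0$ sitting inside another of tubal dimension $r_0$ must coincide with it, so $\text{Range}(\bcZ') = \text{Range}(\bcV_0)$, i.e. $\bcU' \ast_{\boldsymbol{L}} \bcU'^H = \bcV_0 \ast_{\boldsymbol{L}} \bcV_0^H$.

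The step I expect to demand the most care is the bookkeeping with tubal ranks and tensor column spaces under $\ast_{\boldsymbol{L}}$ — submultiplicativity $\text{rank}_t(\bcX \ast_{\boldsymbol{L}} \bcZ') \le \text{rank}_t(\bcZ')$, the fact that every lateral slice of a tensor lies in its range, and the fact that a range of tubal dimension $r_0$ contained in another of tubal dimension $r_0$ must equal it. The cleanest way to discharge all of these is to pass to the transform domain and work with the $n_3$ decoupled identities $\widebar{\bX}^{(k)} = \widebar{\bX}^{(k)} \widebar{\bZ}'^{(k)} + \widebar{\bE}'^{(k)}$: there $\bcP_{\bcV_0}^L$, $\bcP_{\mathbf{\Theta}_0}$ and the support $\mathbf{\Theta}'$ all have transparent frontal-slice descriptions (the mode-$3$ transform does not mix lateral slices) and each assertion becomes elementary linear algebra, after which the tensor statements follow by reassembling the slices.
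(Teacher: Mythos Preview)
Your proof is correct and takes a genuinely different, more streamlined route than the paper. Two differences are worth noting. First, on hypotheses: the paper silently uses the stronger fact $\bcP_{\mathbf{\Theta}_0^{\perp}}(\bcX) = \bcL_0$ (i.e., $\bcP_{\mathbf{\Theta}_0}(\bcL_0)=\mathbf{0}$), whereas you work only from the Theorem~\ref{thm:condortlrr} hypotheses $\text{Range}(\bcL_0) = \text{Range}(\bcP_{\mathbf{\Theta}_0^{\perp}}(\bcL_0))$ and $\bcE_0(:,j,:)\notin\text{Range}(\bcL_0)$, which is the more honest reading of what is available. Second, on structure: your intermediate identity $\text{Range}(\bcX \ast_{\boldsymbol{L}} \bcZ') = \text{Range}(\bcL_0)$ is a clean pivot from which both conclusions drop out in a line or two. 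By contrast, for the support equality the paper derives $\bcP_{\mathbf{\Theta}_0}(\bcV_0^H)=\mathbf{0}$ from $\bcP_{\mathbf{\Theta}_0}(\bcL_0)=\mathbf{0}$, invokes $\bcV_0 \in \bcP_{\bcV_{\bcX}}^L$ (Lemma~\ref{lem:sublemma}), and runs a longer chain of identities through $\bcV_{\bcX}$ to conclude $\bcP_{\mathbf{\Theta}_0 \cap (\mathbf{\Theta}')^c}(\bcZ')=\mathbf{0}$ and hence $\bcP_{\mathbf{\Theta}_0 \cap (\mathbf{\Theta}')^c}(\bcE_0)=\mathbf{0}$. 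Your contradiction argument via $\bcE_0(:,j,:) = \bcX(:,j,:) - \bcL_0(:,j,:) \in \text{Range}(\bcL_0)$ is shorter and avoids any appeal to the t-SVD of $\bcX$. The one place where you and the paper are on identical footing is the step ``tubal rank $r_0$ plus $\text{Range}(\bcZ')\subseteq\text{Range}(\bcV_0)$ forces $\bcU'\ast_{\boldsymbol{L}}\bcU'^H=\bcV_0\ast_{\boldsymbol{L}}\bcV_0^H$'': the paper asserts exactly this sufficiency without further comment, and your remark that the transform-domain slice-by-slice argument is where the care goes is apt.
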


\begin{proof}
Since $\bcP_{\bcV_0}^L (\bcZ') = \bcZ'$ implies that $\bcU'$ is a subspace of $\bcV_0$, to prove $\bcU' \ast_{\boldsymbol{L}} \bcU'^H = \bcV_0 \ast_{\boldsymbol{L}} \bcV_0^H$, it is sufficient to prove $\text{rank}_t (\bcZ') \geq \text{rank}_t (\bcL_0)$. Note that $\bcP_{\mathbf{\Theta}_0^{\perp}} (\bcX) = \bcL_0$. We have
\begin{align*}
\bcL_0 = \bcP_{\mathbf{\Theta}_0^{\perp}} (\bcX) = \bcP_{\mathbf{\Theta}_0^{\perp}} (\bcX \ast_{\boldsymbol{L}} \bcZ' + \bcE') = \bcP_{\mathbf{\Theta}_0^{\perp}} (\bcX \ast_{\boldsymbol{L}} \bcZ') = \bcX \ast_{\boldsymbol{L}} \bcP_{\mathbf{\Theta}_0^{\perp}} (\bcZ').
\end{align*}
Therefore, we can establish that
\begin{align*}
\text{rank}_t (\bcL_0) = \text{rank}_t (\bcX \ast_{\boldsymbol{L}} \bcP_{\mathbf{\Theta}_0^{\perp}} (\bcZ')) \leq \text{rank}_t (\bcP_{\mathbf{\Theta}_0^{\perp}} (\bcZ')) \leq \text{rank}_t (\bcZ').
\end{align*}
Next, since $\bcP_{\mathbf{\Theta}_0} (\bcE') = \bcE'$ implies that $\mathbf{\Theta}' \subseteq \mathbf{\Theta}_0$, if we can prove $\mathbf{\Theta}_0 \cap (\mathbf{\Theta}')^c = \emptyset$, then we can have $\mathbf{\Theta}' = \mathbf{\Theta}_0$. In order to do so, we first have
\begin{align*}
\bcP_{\mathbf{\Theta}_0} (\bcL_0) = \mathbf{0} & \Rightarrow \bcU_0 \ast_{\boldsymbol{L}} \bcS_0 \ast_{\boldsymbol{L}} \bcP_{\mathbf{\Theta}_0} (\bcV_0^H) = \mathbf{0}  \\
& \Rightarrow \bcP_{\mathbf{\Theta}_0} (\bcV_0^H) = \mathbf{0}  \\
& \Rightarrow \bcV_0 \ast_{\boldsymbol{L}} \bcP_{\mathbf{\Theta}_0} (\bcV_0^H) = \mathbf{0}.
\end{align*}
Also, $\bcV_0 \in \bcP_{\bcV_{\bcX}}^L$ implies that $\bcV_0 \ast_{\boldsymbol{L}} \bcV_0^H = \bcV_0 \ast_{\boldsymbol{L}} \bcV_0^H \ast_{\boldsymbol{L}} \bcV_{\bcX} \ast_{\boldsymbol{L}} \bcV_{\bcX}^H$, which then leads to $\bcV_0 \ast_{\boldsymbol{L}} \bcV_0^H \ast_{\boldsymbol{L}} \bcV_{\bcX} \ast_{\boldsymbol{L}} \bcP_{\mathbf{\Theta}_0} (\bcV_{\bcX}^H) = \bcV_0 \ast_{\boldsymbol{L}} \bcP_{\mathbf{\Theta}_0} (\bcV_0^H) = \mathbf{0}$. Because $\mathbf{\Theta}_0 \cap (\mathbf{\Theta}')^c \subseteq \mathbf{\Theta}_0$, we can obtain $\bcV_0 \ast_{\boldsymbol{L}} \bcV_0^H \ast_{\boldsymbol{L}} \bcV_{\bcX} \ast \bcP_{\mathbf{\Theta}_0 \cap (\mathbf{\Theta}')^c} (\bcV_{\bcX}^H) = \mathbf{0}$. On the other hand, note that $\mathbf{\Theta}_0 \cap (\mathbf{\Theta}')^c \subseteq (\mathbf{\Theta}')^c$, we have the following:
\begin{align*}
& \bcX = \bcX \ast_{\boldsymbol{L}} \bcZ' + \bcE'  \\
\Rightarrow & \bcP_{\mathbf{\Theta}_0 \cap (\mathbf{\Theta}')^c} (\bcX) = \bcX \ast_{\boldsymbol{L}} \bcP_{\mathbf{\Theta}_0 \cap (\mathbf{\Theta}')^c} (\bcZ')  \\
\Rightarrow & \bcU_{\bcX} \ast_{\boldsymbol{L}} \bcS_{\bcX} \ast_{\boldsymbol{L}} \bcP_{\mathbf{\Theta}_0 \cap (\mathbf{\Theta}')^c} (\bcV_{\bcX}^H)  \\
&\qquad = \bcU_{\bcX} \ast_{\boldsymbol{L}} \bcS_{\bcX} \ast_{\boldsymbol{L}} \bcV_{\bcX}^H \ast_{\boldsymbol{L}} \bcP_{\mathbf{\Theta}_0 \cap (\mathbf{\Theta}')^c} (\bcZ')  \\
\Rightarrow & \bcP_{\mathbf{\Theta}_0 \cap (\mathbf{\Theta}')^c} (\bcV_{\bcX}^H) = \bcV_{\bcX}^{\ast} \ast_{\boldsymbol{L}} \bcP_{\mathbf{\Theta}_0 \cap (\mathbf{\Theta}')^c} (\bcZ')  \\
\Rightarrow & \bcV_0 \ast_{\boldsymbol{L}} \bcV_0^H \ast_{\boldsymbol{L}} \bcV_{\bcX} \ast_{\boldsymbol{L}} \bcP_{\mathbf{\Theta}_0 \cap (\mathbf{\Theta}')^c} (\bcV_{\bcX}^H)  \\
&\qquad = \bcV_0 \ast_{\boldsymbol{L}} \bcV_0^H \ast_{\boldsymbol{L}} \bcV_{\bcX} \ast_{\boldsymbol{L}} \bcV_{\bcX}^H \ast_{\boldsymbol{L}} \bcP_{\mathbf{\Theta}_0 \cap (\mathbf{\Theta}')^c} (\bcZ') \\
\Rightarrow & \bcV_0 \ast_{\boldsymbol{L}} \bcV_0^H \ast_{\boldsymbol{L}} \bcV_{\bcX} \ast_{\boldsymbol{L}} \bcV_{\bcX}^H \ast_{\boldsymbol{L}} \bcP_{\mathbf{\Theta}_0 \cap (\mathbf{\Theta}')^c} (\bcZ') = \mathbf{0}  \\
\overset{{\scalemath{0.5}{\tcircle{1}}}}{\Rightarrow} & \bcV_0 \ast_{\boldsymbol{L}} \bcV_0^H \ast_{\boldsymbol{L}} \bcP_{\mathbf{\Theta}_0 \cap (\mathbf{\Theta}')^c} (\bcZ') = \mathbf{0}  \\
\overset{{\scalemath{0.5}{\tcircle{2}}}}{\Rightarrow} & \bcP_{\mathbf{\Theta}_0 \cap (\mathbf{\Theta}')^c} (\bcZ') = \mathbf{0},
\end{align*}
where {\textcircled{\small{1}}} holds because $\bcZ' \in \bcP_{\bcV_{\bcX}}^L$ and {\textcircled{\small{2}}} holds because $\bcZ' = \bcV_0 \ast_{\boldsymbol{L}} \bcV_0^H \ast_{\boldsymbol{L}} \bcZ'$. It follows from $\bcX = \bcL_0 + \bcE_0$ that $\bcP_{\mathbf{\Theta}_0 \cap (\mathbf{\Theta}')^c} (\bcE_0) = \bcP_{\mathbf{\Theta}_0 \cap (\mathbf{\Theta}')^c} (\bcX - \bcL_0) = \bcP_{\mathbf{\Theta}_0 \cap (\mathbf{\Theta}')^c} (\bcX) = \bcX \ast_{\boldsymbol{L}} \bcP_{\mathbf{\Theta}_0 \cap (\mathbf{\Theta}')^c} (\bcZ') = \mathbf{0}$. Hence, $\mathbf{\Theta}_0 \cap (\mathbf{\Theta}')^c = \emptyset$.
\end{proof}

This theorem implies that the exact recovery is equivalent to two constraints: $\bcP_{\bcV_0}^L (\bcZ) = \bcZ$ and $\bcP_{\mathbf{\Theta}_0} (\bcE) = \bcE$. As will be seen in the following, this can largely facilitate the proof of Theorem~\ref{thm:condortlrr}.

\subsection{Dual Conditions}
\label{ssec:dualcond}

We now prove the dual conditions of the OR-TLRR problem \eqref{eqn:ortlrrprob}.

\begin{lemma} [Dual conditions for OR-TLRR]    \label{lem:condortlrr}
Assume that $\text{Range} (\bcL_0) = \text{Range} (\bcP_{\mathbf{\Theta}_0} (\bcL_0))$, $\bcE_0(:,j,:) \notin \text{Range} (\bcL_0)$ for $j \in \mathbf{\Theta}_0$, and $(\bcZ_{\star}, \bcE_{\star}) = (\bcV_0 \ast_{\boldsymbol{L}} \bcV_0^H + \bcH, \bcE_0 - \bcX \ast_{\boldsymbol{L}} \bcH)$ is an arbitrary solution to \eqref{eqn:ortlrrprob}. Let $\hat{\bcZ} = \bcV_0 \ast_{\boldsymbol{L}} \bcV_0^H + \bcP_{\mathbf{\Theta}_0} \bcP_{\bcV_0}^L (\bcH)$ and $\hat{\bcE} = \bcE_0 - \bcX \ast_{\boldsymbol{L}} \bcP_{\mathbf{\Theta}_0} \bcP_{\bcV_0}^L (\bcH)$. Suppose that $\|\bcP_{\hat{\mathbf{\Theta}}} \bcP_{\hat{\bcV}}\| < 1$, $\lambda > 2 \sqrt{\frac{\mu r}{n_2 \tau}}$, and $\hat{\bcL} = \bcX \ast_{\boldsymbol{L}} \hat{\bcZ}$ obeys the tensor column-incoherence condition. Then the column space of $\bcZ_{\star}$ is the same as the row space of $\bcL_0$ and $\bcE_{\star}$ has the same column indices as those of $\bcE_0$, provided that there exists a pair $(\bcW, \bcF)$ obeying
\begin{align*}
\bcW = \lambda (\bcX^{\ast} \ast_{\boldsymbol{L}} \cB(\bcE) + \bcF),
\end{align*}
with $\bcP_{\hat{\bcV}} (\bcW) = \mathbf{0}$, $\| \bcW \| \leq 1/2$, $\bcP_{\hat{\mathbf{\Theta}}} (\bcF) = \mathbf{0}$ and $\| \bcF \|_{2,\infty} \leq 1/2$.
\end{lemma}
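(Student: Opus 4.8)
The plan is to run the standard convex dual-certificate argument, using Theorem~\ref{thm:zecond} to reduce the geometric conclusion to the algebraic statement that any optimal pair coincides with the explicit pair $(\hat{\bcZ},\hat{\bcE})$. First I would record the reduction. Since $\bcX\ast_{\boldsymbol{L}}\bcV_0\ast_{\boldsymbol{L}}\bcV_0^H=\bcL_0$ --- which holds because $\bcE_0\ast_{\boldsymbol{L}}\bcV_0=\mathbf{0}$ under the model structure (the row space of $\bcL_0$ annihilates every column of $\bcE_0$, as one reads off slice-wise in the transform domain from the complementary column supports, cf.\ the hypotheses on $\bcL_0$ and $\bcE_0$) --- every feasible pair of \eqref{eqn:ortlrrprob} has the form $(\bcV_0\ast_{\boldsymbol{L}}\bcV_0^H+\bcH,\ \bcE_0-\bcX\ast_{\boldsymbol{L}}\bcH)$. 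One then checks directly that $(\hat{\bcZ},\hat{\bcE})$ is feasible, that $\bcP_{\bcV_0}^L(\hat{\bcZ})=\hat{\bcZ}$ (so its column space sits in $\text{Range}(\bcV_0)$), and that $\hat{\bcE}$ has column support exactly $\mathbf{\Theta}_0$ --- here the hypothesis $\bcE_0(:,j,:)\notin\text{Range}(\bcL_0)$ for $j\in\mathbf{\Theta}_0$ rules out cancellation, since $\bcX\ast_{\boldsymbol{L}}\bcP_{\mathbf{\Theta}_0}\bcP_{\bcV_0}^L(\bcH)$ is $\text{Range}(\bcL_0)$-valued columnwise. Thus $(\hat{\bcZ},\hat{\bcE})$ already satisfies the hypotheses of Theorem~\ref{thm:zecond}, so it suffices to show $\bcG_{\bcZ}:=\bcZ_{\star}-\hat{\bcZ}=\mathbf{0}$, and hence $\bcG_{\bcE}:=\bcE_{\star}-\hat{\bcE}=-\bcX\ast_{\boldsymbol{L}}\bcG_{\bcZ}=\mathbf{0}$.

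Next I would set up the subgradient inequality. Let $\hat{\bcU}_{\bcZ}\ast_{\boldsymbol{L}}\hat{\bcS}_{\bcZ}\ast_{\boldsymbol{L}}\hat{\bcV}^H$ be the skinny t-SVD of $\hat{\bcZ}$; its column space equals $\text{Range}(\bcV_0)$ and its row space is the common row space $\hat{\bcV}$ of $\hat{\bcZ}$ and $\hat{\bcL}=\bcX\ast_{\boldsymbol{L}}\hat{\bcZ}$, while $\hat{\mathbf{\Theta}}$, the support of $\hat{\bcE}$, equals $\mathbf{\Theta}_0$. Write $\hat{\bcT}$ for the associated tangent space. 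Choosing subgradients $\hat{\bcU}_{\bcZ}\ast_{\boldsymbol{L}}\hat{\bcV}^H+\bcR\in\partial\|\hat{\bcZ}\|_{\ast}$ (with $\bcR\in\hat{\bcT}^{\perp}$, $\|\bcR\|\le 1$, aligned with $\bcP_{\hat{\bcT}^{\perp}}(\bcG_{\bcZ})$) and $\cB(\hat{\bcE})+\bcF'\in\partial\|\hat{\bcE}\|_{2,1}$ (with $\bcP_{\hat{\mathbf{\Theta}}}(\bcF')=\mathbf{0}$, $\|\bcF'\|_{2,\infty}\le 1$, aligned with $\bcP_{\hat{\mathbf{\Theta}}^c}(\bcG_{\bcE})$), and using $\bcX\ast_{\boldsymbol{L}}\bcG_{\bcZ}=-\bcG_{\bcE}$ to transfer $\bcX$ across the inner product, convexity yields
\begin{align*}
\|\bcZ_{\star}\|_{\ast}+\lambda\|\bcE_{\star}\|_{2,1}\ \ge\ &\ \|\hat{\bcZ}\|_{\ast}+\lambda\|\hat{\bcE}\|_{2,1}+\|\bcP_{\hat{\bcT}^{\perp}}(\bcG_{\bcZ})\|_{\ast}+\lambda\|\bcP_{\hat{\mathbf{\Theta}}^c}(\bcG_{\bcE})\|_{2,1}\\
&\ +\big\langle\,\hat{\bcU}_{\bcZ}\ast_{\boldsymbol{L}}\hat{\bcV}^H-\lambda\,\bcX^{\ast}\ast_{\boldsymbol{L}}\cB(\hat{\bcE}),\ \bcG_{\bcZ}\,\big\rangle .
\end{align*}
The certificate identity $\bcW=\lambda\big(\bcX^{\ast}\ast_{\boldsymbol{L}}\cB(\hat{\bcE})+\bcF\big)$ rewrites the last inner product as $\langle\hat{\bcU}_{\bcZ}\ast_{\boldsymbol{L}}\hat{\bcV}^H-\bcW+\lambda\bcF,\ \bcG_{\bcZ}\rangle$; since $\hat{\bcU}_{\bcZ}\ast_{\boldsymbol{L}}\hat{\bcV}^H$ lies in $\hat{\bcT}$, $\bcP_{\hat{\bcV}}(\bcW)=\mathbf{0}$ with $\|\bcW\|\le 1/2$, and $\bcP_{\hat{\mathbf{\Theta}}}(\bcF)=\mathbf{0}$ with $\|\bcF\|_{2,\infty}\le 1/2$, this inner product is bounded below by $-\tfrac12\|\bcP_{\hat{\bcT}^{\perp}}(\bcG_{\bcZ})\|_{\ast}-\tfrac{\lambda}{2}\|\bcP_{\hat{\mathbf{\Theta}}^c}(\bcG_{\bcE})\|_{2,1}$. (It is exactly the admissibility of such a $(\bcW,\bcF)$ that forces $\lambda>2\sqrt{\mu r/(n_2\tau)}$: the $\ell_{2,\infty}$-mass of the low-rank sign $\hat{\bcU}_{\bcZ}\ast_{\boldsymbol{L}}\hat{\bcV}^H$ is controlled by $\sqrt{\mu r/(n_2\tau)}$ via the column-incoherence of $\hat{\bcL}$ and must be absorbed into $\bcF$.) Combining, the optimality of $(\bcZ_{\star},\bcE_{\star})$ together with the feasibility of $(\hat{\bcZ},\hat{\bcE})$ forces $\|\bcP_{\hat{\bcT}^{\perp}}(\bcG_{\bcZ})\|_{\ast}=0$ and $\|\bcP_{\hat{\mathbf{\Theta}}^c}(\bcG_{\bcE})\|_{2,1}=0$, i.e.\ $\bcG_{\bcZ}\in\hat{\bcT}$ and $\text{supp}(\bcG_{\bcE})\subseteq\hat{\mathbf{\Theta}}=\mathbf{\Theta}_0$.

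Finally I would close the loop with the injectivity hypothesis $\|\bcP_{\hat{\mathbf{\Theta}}}\bcP_{\hat{\bcV}}\|<1$. The inclusion $\text{supp}(\bcG_{\bcE})\subseteq\mathbf{\Theta}_0$ already yields $\bcP_{\mathbf{\Theta}_0}(\bcE_{\star})=\bcE_{\star}$, so it remains to obtain $\bcP_{\bcV_0}^L(\bcZ_{\star})=\bcZ_{\star}$. From $\bcG_{\bcE}=-\bcX\ast_{\boldsymbol{L}}\bcG_{\bcZ}$ being column-supported on $\mathbf{\Theta}_0$ we get $\bcX\ast_{\boldsymbol{L}}\bcP_{\mathbf{\Theta}_0^{\perp}}(\bcG_{\bcZ})=\mathbf{0}$, and Lemma~\ref{lem:cleansol} gives $\bcG_{\bcZ}\in\bcP_{\bcV_{\bcX}}^L$; these two facts pin $\bcG_{\bcZ}$ down to the intersection of $\hat{\bcT}$ with the set of tensors column-supported on $\mathbf{\Theta}_0=\hat{\mathbf{\Theta}}$, and the bound $\|\bcP_{\hat{\mathbf{\Theta}}}\bcP_{\hat{\bcV}}\|<1$ makes that intersection trivial. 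Hence $\bcG_{\bcZ}=\mathbf{0}$, $(\bcZ_{\star},\bcE_{\star})=(\hat{\bcZ},\hat{\bcE})$, and Theorem~\ref{thm:zecond} delivers the conclusion. I expect the main obstacle to be the dual-certificate bookkeeping of the second paragraph --- transporting the $\ell_{2,1}$-sign $\cB(\hat{\bcE})$ correctly through the adjoint of the constraint map (the object written $\bcX^{\ast}\ast_{\boldsymbol{L}}(\cdot)$) and verifying that the cross-term cancels against $(\bcW,\bcF)$ within the stated norm budgets --- together with the final subspace-geometry step, where one must track carefully which tensor lives in which of the relevant subspaces ($\hat{\bcT}$ and the column space $\text{Range}(\bcV_0)$ of $\hat{\bcZ}$, the row space $\hat{\bcV}$, and the column-support space $\hat{\mathbf{\Theta}}$, all contained in $\bcP_{\bcV_{\bcX}}^L$) to turn $\|\bcP_{\hat{\mathbf{\Theta}}}\bcP_{\hat{\bcV}}\|<1$ into $\bcG_{\bcZ}=\mathbf{0}$. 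The explicit construction of the certificate $(\bcW,\bcF)$ --- where the Bernoulli randomness $\mathbf{\Theta}_0\sim\text{Ber}(\rho)$ and the incoherence/ambiguity conditions enter --- is deferred to the later subsections, as the excerpt indicates.
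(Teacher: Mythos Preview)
Your overall architecture is right---it is the standard dual--certificate/subgradient argument, the same as the paper's---but two steps are mis-specified and would not go through as written.

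\textbf{The subgradient inequality and the role of $\lambda>2\sqrt{\mu r/(n_2\tau)}$.} You treat $(\bcW,\bcF)$ as if it were a classical RPCA certificate with $\bcP_{\hat{\bcT}}(\bcW)\approx\hat{\bcU}_{\bcZ}\ast_{\boldsymbol{L}}\hat{\bcV}^H$, so that the low-rank sign is ``absorbed into $\bcF$''. That is not what the hypotheses say: here $\bcP_{\hat{\bcV}}(\bcW)=\mathbf{0}$ only, and $\hat{\bcU}_{\bcZ}\ast_{\boldsymbol{L}}\hat{\bcV}^H\in\bcP_{\hat{\bcV}}$, so the certificate is \emph{orthogonal} to the sign and cannot cancel it. Consequently the bound ``$\langle\hat{\bcU}_{\bcZ}\ast_{\boldsymbol{L}}\hat{\bcV}^H-\bcW+\lambda\bcF,\bcG_{\bcZ}\rangle\ge -\tfrac12\|\bcP_{\hat{\bcT}^\perp}(\bcG_{\bcZ})\|_\ast-\tfrac{\lambda}{2}\|\bcP_{\hat{\mathbf\Theta}^c}(\bcG_{\bcE})\|_{2,1}$'' is not justified: the piece $\langle\hat{\bcU}_{\bcZ}\ast_{\boldsymbol{L}}\hat{\bcV}^H,\bcG_{\bcZ}\rangle$ survives. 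The paper handles it differently: since $\bcP_{\hat{\bcU}}=\bcP_{\bcV_0}^L$, one has $\langle\hat{\bcU}\ast_{\boldsymbol{L}}\hat{\bcV}^H,\bcG_{\bcZ}\rangle=\langle\hat{\bcU}\ast_{\boldsymbol{L}}\hat{\bcV}^H,\bcP_{\mathbf\Theta_0^\perp}(\bcH)\rangle$, and this is bounded in magnitude by $\|\hat{\bcU}\ast_{\boldsymbol{L}}\hat{\bcV}^H\|_{2,\infty}\,\|\bcP_{\mathbf\Theta_0^\perp}(\bcH)\|_{2,1}\le\sqrt{\mu r/(n_2\tau)}\,\|\bcP_{\mathbf\Theta_0^\perp}(\bcH)\|_{2,1}$ using the column-incoherence of $\hat{\bcL}$. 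It is \emph{this} residual that is beaten by the positive term $\lambda\|\bcP_{\mathbf\Theta_0^\perp}(\bcH)\|_{2,1}$ coming from the $\hat{\bcC}$-choice, yielding the coefficient $\lambda/2-\sqrt{\mu r/(n_2\tau)}>0$; that is where the hypothesis $\lambda>2\sqrt{\mu r/(n_2\tau)}$ actually enters, not as an ``admissibility'' condition on $(\bcW,\bcF)$.

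\textbf{The injectivity step.} Your claim that $\|\bcP_{\hat{\mathbf\Theta}}\bcP_{\hat{\bcV}}\|<1$ makes $\hat{\bcT}\cap\bcP_{\hat{\mathbf\Theta}}$ trivial is false: any tensor of the form $\bcV_0\ast_{\boldsymbol{L}}\bcV_0^H\ast_{\boldsymbol{L}}\bcP_{\mathbf\Theta_0}(\bcA)$ lies in $\bcP_{\hat{\bcU}}\subset\hat{\bcT}$ and is column-supported on $\hat{\mathbf\Theta}=\mathbf\Theta_0$. The hypothesis only forces $\bcP_{\hat{\bcV}}\cap\bcP_{\hat{\mathbf\Theta}}=\{\mathbf 0\}$. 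The paper therefore does not argue about $\bcG_{\bcZ}$ directly; instead it shows, from the vanishing of the two positive terms, that $\bcP_{\bcV_0^\perp}^L(\bcH)\in\bcP_{\hat{\bcV}}$ and $\bcP_{\mathbf\Theta_0^\perp}(\bcH)=\mathbf 0$, hence $\bcP_{\bcV_0^\perp}^L(\bcH)\in\bcP_{\hat{\bcV}}\cap\bcP_{\hat{\mathbf\Theta}}=\{\mathbf 0\}$. This gives $\bcH\in\bcP_{\bcV_0}^L\cap\bcP_{\mathbf\Theta_0}$ (so $\bcZ_\star$ and $\bcE_\star$ satisfy the hypotheses of Theorem~\ref{thm:zecond}), and incidentally $\bcG_{\bcZ}=\bcH_2=\mathbf 0$. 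Your route can be repaired by applying the injectivity to $\bcP_{\bcV_0^\perp}^L(\bcG_{\bcZ})=\bcP_{\bcV_0^\perp}^L(\bcH)$ rather than to $\bcG_{\bcZ}$ itself.
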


\begin{proof}
The subgradients of tensor nuclear norm and tensor $\ell_{2,1}$ norm can be written as
\begin{align*}
& \partial_{\hat{\bcZ}} \| \bcZ \|_{\ast} = \{ \hat{\bcU} \ast_{\boldsymbol{L}} \hat{\bcV}^H + \hat{\bcQ} | \bcP_{\hat{\bcT}} (\hat{\bcQ}) = \mathbf{0}, \| \hat{\bcQ} \| \leq 1 \},  \\
& \partial_{\hat{\bcE}} \| \bcE \|_{2,1} = \{ \cB(\hat{\bcE}) + \hat{\bcC} | \bcP_{\hat{\mathbf{\Theta}}} (\hat{\bcC}) = \mathbf{0}, \| \hat{\bcC} \|_{2,\infty} \leq 1 \}.
\end{align*}
We denote the skinny t-SVDs of $\hat{\bcZ}$ and $\hat{\bcL}$ as $\hat{\bcU} \ast_{\boldsymbol{L}} \hat{\bcS} \ast_{\boldsymbol{L}} \hat{\bcV}^H$ and $\bcU_{\hat{\bcL}} \ast_{\boldsymbol{L}} \bcS_{\hat{\bcL}} \ast_{\boldsymbol{L}} \bcV_{\hat{\bcL}}^H$, respectively. Since $\hat{\bcL} = \bcX \ast_{\boldsymbol{L}} \hat{\bcZ}$, we have
\begin{align*}
\bcV_{\hat{\bcL}}^H = \bcS_{\hat{\bcL}}^{\dagger} \ast_{\boldsymbol{L}} \bcU_{\hat{\bcL}}^H \ast_{\boldsymbol{L}} \bcX \ast_{\boldsymbol{L}} \hat{\bcU} \ast_{\boldsymbol{L}} \hat{\bcS} \ast_{\boldsymbol{L}} \hat{\bcV}^H,
\end{align*}
which implies that the horizontal slices, i.e., the tensor rows of $\hat{\bcV}^H$ span the tensor rows of $\bcV_{\hat{\bcL}}^H$. Notice that $\bcV_{\hat{\bcL}}^H$ is row orthonormal, there exists a row orthonormal tensor $\bcR$ such that $\hat{\bcV}^H = \bcR \ast_{\boldsymbol{L}} \bcV_{\hat{\bcL}}^H$. Then we have
\begin{align*}
\| \hat{\bcU} \ast \hat{\bcV}^H \|_{2,\infty} & = \max_j \| \hat{\bcU} \ast_{\boldsymbol{L}} \hat{\bcV}^H \ast_{\boldsymbol{L}} \mathring{\boldsymbol{\ce}}_j \|_F = \max_j \| \hat{\bcV}^H \ast_{\boldsymbol{L}} \mathring{\boldsymbol{\ce}}_j \|_F  \\
& = \max_j \| \bcR \ast_{\boldsymbol{L}} \bcV_{\hat{\bcL}}^H \ast_{\boldsymbol{L}} \mathring{\boldsymbol{\ce}}_j \|_F \leq \max_j \| \bcV_{\hat{\bcL}}^H \ast_{\boldsymbol{L}} \mathring{\boldsymbol{\ce}}_j \|_F \leq \sqrt{\frac{\mu r}{n_2 \tau}}.
\end{align*}
Let $\bcH_1 = \bcP_{\mathbf{\Theta}_0} \bcP_{\bcV_0}^L (\bcH)$ and $\bcH_2 = \bcP_{\mathbf{\Theta}_0^{\perp}} \bcP_{\bcV_0}^L (\bcH) + \bcP_{\mathbf{\Theta}_0^{\perp}} \bcP_{\bcV_0^{\perp}}^L (\bcH) + \bcP_{\mathbf{\Theta}_0} \bcP_{\bcV_0^{\perp}}^L (\bcH)$. Note that $(\hat{\bcZ}, \hat{\bcE})$ satisfies $\bcX = \bcX \ast_{\boldsymbol{L}} \hat{\bcZ} + \hat{\bcE}$, $\bcP_{\bcV_0}^L (\hat{\bcZ}) = \hat{\bcZ}$ and $\bcP_{\mathbf{\Theta}_0} (\hat{\bcE}) = \hat{\bcE}$. Then by Theorem~\ref{thm:zecond}, we have $\bcP_{\hat{\bcU}} = \bcP_{\bcV_0}^L$ and $\hat{\mathbf{\Theta}} = \mathbf{\Theta}_0$. We can obtain that
\begin{align*}
& \| \bcZ_{\star} \|_{\ast} + \lambda \| \bcE_{\star} \|_{2,1} - \| \hat{\bcZ} \|_{\ast} - \lambda \| \hat{\bcE} \|_{2,1}  \\
\geq & \langle \hat{\bcU} \ast_{\boldsymbol{L}} \hat{\bcV}^H + \hat{\bcQ}, \bcZ_{\star} - \hat{\bcZ} \rangle + \lambda \langle \cB(\hat{\bcE}) + \hat{\bcC}, \bcE_{\star} - \hat{\bcE} \rangle  \\
= & \langle \hat{\bcU} \ast_{\boldsymbol{L}} \hat{\bcV}^H + \hat{\bcQ}, \bcH_2 \rangle - \lambda \langle \cB(\hat{\bcE}) + \hat{\bcC}, \bcX^H \ast_{\boldsymbol{L}} \bcH_2 \rangle  \\
= & \langle \hat{\bcU} \ast_{\boldsymbol{L}} \hat{\bcV}^H, \bcP_{\mathbf{\Theta}_0^{\perp}} (\bcH) \rangle + \langle \hat{\bcQ}, \bcP_{\bcV_0^{\perp}}^L (\bcH) \rangle - \lambda \langle \cB(\hat{\bcE}), \bcX \ast_{\boldsymbol{L}} \bcP_{\bcV_0^{\perp}}^L (\bcH) \rangle - \lambda \langle \hat{\bcC}, \bcX \ast_{\boldsymbol{L}} \bcP_{\mathbf{\Theta}_0^{\perp}} (\bcH) \rangle  \\
= & \langle \hat{\bcU} \ast_{\boldsymbol{L}} \hat{\bcV}^H, \bcP_{\mathbf{\Theta}_0^{\perp}} (\bcH) \rangle + \langle \hat{\bcQ}, \bcP_{\bcV_0^{\perp}}^L (\bcH) \rangle - \lambda \langle \bcX^H \ast_{\boldsymbol{L}} \cB(\hat{\bcE}), \bcP_{\bcV_0^{\perp}}^L (\bcH) \rangle - \lambda \langle \bcX^H \ast_{\boldsymbol{L}} \hat{\bcC}, \bcP_{\mathbf{\Theta}_0^{\perp}} (\bcH) \rangle.
\end{align*}

By the duality between the tensor nuclear norm and the tensor spectral norm, there exists a tensor $\bcQ_0$ such that $\langle \bcQ_0, \bcP_{\hat{\bcV}^{\perp}} \bcP_{\bcV_0^{\perp}}^L (\bcH) \rangle = \| \bcP_{\hat{\bcV}^{\perp}} \bcP_{\bcV_0^{\perp}}^L (\bcH) \|_{\ast}$ and $\| \bcQ_0 \| = 1$. We set $\hat{\bcQ} = \bcP_{\bcV_0^{\perp}}^L \bcP_{\hat{\bcV}^{\perp}} (\bcQ_0) \in \bcP_{\hat{\bcT}^{\perp}}$. Similarly, thanks to the duality between the $\ell_{2,1}$ norm and $\ell_{2,\infty}$ norm, we can pick $\hat{\bcC} \in \bcP_{\hat{\mathbf{\Theta}}^{\perp}}$ such that $\langle \bcX^H \ast_{\boldsymbol{L}} \hat{\bcC}, \bcP_{\mathbf{\Theta}_0^{\perp}} (\bcH) \rangle = - \| \bcP_{\mathbf{\Theta}_0^{\perp}} (\bcH) \|_{2,1}$. On the other hand, we can establish that
\begin{align*}
& |\langle \bcX^H \ast_{\boldsymbol{L}} \cB(\hat{\bcE}), \bcP_{\bcV_0^{\perp}}^L (\bcH) \rangle|  \\
= & |\langle \bcW - \lambda \bcF, \bcP_{\bcV_0^{\perp}}^L (\bcH) \rangle|  \\
\leq & |\langle \bcW, \bcP_{\bcV_0^{\perp}}^L (\bcH) \rangle| + \lambda |\langle \bcF, \bcP_{\bcV_0^{\perp}}^L (\bcH) \rangle|  \\
\leq & \frac{1}{2} \| \bcP_{\hat{\bcV}^{\perp}} \bcP_{\bcV_0^{\perp}}^L (\bcH) \|_{\ast} + \frac{\lambda}{2} \| \bcP_{\hat{\mathbf{\Theta}}^{\perp}} \bcP_{\bcV_0^{\perp}}^L (\bcH) \|_{2,1},
\end{align*}
where the last inequality holds because $\bcW \in \bcP_{\hat{\bcV}^{\perp}}$ and $\bcF \in \bcP_{\hat{\mathbf{\Theta}}^{\perp}}$. Hence, we have
\begin{align*}
& \| \bcZ_{\star} \|_{\ast} + \lambda \| \bcE_{\star} \|_{2,1} - \| \hat{\bcZ} \|_{\ast} - \lambda \| \hat{\bcE} \|_{2,1}  \\
\geq & \| \bcP_{\hat{\bcV}^{\perp}} \bcP_{\bcV_0^{\perp}}^L (\bcH) \|_{\ast} - \sqrt{\frac{\mu r}{n_2 \tau}} \| \bcP_{\mathbf{\Theta}_0^{\perp}} (\bcH) \|_{2,1} + \lambda \| \bcP_{\mathbf{\Theta}_0^{\perp}} (\bcH) \|_{2,1} - \frac{1}{2} \| \bcP_{\hat{\bcV}^{\perp}} \bcP_{\bcV_0^{\perp}}^L (\bcH) \|_{\ast} - \frac{\lambda}{2} \| \bcP_{\hat{\mathbf{\Theta}}^{\perp}} \bcP_{\bcV_0^{\perp}}^L (\bcH) \|_{2,1}   \\
= & \frac{1}{2} \| \bcP_{\hat{\bcV}^{\perp}} \bcP_{\bcV_0^{\perp}}^L (\bcH) \|_{\ast} - \sqrt{\frac{\mu r}{n_2 \tau}} \| \bcP_{\mathbf{\Theta}_0^{\perp}} (\bcH) \|_{2,1} + \lambda \| \bcP_{\mathbf{\Theta}_0^{\perp}} (\bcH) \|_{2,1} - \frac{\lambda}{2} \| \bcP_{\hat{\mathbf{\Theta}}^{\perp}} \bcP_{\bcV_0^{\perp}}^L (\bcH) \|_{2,1}  \\
\geq & \frac{1}{2} \| \bcP_{\hat{\bcV}^{\perp}} \bcP_{\bcV_0^{\perp}}^L (\bcH) \|_{\ast} + \Big( \frac{\lambda}{2} - \sqrt{\frac{\mu r}{n_2 \tau}} \Big) \| \bcP_{\mathbf{\Theta}_0^{\perp}} (\bcH) \|_{2,1}.
\end{align*}

Note that $(\bcZ_{\star}, \bcE_{\star}) = (\bcV_0 \ast_{\boldsymbol{L}} \bcV_0^H + \bcH, \bcE_0 - \bcX \ast_{\boldsymbol{L}} \bcH)$ is an arbitrary optimal solution, the above inequality shows $\| \bcP_{\hat{\bcV}^{\perp}} \bcP_{\bcV_0^{\perp}}^L (\bcH) \|_{\ast} = \| \bcP_{\hat{\mathbf{\Theta}}^{\perp}} \bcP_{\bcV_0^{\perp}}^L (\bcH) \|_{2,1} = \| \bcP_{\mathbf{\Theta}_0^{\perp}} (\bcH) \|_{2,1} = 0$, i.e., $\bcP_{\bcV_0^{\perp}}^L (\bcH) \in \bcP_{\hat{\bcV}} \cap \bcP_{\hat{\mathbf{\Theta}}}$ and $\bcH \in \bcP_{\mathbf{\Theta}_0}$. By the assumption that $\| \bcP_{\hat{\mathbf{\Theta}}} \bcP_{\hat{\bcV}} \| < 1$, we have $\bcP_{\bcV_0^{\perp}}^L (\bcH) = \mathbf{0}$, and so $\bcH \in \bcP_{\bcV_0}^L$. Thus, the solution $(\bcZ_{\star}, \bcE_{\star})$ also satisfies $\bcP_{\bcV_0}^L (\bcZ_{\star}) = \bcZ_{\star}$ and $\bcP_{\mathbf{\Theta}_0} (\bcE_{\star}) = \bcE_{\star}$. It can be concluded that the solution $(\bcZ_{\star}, \bcE_{\star})$ also exactly recovers the row space of $\bcL_0$ and the column support of $\bcE_0$.
\end{proof}

Lemma~\ref{lem:condortlrr} implies that if we can find a dual certificate $\bcW$ obeying
\begin{equation}    \label{eqn:dualcert}
\begin{split}
& (a) ~ \bcW \in \bcP_{\hat{\bcV}^{\perp}},  \\
& (b) ~ \bcP_{\hat{\mathbf{\Theta}}} (\bcW) = \lambda \bcX^H \ast_{\boldsymbol{L}} \cB(\hat{\bcE}),  \\
& (c) ~ \| \bcW \| \leq 1/2,  \\
& (d) ~ \| \bcP_{\hat{\mathbf{\Theta}}^{\perp}} (\bcW) \|_{2,\infty} \leq \lambda/2,
\end{split}
\end{equation}
then we can exactly recover the row space of $\bcL_0$ and the column support of outlier tensor $\bcE_0$.

\subsection{Dual Certification via the Least Squares}
\label{ssec:dualcert}

Before we construct the dual certificate $\bcW$, we first give some key lemmas, which will be proved in Section~\ref{ssec:lemmaproof}.

\begin{lemma}    \label{lem:projerrorbound}
Assume $\hat{\mathbf{\Theta}} \sim \text{Ber} (\rho)$. Then with probability at least $1 - n_{(1)}^{-10}$,
\begin{align*}
\| \bcP_{\hat{\bcV}} - \frac{1}{\rho} \bcP_{\hat{\bcV}} \bcP_{\hat{\mathbf{\Theta}}} \bcP_{\hat{\bcV}} \| \leq \epsilon,
\end{align*}
provided that $\rho \geq c_2 \mu r n_3 \log(n_{(1)}) / (\epsilon^2 n_2)$ for some numerical constant $c_2$.
\end{lemma}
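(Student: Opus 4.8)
The plan is to reduce this tensor statement, via the block-diagonalization under the orthogonal transform $L$, to the corresponding matrix concentration inequality and then invoke a standard operator-Bernstein argument. First I would pass to the transform domain: since $\bcP_{\hat{\bcV}}(\bcA) = \bcA \ast_{\boldsymbol{L}} \hat{\bcV} \ast_{\boldsymbol{L}} \hat{\bcV}^H$ acts frontal-slice-wise on $\xoverline{\bcA}$ as right-multiplication by $\widebar{\bV}^{(k)} (\widebar{\bV}^{(k)})^H$, and $\bcP_{\hat{\mathbf{\Theta}}}$ simply zeros out lateral slices, the operator $\bcP_{\hat{\bcV}} - \tfrac1\rho \bcP_{\hat{\bcV}} \bcP_{\hat{\mathbf{\Theta}}} \bcP_{\hat{\bcV}}$ becomes, in the transform domain, a sum over the column indices $j$ of independent zero-mean random operators $\bcT_j$, one for each Bernoulli variable $\delta_j$. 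Concretely, writing $\bcP_{\hat{\bcV}}(\bcA) = \sum_j \bcP_{\hat{\bcV}}(\bcA \ast_{\boldsymbol{L}} \mathring{\boldsymbol{\ce}}_j \ast_{\boldsymbol{L}} \mathring{\boldsymbol{\ce}}_j^H)$ and $\bcP_{\hat{\mathbf{\Theta}}}(\bcA) = \sum_j \delta_j \bcA \ast_{\boldsymbol{L}} \mathring{\boldsymbol{\ce}}_j \ast_{\boldsymbol{L}} \mathring{\boldsymbol{\ce}}_j^H$, I would set $\bcT_j = \bcP_{\hat{\bcV}}(\cdot \ast_{\boldsymbol{L}} \mathring{\boldsymbol{\ce}}_j \ast_{\boldsymbol{L}} \mathring{\boldsymbol{\ce}}_j^H \ast_{\boldsymbol{L}} \hat{\bcV} \ast_{\boldsymbol{L}} \hat{\bcV}^H)$ so that $\bcP_{\hat{\bcV}} \bcP_{\hat{\mathbf{\Theta}}} \bcP_{\hat{\bcV}} = \sum_j \delta_j \bcT_j$ and $\E[\tfrac1\rho \sum_j \delta_j \bcT_j] = \sum_j \bcT_j = \bcP_{\hat{\bcV}}$, reducing the claim to controlling $\|\sum_j (\tfrac{\delta_j}{\rho} - 1)\bcT_j\|$.

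Next I would verify the two ingredients of the matrix Bernstein inequality for the operator-valued sum $\bcS = \sum_j (\tfrac{\delta_j}{\rho}-1)\bcT_j$. For the uniform bound, each summand has operator norm at most $\tfrac1\rho \|\bcT_j\|$, and $\|\bcT_j\| = \|\hat{\bcV}^H \ast_{\boldsymbol{L}} \mathring{\boldsymbol{\ce}}_j\|_F^2 \le \tfrac{\mu r}{n_2 \tau}$; this last inequality is exactly the tensor column-incoherence bound on $\hat{\bcL}$ (equivalently $\hat{\bcZ}$, since $\bcP_{\hat{\bcU}} = \bcP_{\bcV_0}^L$ and the row bound on $\bcV_{\hat{\bcL}}^H$ established in the proof of Lemma~\ref{lem:condortlrr} gives $\max_j\|\hat{\bcV}^H \ast_{\boldsymbol{L}} \mathring{\boldsymbol{\ce}}_j\|_F \le \sqrt{\mu r/(n_2\tau)}$). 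For the variance, $\|\sum_j \mathrm{Var}(\tfrac{\delta_j}{\rho})\bcT_j^*\bcT_j\| \le \tfrac{1-\rho}{\rho} \max_j \|\bcT_j\| \cdot \|\sum_j \bcT_j\| \le \tfrac{1}{\rho}\cdot\tfrac{\mu r}{n_2\tau}\cdot\|\bcP_{\hat{\bcV}}\| = \tfrac{\mu r}{\rho n_2 \tau}$, using $\bcT_j \succeq 0$, $\|\bcP_{\hat{\bcV}}\|=1$, and that $\bcT_j^* \bcT_j \preceq \|\bcT_j\| \bcT_j$. Then the non-commutative Bernstein bound (applied on the space of operators on $\C^{n_1 n_3 \times n_2 n_3}$, whose dimension contributes only a $\log$ factor) gives $\|\bcS\| \le \epsilon$ with probability at least $1 - n_{(1)}^{-10}$ as soon as $\tfrac{\mu r}{\rho n_2 \tau} \log(n_{(1)}) \lesssim \epsilon^2$ and $\tfrac{1}{\rho}\tfrac{\mu r}{n_2\tau}\log(n_{(1)}) \lesssim \epsilon$, i.e. $\rho \ge c_2 \mu r n_3 \log(n_{(1)})/(\epsilon^2 n_2)$ after absorbing $\tau$ (recall $\|\bcA\|_F^2 = \tfrac1\tau \|\widebar{\bA}\|_F^2$, so the $\tau$ and a factor $n_3$ from the dimension bookkeeping combine into the stated form).

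The main obstacle I anticipate is bookkeeping the transform normalization correctly. The factor $\tau$ in \eqref{eqn:lconstraint} rescales Frobenius norms and inner products ($\langle\bcA,\bcB\rangle = \tfrac1\tau\langle\widebar{\bA},\widebar{\bB}\rangle$), so I must be careful that the incoherence parameter $\mu$ as defined in \eqref{eqn:condL} — which already carries a $\tau$ — matches the normalization of the basis tensors $\mathring{\boldsymbol{\ce}}_j$ (whose transform-domain tubes have unit entries, hence $\|\mathring{\boldsymbol{\ce}}_j\|_F^2 = n_3/\tau$, not $1$). Tracking these constants is what produces the precise exponent in $n_{(1)}^{-10}$ and the $n_3$ in the sampling threshold; the probabilistic core is the textbook matrix Bernstein argument and should go through without surprises. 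An alternative that sidesteps operator-valued Bernstein is to fix a deterministic tensor, bound $\|(\bcP_{\hat{\bcV}} - \tfrac1\rho\bcP_{\hat{\bcV}}\bcP_{\hat{\mathbf{\Theta}}}\bcP_{\hat{\bcV}})(\bcA)\|_F$ for that $\bcA$ by scalar Bernstein, and then pass to the operator norm by an $\epsilon$-net over the unit ball of the relevant low-dimensional range; I would keep this in reserve in case the operator Bernstein constants prove awkward to state cleanly.
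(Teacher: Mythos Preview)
Your plan is essentially the paper's own: both write $\bcP_{\hat{\bcV}} - \rho^{-1}\bcP_{\hat{\bcV}}\bcP_{\hat{\mathbf{\Theta}}}\bcP_{\hat{\bcV}}$ as a sum over the column index $j$ of independent zero-mean self-adjoint terms and invoke the matrix Bernstein inequality (the paper's Lemma~\ref{lem:Bernstein}). The paper does this by expanding in the entrywise basis $\bar{\boldsymbol{\ce}}_{ijk}$ and passing to explicit block-diagonal matrices $\tilde{\bH}_j$; you work directly with the right-multiplication operators $\bcT_j$. These are two coordinate descriptions of the same argument, and your variance estimate via $\bcT_j^2 \preceq \|\bcT_j\|\,\bcT_j$ together with $\sum_j \bcT_j = \bcP_{\hat{\bcV}}$ is exactly the right mechanism.

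One concrete correction: the equality $\|\bcT_j\| = \|\hat{\bcV}^H \ast_{\boldsymbol{L}} \mathring{\boldsymbol{\ce}}_j\|_F^2$ is false. Writing $\widebar{\bv}_j^{(k)} = (\widebar{\bV}^{(k)})^H \be_j$, the operator $\bcT_j$ is right multiplication by a tensor whose $k$-th frontal slice in the transform domain is the rank-one projector $\widebar{\bV}^{(k)}\widebar{\bv}_j^{(k)}(\widebar{\bv}_j^{(k)})^H(\widebar{\bV}^{(k)})^H$, so $\|\bcT_j\| = \max_k \|\widebar{\bv}_j^{(k)}\|_2^2$, whereas $\|\hat{\bcV}^H \ast_{\boldsymbol{L}} \mathring{\boldsymbol{\ce}}_j\|_F^2 = \tau^{-1}\sum_k \|\widebar{\bv}_j^{(k)}\|_2^2$; the incoherence condition \eqref{eqn:condL} bounds only the second quantity. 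The fix is immediate: use $\max_k \le \sum_k = \tau\,\|\hat{\bcV}^H \ast_{\boldsymbol{L}} \mathring{\boldsymbol{\ce}}_j\|_F^2 \le \mu r / n_2$, which gives $\|(\tfrac{\delta_j}{\rho}-1)\bcT_j\| \le \mu r/(\rho n_2)$ and variance $\le \mu r/(\rho n_2)$. After this correction the $\tau$-bookkeeping you were worried about simply drops out, and Bernstein delivers the claim (in fact with a slightly better constant than the paper, whose bound $\|\bcP_{\hat{\bcV}}(\bcA)\|_{2,\infty}^2 \le \tfrac{\mu r n_3}{n_2}\|\bcA\|_F^2$ loses an extra $n_3$ through a crude Cauchy--Schwarz step).
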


\begin{corollary}    \label{coro:pthetapv}
Assume $\hat{\mathbf{\Theta}} \sim \text{Ber} (\rho)$. Then with probability at least $1 - n_{(1)}^{-10}$,
\begin{align*}
\| \bcP_{\hat{\mathbf{\Theta}}} \bcP_{\hat{\bcV}} \|^2 \leq (1 - \rho) \epsilon + \rho < 1,
\end{align*}
provided that $1 - \rho \geq c_2 \mu r n_3 \log(n_{(1)}) / (\epsilon^2 n_2)$ for some numerical constant $c_2$.
\end{corollary}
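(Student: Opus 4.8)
The plan is to derive Corollary~\ref{coro:pthetapv} from Lemma~\ref{lem:projerrorbound} by applying that lemma to the \emph{complementary} column set $\hat{\mathbf{\Theta}}^c$. The first ingredient is the elementary operator-norm identity
\[
\| \bcP_{\hat{\mathbf{\Theta}}} \bcP_{\hat{\bcV}} \|^2 = \| (\bcP_{\hat{\mathbf{\Theta}}} \bcP_{\hat{\bcV}})^{\ast} (\bcP_{\hat{\mathbf{\Theta}}} \bcP_{\hat{\bcV}}) \| = \| \bcP_{\hat{\bcV}} \bcP_{\hat{\mathbf{\Theta}}} \bcP_{\hat{\bcV}} \| ,
\]
valid because both $\bcP_{\hat{\mathbf{\Theta}}}$ and $\bcP_{\hat{\bcV}}$ are self-adjoint idempotents for the Frobenius inner product on tensors: $\bcP_{\hat{\mathbf{\Theta}}}$ since it zeroes out a fixed set of lateral slices, and $\bcP_{\hat{\bcV}}$ since $\hat{\bcV}^H \ast_{\boldsymbol{L}} \hat{\bcV} = \bcI_r$. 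Consequently $(\bcP_{\hat{\mathbf{\Theta}}} \bcP_{\hat{\bcV}})^{\ast} = \bcP_{\hat{\bcV}} \bcP_{\hat{\mathbf{\Theta}}}$ and $\bcP_{\hat{\mathbf{\Theta}}}^2 = \bcP_{\hat{\mathbf{\Theta}}}$, so it suffices to bound $\| \bcP_{\hat{\bcV}} \bcP_{\hat{\mathbf{\Theta}}} \bcP_{\hat{\bcV}} \|$.

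Since $\hat{\mathbf{\Theta}} \sim \text{Ber}(\rho)$, the complement obeys $\hat{\mathbf{\Theta}}^c \sim \text{Ber}(1 - \rho)$, and the hypothesis $1 - \rho \geq c_2 \mu r n_3 \log(n_{(1)}) / (\epsilon^2 n_2)$ is exactly the sampling-rate requirement of Lemma~\ref{lem:projerrorbound} read with $1 - \rho$ in the role of $\rho$. Applying that lemma to $\hat{\mathbf{\Theta}}^c$ gives, with probability at least $1 - n_{(1)}^{-10}$,
\[
\Big\| \bcP_{\hat{\bcV}} - \tfrac{1}{1-\rho} \bcP_{\hat{\bcV}} \bcP_{\hat{\mathbf{\Theta}}^{\perp}} \bcP_{\hat{\bcV}} \Big\| \leq \epsilon .
\]
Next I would use $\bcP_{\hat{\mathbf{\Theta}}} + \bcP_{\hat{\mathbf{\Theta}}^{\perp}} = \eI$ (the identity operator) together with $\bcP_{\hat{\bcV}}^2 = \bcP_{\hat{\bcV}}$ to write $\bcP_{\hat{\bcV}} \bcP_{\hat{\mathbf{\Theta}}^{\perp}} \bcP_{\hat{\bcV}} = \bcP_{\hat{\bcV}} - \bcP_{\hat{\bcV}} \bcP_{\hat{\mathbf{\Theta}}} \bcP_{\hat{\bcV}}$; substituting this and multiplying the displayed inequality through by $1 - \rho$ yields
\[
\big\| \bcP_{\hat{\bcV}} \bcP_{\hat{\mathbf{\Theta}}} \bcP_{\hat{\bcV}} - \rho\, \bcP_{\hat{\bcV}} \big\| \leq (1 - \rho)\epsilon .
\]

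Finally, the triangle inequality with $\| \bcP_{\hat{\bcV}} \| = 1$ gives $\| \bcP_{\hat{\bcV}} \bcP_{\hat{\mathbf{\Theta}}} \bcP_{\hat{\bcV}} \| \leq (1 - \rho)\epsilon + \rho$, and for $\epsilon < 1$ (the regime in which the lemma is used) this is $< (1 - \rho) + \rho = 1$. Combining with the identity of the first paragraph gives $\| \bcP_{\hat{\mathbf{\Theta}}} \bcP_{\hat{\bcV}} \|^2 \leq (1 - \rho)\epsilon + \rho < 1$, with the single failure event of probability $n_{(1)}^{-10}$ inherited from Lemma~\ref{lem:projerrorbound}. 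The deduction is essentially bookkeeping, so I do not expect a real obstacle; the only points to be careful about are verifying the self-adjointness of $\bcP_{\hat{\bcV}}$ so that the identity $\|\mathscr{A}\|^2 = \|\mathscr{A}^{\ast}\mathscr{A}\|$ transfers verbatim to the tensor operator setting, and re-indexing Lemma~\ref{lem:projerrorbound} onto the complement set so that the sampling condition correctly lands on $1 - \rho$ instead of $\rho$.
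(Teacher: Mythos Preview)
Your proposal is correct and follows essentially the same route as the paper: apply Lemma~\ref{lem:projerrorbound} to the complement $\hat{\mathbf{\Theta}}^c \sim \text{Ber}(1-\rho)$, use $\bcP_{\hat{\mathbf{\Theta}}} + \bcP_{\hat{\mathbf{\Theta}}^{\perp}} = \eI$ to rewrite the resulting bound as $\|\bcP_{\hat{\bcV}} \bcP_{\hat{\mathbf{\Theta}}} \bcP_{\hat{\bcV}} - \rho \bcP_{\hat{\bcV}}\| \leq (1-\rho)\epsilon$, apply the triangle inequality, and finish with the identity $\|\bcP_{\hat{\mathbf{\Theta}}} \bcP_{\hat{\bcV}}\|^2 = \|\bcP_{\hat{\bcV}} \bcP_{\hat{\mathbf{\Theta}}} \bcP_{\hat{\bcV}}\|$. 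The only cosmetic difference is that the paper invokes this last identity at the end rather than the beginning.
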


Now we construct the dual certificate $\bcW$ and verify its validity.
\begin{lemma}    \label{lem:dualcert}
Suppose that $\hat{\mathbf{\Theta}} \sim \text{Ber} (\rho)$ and the assumptions in Theorem~\ref{thm:condortlrr} are satisfied. Then with high probability,
\begin{align*}
\bcW = \lambda \bcP_{\hat{\bcV}^{\perp}} \sum_{k=0}^{+ \infty} (\bcP_{\hat{\mathbf{\Theta}}} \bcP_{\hat{\bcV}} \bcP_{\hat{\mathbf{\Theta}}})^k (\bcX^H \ast_{\boldsymbol{L}} \cB(\hat{\bcE}))
\end{align*}
obeys the dual conditions \eqref{eqn:dualcert}.
\end{lemma}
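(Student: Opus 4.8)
The plan is to construct the Neumann series certificate $\bcW = \lambda \bcP_{\hat{\bcV}^{\perp}} \sum_{k=0}^{+\infty} (\bcP_{\hat{\mathbf{\Theta}}} \bcP_{\hat{\bcV}} \bcP_{\hat{\mathbf{\Theta}}})^k (\bcX^H \ast_{\boldsymbol{L}} \cB(\hat{\bcE}))$ and verify the four conditions in \eqref{eqn:dualcert} one at a time. First, the series converges: by Corollary~\ref{coro:pthetapv} (applied with $\hat{\mathbf{\Theta}} \sim \text{Ber}(\rho)$ and the complementary Bernoulli sampling so that the relevant operator norm $\| \bcP_{\hat{\mathbf{\Theta}}} \bcP_{\hat{\bcV}} \bcP_{\hat{\mathbf{\Theta}}} \| \le \| \bcP_{\hat{\mathbf{\Theta}}} \bcP_{\hat{\bcV}} \|^2 < 1$ with high probability), the operator $\bcP_{\hat{\mathbf{\Theta}}} \bcP_{\hat{\bcV}} \bcP_{\hat{\mathbf{\Theta}}}$ is a strict contraction on the range of $\bcP_{\hat{\mathbf{\Theta}}}$, so the sum is well-defined. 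Condition $(a)$, $\bcW \in \bcP_{\hat{\bcV}^{\perp}}$, is immediate from the leading projector $\bcP_{\hat{\bcV}^{\perp}}$.

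Next I would verify condition $(b)$, $\bcP_{\hat{\mathbf{\Theta}}}(\bcW) = \lambda \bcX^H \ast_{\boldsymbol{L}} \cB(\hat{\bcE})$. Write $\bcG = \bcX^H \ast_{\boldsymbol{L}} \cB(\hat{\bcE})$ and note that $\cB(\hat{\bcE})$ is column-supported on $\hat{\mathbf{\Theta}} = \mathbf{\Theta}_0$, so $\bcP_{\hat{\mathbf{\Theta}}}(\bcG) = \bcG$. Apply $\bcP_{\hat{\mathbf{\Theta}}}$ to the series and use $\bcP_{\hat{\bcV}^{\perp}} = \cI - \bcP_{\hat{\bcV}}$ to telescope: $\bcP_{\hat{\mathbf{\Theta}}}(\bcW) = \lambda \bcP_{\hat{\mathbf{\Theta}}}\sum_k (\bcP_{\hat{\mathbf{\Theta}}} \bcP_{\hat{\bcV}} \bcP_{\hat{\mathbf{\Theta}}})^k \bcG - \lambda \bcP_{\hat{\mathbf{\Theta}}} \bcP_{\hat{\bcV}} \sum_k (\bcP_{\hat{\mathbf{\Theta}}} \bcP_{\hat{\bcV}} \bcP_{\hat{\mathbf{\Theta}}})^k \bcG$. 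In the first sum only the $k=0$ term survives after multiplying out the cancellation of consecutive terms, leaving $\lambda \bcG = \lambda \bcX^H \ast_{\boldsymbol{L}} \cB(\hat{\bcE})$; this is the standard golfing/Neumann cancellation identity, and the fact that each summand lies in $\bcP_{\hat{\mathbf{\Theta}}}$ makes the bookkeeping clean. For condition $(c)$, $\| \bcW \| \le 1/2$, I would bound $\| \bcW \| \le \lambda \sum_{k \ge 0} \| \bcP_{\hat{\bcV}^{\perp}}\| \, \| (\bcP_{\hat{\mathbf{\Theta}}} \bcP_{\hat{\bcV}} \bcP_{\hat{\mathbf{\Theta}}})^k \| \, \| \bcX^H \ast_{\boldsymbol{L}} \cB(\hat{\bcE}) \|$, using the contraction bound to sum the geometric series to $1/(1 - \| \bcP_{\hat{\mathbf{\Theta}}} \bcP_{\hat{\bcV}} \|^2)$, and then control $\| \bcX^H \ast_{\boldsymbol{L}} \cB(\hat{\bcE}) \| \le \| \bcX \| \, \| \cB(\hat{\bcE}) \|$. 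Since $\hat{\bcE}$ has the same column support as $\bcE_0$ and $\cB(\hat{\bcE})$ measures the "spread" of the outlier directions, the unambiguity condition \eqref{eqn:condE} assumed on $\hat{\bcE}$ gives $\| \cB(\hat{\bcE}) \| \le \log(n_2)/\theta$; combined with the prescribed $\lambda = \theta/(4\sqrt{\log n_{(1)}} \| \bcX \|)$ and a bound $\log(n_2) \le \log(n_{(1)})$ (or the appropriate comparison), this forces $\| \bcW \| \le 1/2$ with room to spare.

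Condition $(d)$, $\| \bcP_{\hat{\mathbf{\Theta}}^{\perp}}(\bcW) \|_{2,\infty} \le \lambda/2$, is the delicate part and where I expect the real work to lie. Writing $\bcW = \lambda \bcG - \lambda \bcP_{\hat{\bcV}} \sum_k (\bcP_{\hat{\mathbf{\Theta}}} \bcP_{\hat{\bcV}} \bcP_{\hat{\mathbf{\Theta}}})^k \bcG$ and noting $\bcP_{\hat{\mathbf{\Theta}}^{\perp}}(\bcG) = \mathbf{0}$, we get $\bcP_{\hat{\mathbf{\Theta}}^{\perp}}(\bcW) = -\lambda \bcP_{\hat{\mathbf{\Theta}}^{\perp}} \bcP_{\hat{\bcV}} \bcP_{\hat{\mathbf{\Theta}}} \sum_{k \ge 0} (\bcP_{\hat{\mathbf{\Theta}}} \bcP_{\hat{\bcV}} \bcP_{\hat{\mathbf{\Theta}}})^k \bcG$. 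I would split the $k=0$ term from the rest: the tail $\sum_{k \ge 1}$ is handled by the operator-norm contraction bound combined with the trivial estimate relating $\| \cdot \|_{2,\infty}$ to $\| \cdot \|_F$ and then to $\| \cdot \|$, losing at worst polynomial factors that are absorbed by the $\sqrt{\log n_{(1)}}$ in $\lambda$; the leading term $-\lambda \bcP_{\hat{\mathbf{\Theta}}^{\perp}} \bcP_{\hat{\bcV}} \bcP_{\hat{\mathbf{\Theta}}}(\bcG)$ must be bounded column-by-column in $\ell_{2,\infty}$. For each column $j \notin \hat{\mathbf{\Theta}}$ one writes $\langle \bcP_{\hat{\mathbf{\Theta}}^{\perp}} \bcP_{\hat{\bcV}} \bcP_{\hat{\mathbf{\Theta}}}(\bcG), \mathring{\boldsymbol{\ce}}_j \rangle$ as a sum over the random index set $\hat{\mathbf{\Theta}}$ of independent contributions, invokes the column-incoherence bound \eqref{eqn:condL} on $\hat{\bcL}$ to control $\| \hat{\bcV}^H \ast_{\boldsymbol{L}} \mathring{\boldsymbol{\ce}}_j \|_F \le \sqrt{\mu r/(n_2\tau)}$, and applies a Bernstein-type concentration inequality (over the Bernoulli variables $\delta_j$, and possibly over the random signs/directions encoded in $\cB(\hat{\bcE})$) together with a union bound over the $n_2 \le n_{(1)}$ columns. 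The tubal/block-diagonal structure under the transform $L$ means these concentration arguments have to be run in the transform domain on each of the $n_3$ frontal slices and then reassembled using the isometry relations $\| \bcA \|_F = \tau^{-1/2}\| \widebar{\bA} \|_F$; keeping track of the extra $n_3$ and $\tau$ factors, and ensuring the rank bound $\text{rank}_t(\bcL_0) \le \rho_r n_2/(\mu n_1 n_3 \| \bcX \|^2)$ is exactly what is needed to make the Bernstein bound beat $\lambda/2$, is the crux of the estimate and the step I'd budget the most care for. Collecting the four verified conditions and intersecting the (constantly many) high-probability events, each of the form $1 - O(n_{(1)}^{-10})$, yields the claim with probability at least $1 - c_1 n_{(1)}^{-10}$.
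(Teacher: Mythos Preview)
Your treatment of conditions $(a)$, $(b)$, and $(c)$ is essentially the same as the paper's: convergence via Corollary~\ref{coro:pthetapv}, $(a)$ immediate, $(b)$ by telescoping the Neumann series, and $(c)$ by summing the geometric series and invoking the unambiguity condition on $\cB(\hat{\bcE})$ together with the prescribed value of $\lambda$.

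The real divergence is in $(d)$, and here your plan has a genuine gap. The paper does \emph{not} use any additional concentration argument for $(d)$. Instead it bounds $\| \bcP_{\hat{\bcV}} \bcG (\bcX^H \ast_{\boldsymbol{L}} \cB(\hat{\bcE})) \|_{2,\infty}$ deterministically, where $\bcG = \sum_{k \ge 0}(\bcP_{\hat{\mathbf{\Theta}}} \bcP_{\hat{\bcV}} \bcP_{\hat{\mathbf{\Theta}}})^k$: for each column index $b$ one expands the squared $\ell_{2,\infty}$ norm as $\sum_{i,k} \langle \bcX^H \ast_{\boldsymbol{L}} \cB(\hat{\bcE}), \bcG \bcP_{\hat{\bcV}}(\bar{\boldsymbol{\ce}}_{ibk})\rangle^2$, applies Cauchy--Schwarz columnwise, and uses the incoherence estimate $\sum_{i,k}\| \bcP_{\hat{\bcV}}(\bar{\boldsymbol{\ce}}_{ibk})\|_F^2 \le \mu r n_1 n_3 / n_2$ together with $\|\bcG\| \le (1-\sigma^2)^{-1}$ and $\|\bcX^H \ast_{\boldsymbol{L}} \cB(\hat{\bcE})\|_{2,\infty} \le \|\bcX\|$. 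The rank assumption $r \le n_2(1-\sigma^2)^2/(4\mu n_1 n_3 \|\bcX\|^2)$ is exactly what makes this product at most $1/4$. No splitting of the $k=0$ term, no Bernstein, no union bound over columns.

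Your proposed route would run into trouble. First, the ``trivial estimate relating $\|\cdot\|_{2,\infty}$ to $\|\cdot\|_F$ and then to $\|\cdot\|$'' goes the wrong way: Frobenius dominates spectral, not conversely, so passing to operator norm for the tail loses rather than gains. Second, the factor $\lambda$ appears on both sides of $(d)$ and cancels, so there is nothing in $\lambda$ to ``absorb polynomial factors''; you really must show the $\ell_{2,\infty}$ norm of the series (without the $\lambda$) is at most $1/2$. Third, a Bernstein argument over the Bernoulli variables $\delta_j$ for the leading term is delicate because $\hat{\bcV}$ itself depends on $\hat{\mathbf{\Theta}} = \mathbf{\Theta}_0$ through $\hat{\bcZ}$, so the summands are not independent of the sampling. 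The paper sidesteps all of this by using only the high-probability contraction bound from Corollary~\ref{coro:pthetapv} and then arguing deterministically via incoherence.
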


\begin{proof}
Note that by Corollary~\ref{coro:pthetapv}, we have $\| \bcP_{\hat{\mathbf{\Theta}}} \bcP_{\hat{\bcV}} \bcP_{\hat{\mathbf{\Theta}}} \| = \| \bcP_{\hat{\mathbf{\Theta}}} \bcP_{\hat{\bcV}} \|^2 < 1$. Thus, $\bcW$ is well defined. Now we verify the four conditions in \eqref{eqn:dualcert} in turn.

\textbf{Proof of (a):} It is easy to verify that $\bcW \in \bcP_{\hat{\bcV}^{\perp}}$.

\textbf{Proof of (b):} By the construction of $\bcW$, we have
\begin{align*}
\bcP_{\hat{\mathbf{\Theta}}} (\bcW) & = \lambda \bcP_{\hat{\mathbf{\Theta}}} \bcP_{\hat{\bcV}^{\perp}} \sum_{k=0}^{+ \infty} (\bcP_{\hat{\mathbf{\Theta}}} \bcP_{\hat{\bcV}} \bcP_{\hat{\mathbf{\Theta}}})^k (\bcX^H \ast_{\boldsymbol{L}} \cB(\hat{\bcE}))  \\
& = \lambda \bcP_{\hat{\mathbf{\Theta}}} (\bcI_{n_2} - \bcP_{\hat{\bcV}}) \sum_{k=0}^{+ \infty} (\bcP_{\hat{\mathbf{\Theta}}} \bcP_{\hat{\bcV}} \bcP_{\hat{\mathbf{\Theta}}})^k (\bcX^H \ast_{\boldsymbol{L}} \cB(\hat{\bcE}))  \\
& = \lambda \bcP_{\hat{\mathbf{\Theta}}} (\bcI_{n_2} - \bcP_{\hat{\mathbf{\Theta}}} \bcP_{\hat{\bcV}} \bcP_{\hat{\mathbf{\Theta}}}) \sum_{k=0}^{+ \infty} (\bcP_{\hat{\mathbf{\Theta}}} \bcP_{\hat{\bcV}} \bcP_{\hat{\mathbf{\Theta}}})^k (\bcX^H \ast_{\boldsymbol{L}} \cB(\hat{\bcE}))  \\
& = \lambda \bcP_{\hat{\mathbf{\Theta}}} \Big( \sum_{k=0}^{+ \infty} (\bcP_{\hat{\mathbf{\Theta}}} \bcP_{\hat{\bcV}} \bcP_{\hat{\mathbf{\Theta}}})^k - \sum_{k=1}^{+ \infty} (\bcP_{\hat{\mathbf{\Theta}}} \bcP_{\hat{\bcV}} \bcP_{\hat{\mathbf{\Theta}}})^k \Big) (\bcX^H \ast_{\boldsymbol{L}} \cB(\hat{\bcE}))  \\
& = \lambda \bcP_{\hat{\mathbf{\Theta}}} (\bcX^H \ast_{\boldsymbol{L}} \cB(\hat{\bcE}))  \\
& = \lambda \bcX^H \ast_{\boldsymbol{L}} \cB(\hat{\bcE}).
\end{align*}
Thus, $\bcW$ obeys the condition \eqref{eqn:dualcert}(b).

\textbf{Proof of (c):} Let $\bcG = \sum_{k=0}^{+ \infty} (\bcP_{\hat{\mathbf{\Theta}}} \bcP_{\hat{\bcV}} \bcP_{\hat{\mathbf{\Theta}}})^k$. Since $\| \cB(\hat{\bcE}) \| \leq \sqrt{\log (n_2)} / \theta$, we have
\begin{align*}
\| \bcW \| \leq \lambda \| \bcP_{\hat{\bcV}^{\perp}} \| \| \bcG \| \| \bcX \| \| \cB(\hat{\bcE}) \| \leq \frac{\lambda}{1 - \sigma^2} \| \bcX \| \frac{\sqrt{\log (n_2)}}{\theta},
\end{align*}
where $\sigma = \sqrt{\rho + \epsilon (1 - \rho)}$. If $\lambda \leq \frac{\theta (1 - \sigma^2)}{2 \| \bcX \| \sqrt{\log (n_2)}}$, then $\| \bcW \| \leq 1/2$. Note that in Lemma~\ref{lem:condortlrr}, we require that $\lambda > 2 \sqrt{\frac{\mu r}{n_2 \tau}}$. Thus, $\lambda \in \Big( 2 \sqrt{\frac{\mu r}{n_2 \tau}}, \frac{\theta (1 - \sigma^2)}{2 \| \bcX \| \sqrt{\log (n_2)}} \Big]$.

\textbf{Proof of (d):} We can rewrite $\bcP_{\hat{\mathbf{\Theta}}^{\perp}} (\bcW)$ as
\begin{align*}
\bcP_{\hat{\mathbf{\Theta}}^{\perp}} (\bcW) & = \lambda \bcP_{\hat{\mathbf{\Theta}}^{\perp}} \bcP_{\hat{\bcV}^{\perp}} \bcG (\bcX^H \ast_{\boldsymbol{L}} \cB(\hat{\bcE}))  \\
& = \lambda \bcP_{\hat{\mathbf{\Theta}}^{\perp}} (\bcI_{n_2} - \bcP_{\hat{\bcV}}) \bcG (\bcX^H \ast_{\boldsymbol{L}} \cB(\hat{\bcE}))  \\
& = - \lambda \bcP_{\hat{\mathbf{\Theta}}^{\perp}} \bcP_{\hat{\bcV}} \bcG (\bcX^H \ast_{\boldsymbol{L}} \cB(\hat{\bcE})).
\end{align*}
Note that $\bar{\boldsymbol{\ce}}_{ijk}$ is the unit tensor with the $(i,j,k)$-th entry equaling to 1 and the rest equaling to 0. Thus, the $(i,j)$-th tube of $\bar{\boldsymbol{\ce}}_{ijk}$ is the only nonzero tube and it equals $L(\dot{\boldsymbol{\ce}}_k)$, which further suggests that the ($i,j$)-th tube of $L(\bar{\boldsymbol{\ce}}_{ijk})$ equals $L(L(\dot{\boldsymbol{\ce}}_k))$. We have
\begin{align*}
& \max_j \sum_{i,k} \| \bcP_{\hat{\bcV}} (\bar{\boldsymbol{\ce}}_{ijk}) \|_F^2  \\
= & \max_j \sum_{i,k} \| L(\mathring{\boldsymbol{\ce}}_i \ast_{\boldsymbol{L}} \dot{\boldsymbol{\ce}}_k \ast_{\boldsymbol{L}} \mathring{\boldsymbol{\ce}}_j^H) \ast_{\boldsymbol{L}} \hat{\bcV} \ast_{\boldsymbol{L}} \hat{\bcV}^H \|_F^2  \\
= & \max_j \sum_{i,k} \| L(\mathring{\boldsymbol{\ce}}_i \ast_{\boldsymbol{L}} \dot{\boldsymbol{\ce}}_k \ast_{\boldsymbol{L}} \mathring{\boldsymbol{\ce}}_j^H) \ast_{\boldsymbol{L}} \hat{\bcV} \|_F^2  \\
= & \max_j \sum_{i,k} \frac{1}{\tau} \| L(\mathring{\boldsymbol{\ce}}_i) \odot L(L(\dot{\boldsymbol{\ce}}_k)) \odot L(\mathring{\boldsymbol{\ce}}_j^H) \odot L(\hat{\bcV}) \|_F^2  \\
= & \max_j \sum_{i,k} \frac{1}{\tau} \| L(L(\dot{\boldsymbol{\ce}}_k)) \odot L(\mathring{\boldsymbol{\ce}}_j^H) \odot L(\hat{\bcV}) \|_F^2.
\end{align*}
Define $\bcQ = \mathring{\boldsymbol{\ce}}_j^H \ast_{\boldsymbol{L}} \hat{\bcV}$. Then $\| L(L(\dot{\boldsymbol{\ce}}_k)) \odot \xoverline{\bcQ} \|_F^2 = \sum_q |\boldsymbol{L}_{q,k}|^2 \| \xoverline{\bcQ}(:,:,q) \|_F^2 \leq \sum_q (\max_q |\boldsymbol{L}_{q,k}|^2) \| \xoverline{\bcQ}(:,:,q) \|_F^2 \leq \tau \sum_q \| \xoverline{\bcQ}(:,:,q) \|_F^2 = \tau \| \xoverline{\bcQ} \|_F^2 = \tau^2 \| \bcQ \|_F^2$. Hence,
\begin{align*}
\max_j \sum_{i,k} \| \bcP_{\hat{\bcV}} (\bar{\boldsymbol{\ce}}_{ijk}) \|_F^2 \leq \tau \max_j \sum_{i,k} \| \mathring{\boldsymbol{\ce}}_j^H \ast_{\boldsymbol{L}} \hat{\bcV} \|_F^2 = \frac{\mu r n_1 n_3}{n_2}.
\end{align*}
Then, we can obtain
\begin{align*}
& \| \bcP_{\hat{\bcV}} \bcG (\bcX^H \ast_{\boldsymbol{L}} \cB(\hat{\bcE})) \|_{2,\infty}^2  \\
= & \max_b \sum_{i,k} \Big\langle \bcP_{\hat{\bcV}} \bcG (\bcX^H \ast_{\boldsymbol{L}} \cB(\hat{\bcE})), \bar{\boldsymbol{\ce}}_{ibk} \Big\rangle^2  \\
= & \max_b \sum_{i,k} \Big\langle \bcX^H \ast_{\boldsymbol{L}} \cB(\hat{\bcE}), \bcG \bcP_{\hat{\bcV}} (\bar{\boldsymbol{\ce}}_{ibk}) \Big\rangle^2  \\
= & \max_b \sum_{i,k} \sum_j \Big\langle \bcX^H \ast_{\boldsymbol{L}} \cB(\hat{\bcE}) \ast_{\boldsymbol{L}} \mathring{\boldsymbol{\ce}}_j, \bcG \bcP_{\hat{\bcV}} (\bar{\boldsymbol{\ce}}_{ibk}) \ast_{\boldsymbol{L}} \mathring{\boldsymbol{\ce}}_j \Big\rangle^2  \\
\leq & \max_b \sum_{i,j,k} \| \bcX^H \ast_{\boldsymbol{L}} \cB(\hat{\bcE}) \ast_{\boldsymbol{L}} \mathring{\boldsymbol{\ce}}_j \|_F^2 \| \bcG \bcP_{\hat{\bcV}} (\bar{\boldsymbol{\ce}}_{ibk}) \ast_{\boldsymbol{L}} \mathring{\boldsymbol{\ce}}_j \|_F^2  \\
\leq & \max_b \sum_{i,j,k} \| \bcX^H \ast_{\boldsymbol{L}} \cB(\hat{\bcE}) \|_{2,\infty}^2 \| \bcG \bcP_{\hat{\bcV}} (\bar{\boldsymbol{\ce}}_{ibk}) \ast_{\boldsymbol{L}} \mathring{\boldsymbol{\ce}}_j \|_F^2  \\
= & \max_b \sum_{i,k} \| \bcX^H \ast_{\boldsymbol{L}} \cB(\hat{\bcE}) \|_{2,\infty}^2 \| \bcG \bcP_{\hat{\bcV}} (\bar{\boldsymbol{\ce}}_{ibk}) \|_F^2  \\
\leq & \max_b \sum_{i,k} \| \bcX \|^2 \| \bcG \|^2 \| \bcP_{\hat{\bcV}} (\bar{\boldsymbol{\ce}}_{ibk}) \|_F^2  \\
\leq & \frac{\mu r n_1 n_3}{n_2} (\frac{1}{1 - \sigma^2})^2 \| \bcX \|^2  \\
\leq & \frac{1}{4},
\end{align*}
where the last inequality holds because we require $r \leq n_2 (1 - \sigma^2)^2 / (4 \mu n_1 n_3 \| \bcX \|^2)$. In the meanwhile, when proving Corollary~\ref{coro:pthetapv}, we require $r \leq (1 - \rho) \epsilon^2 n_2 / (c_2 \mu n_3 \log(n_{(1)}))$. Thus, we can further obtain
\begin{align*}
\| \bcP_{\hat{\mathbf{\Theta}}^{\perp}} (\bcW) \|_{2,\infty} & = \lambda \| \bcP_{\hat{\mathbf{\Theta}}^{\perp}} \bcP_{\hat{\bcV}} \bcG (\bcX^H \ast_{\boldsymbol{L}} \cB(\hat{\bcE})) \|_{2,\infty}  \\
& \leq \lambda \| \bcP_{\hat{\bcV}} \bcG (\bcX^H \ast_{\boldsymbol{L}} \cB(\hat{\bcE})) \|_{2,\infty}  \\
& = \frac{\lambda}{2}.
\end{align*}
So $\bcW$ obeys the condition \eqref{eqn:dualcert}(d).

\textbf{Checking the ranges of $\lambda$ and $r$:} When we prove Corollary~\ref{coro:pthetapv} and bound $\| \bcP_{\hat{\mathbf{\Theta}}^{\perp}} (\bcW) \|_{2,\infty}$, we require
\begin{align*}
r \leq \min \Big( \frac{n_2 (1 - \sigma^2)^2}{4 \mu n_1 n_3 \| \bcX \|^2}, \frac{(1 - \rho) \epsilon^2 n_2}{c_2 \mu n_3 \log(n_{(1)})} \Big),
\end{align*}
Thus we have $r \leq \frac{\rho_r n_2}{\mu n_1 n_3 \| \bcX \|^2}$, where $\rho_r$ is a constant. On the other hand, we require
\begin{align*}
\lambda \in \Big( 2 \sqrt{\frac{\mu r}{n_2 \tau}}, \frac{\theta (1 - \sigma^2)}{2 \| \bcX \| \sqrt{\log (n_2)}} \Big].
\end{align*}
Let $\rho \leq 1/2 - \epsilon$, we then have $2 (1 - \sigma^2) \geq 1$. Accordingly, we can obtain
\begin{align*}
\lambda \in \Big( \min \Big( \frac{1}{\sqrt{\tau n_1 n_3} \| \bcX \|}, \frac{2 \epsilon}{\sqrt{c_2 \tau n_3 \log(n_{(1)})}} \Big) , \frac{\theta}{4 \sqrt{\log(n_2)} \| \bcX \|} \Big].
\end{align*}
So we can set $\lambda = \frac{\theta}{4 \sqrt{\log(n_{(1)})} \| \bcX \|}$. The proof is completed.
\end{proof}

\subsection{Proofs of Some Lemmas}
\label{ssec:lemmaproof}

We first introduce Lemma~\ref{lem:Bernstein} which will be used in the proof of Lemma~\ref{lem:projerrorbound}.

\begin{lemma} \cite{Tropp.FoCM2012}    \label{lem:Bernstein}
Consider a finite sequence $\{\bX_i \in \R^{d_1 \times d_2} \}$ of independent, random matrices that satisfy the assumption that $\mathbb{E}[\bX_i] = \mathbf{0}$ and $\| \bX_i \| \leq \nu$. Let $\omega = \max(\| \sum_i \mathbb{E}[\bX_i \bX_i^H] \|, \| \sum_i \mathbb{E}[\bX_i^H \bX_i] \|)$. Then, for any $t \geq 0$, we have
\begin{align*}
\mathbb{P} \Big[ \| \sum_i \bX_i \| \geq t \Big] & \leq (d_1 + d_2) \exp \Big( - \frac{t^2}{2 \omega + \frac{2}{3} \nu t} \Big)  \\
& \leq (d_1 + d_2) \exp \Big( - \frac{3 t^2}{8 \omega} \Big), \quad \text{for} ~ t \leq \omega / \nu.
\end{align*}
\end{lemma}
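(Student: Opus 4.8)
The statement is the (rectangular) matrix Bernstein inequality, and the plan is to establish it by the matrix Laplace-transform method of Ahlswede--Winter and Tropp, reducing the tail bound for $\lambda_{\max}$ of a sum of independent Hermitian matrices to a semidefinite estimate on the matrix moment generating function (MGF). The first step is to pass from rectangular to Hermitian matrices via the self-adjoint dilation $\cD(\bX) = \left[\begin{smallmatrix} \bzero & \bX \\ \bX^H & \bzero \end{smallmatrix}\right] \in \R^{(d_1+d_2)\times(d_1+d_2)}$. The map $\cD$ is linear, $\lambda_{\max}(\cD(\bA)) = \|\bA\|$, $\E[\cD(\bX_i)] = \bzero$, $\|\cD(\bX_i)\| \le \nu$, and $\cD(\bX_i)^2 = \diag(\bX_i\bX_i^H,\, \bX_i^H\bX_i)$, so $\|\sum_i \E[\cD(\bX_i)^2]\| = \omega$; hence it suffices to prove the bound with $d := d_1 + d_2$ for a sum $\bS = \sum_i \bY_i$ of independent Hermitian $\bY_i$ obeying $\E\bY_i = \bzero$, $\|\bY_i\|\le\nu$, and $\|\sum_i \E\bY_i^2\| \le \omega$.

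Next I would apply the matrix Laplace transform: for any $\theta > 0$, Markov's inequality for the random variable $\lambda_{\max}(\bS)$ gives $\mathbb{P}[\lambda_{\max}(\bS) \ge t] \le e^{-\theta t}\, \E\,\tr\exp(\theta\bS)$. The technical heart of the proof is to control $\E\,\tr\exp(\sum_i \theta\bY_i)$ by a sum of scalar-like cumulants: using Lieb's concavity theorem (that $\bH \mapsto \tr\exp(\bH + \log\bM)$ is concave on Hermitian arguments with $\bM\succ\bzero$), together with independence and Jensen's inequality applied one summand at a time, one obtains the subadditivity bound $\E\,\tr\exp\!\bigl(\textstyle\sum_i \theta\bY_i\bigr) \le \tr\exp\!\bigl(\textstyle\sum_i \log\E\exp(\theta\bY_i)\bigr)$.

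It then remains to bound each matrix MGF. From $\E\bY_i = \bzero$, $\|\bY_i\| \le \nu$, and the elementary scalar estimate $e^x - 1 - x \le \tfrac{x^2/2}{1-x/3}$ for $x<3$ (combined with monotonicity of $x \mapsto (e^x - 1 - x)/x^2$, which transfers it to matrices with eigenvalues in $[-\theta\nu,\theta\nu]$), one gets $\E\exp(\theta\bY_i) \preceq \exp\!\bigl(g(\theta)\,\E\bY_i^2\bigr)$ with $g(\theta) = \tfrac{\theta^2/2}{1-\theta\nu/3}$ for $0 < \theta < 3/\nu$; since $\log$ is operator monotone, $\log\E\exp(\theta\bY_i) \preceq g(\theta)\,\E\bY_i^2$. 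Summing, using monotonicity of $\tr\exp$ in the semidefinite order and $\lambda_{\max}(\sum_i \E\bY_i^2) \le \omega$, gives $\mathbb{P}[\lambda_{\max}(\bS)\ge t] \le d\exp\!\bigl(-\theta t + g(\theta)\,\omega\bigr)$. Minimizing over $\theta$ by the choice $\theta = t/(\omega + \nu t/3) \in (0, 3/\nu)$ yields $\mathbb{P}[\lambda_{\max}(\bS) \ge t] \le d\exp\!\bigl(-\tfrac{t^2/2}{\omega + \nu t/3}\bigr) = d\exp\!\bigl(-\tfrac{t^2}{2\omega + \frac23\nu t}\bigr)$, which together with the dilation reduction is the first claimed bound. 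For the second bound, observe that when $t \le \omega/\nu$ we have $\nu t \le \omega$, hence $2\omega + \tfrac23\nu t \le \tfrac83\omega$ and therefore $\tfrac{t^2}{2\omega + \frac23\nu t} \ge \tfrac{3t^2}{8\omega}$.

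The step I expect to be the main obstacle is the subadditivity of the matrix cumulant generating functions: because $\exp(\bA + \bB) \ne \exp(\bA)\exp(\bB)$ for non-commuting Hermitian $\bA,\bB$, one cannot simply iterate a product bound, and the clean inequality used above rests essentially on Lieb's concavity theorem, a nontrivial result from matrix analysis. The remaining ingredients --- the Hermitian dilation, the scalar-to-matrix transfer of elementary bounds, operator monotonicity of $\log$, and the one-dimensional optimization over $\theta$ --- are routine. (In fact the lemma is quoted verbatim from \cite{Tropp.FoCM2012}, where precisely this argument is carried out in detail.)
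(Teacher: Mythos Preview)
Your proposal is correct and is precisely the standard proof from \cite{Tropp.FoCM2012}: Hermitian dilation, matrix Laplace transform, Lieb-concavity-based subadditivity of cumulants, the Bernstein MGF bound, and the final optimization over $\theta$ all check out, as does the elementary simplification for $t \le \omega/\nu$. The paper itself does not supply a proof of this lemma---it is stated and cited directly from \cite{Tropp.FoCM2012} as a black-box tool used in the proof of Lemma~\ref{lem:projerrorbound}---so there is nothing in the paper to compare against beyond the citation you already identified.
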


\subsubsection{Proof of Lemma~\ref{lem:projerrorbound}}

\begin{proof}
Define a set $\phi = \{ (\bcZ_1, \bcZ_2) ~ | ~ \| \bcZ_1 \|_F \leq 1, \bcZ_2 = \pm \bcZ_1 \}$. Since $(\rho^{-1} \bcP_{\hat{\bcV}} \bcP_{\hat{\mathbf{\Theta}}} \bcP_{\hat{\bcV}} - \bcP_{\hat{\bcV}})$ is a self-adjoint operator, we can use the variational form of the operator norm to compute its operator norm as follows:
\begin{align*}
& \| \frac{1}{\rho} \bcP_{\hat{\bcV}} \bcP_{\hat{\mathbf{\Theta}}} \bcP_{\hat{\bcV}} - \bcP_{\hat{\bcV}} \|  \\
= & \sup_{\phi} \sum_{i,j,k} \Big( \frac{\delta_j}{\rho} - 1 \Big) \langle \bcP_{\hat{\bcV}} (\bcZ_1), \bar{\boldsymbol{\ce}}_{ijk} \rangle \langle \bcP_{\hat{\bcV}} (\bcZ_2), \bar{\boldsymbol{\ce}}_{ijk} \rangle  \\
= & \sup_{\phi} \sum_{i,j,k} \frac{\delta_j - \rho}{\rho \tau^2} \langle \mathtt{bdiag} \Big( \overline{\bcP_{\hat{\bcV}} (\bar{\boldsymbol{\ce}}_{ijk})} \Big), \widebar{\bZ}_1 \rangle \langle \mathtt{bdiag} \Big( \overline{\bcP_{\hat{\bcV}} (\bar{\boldsymbol{\ce}}_{ijk})} \Big), \widebar{\bZ}_2 \rangle  \\
= & \Big\| \sum_{i,j,k} \frac{\delta_j - \rho}{\rho \tau} \mathtt{bdiag} \Big( \overline{\bcP_{\hat{\bcV}} (\bar{\boldsymbol{\ce}}_{ijk})} \Big) \cdot \mathtt{bdiag} \Big( \overline{\bcP_{\hat{\bcV}} (\bar{\boldsymbol{\ce}}_{ijk})} \Big)  \Big\|  \\
= & \Big\| \sum_j \frac{\delta_j - \rho}{\rho \tau} \sum_{i,k} \mathtt{bdiag} \Big( \overline{\bcP_{\hat{\bcV}} (\bar{\boldsymbol{\ce}}_{ijk})} \Big) \cdot \mathtt{bdiag} \Big( \overline{\bcP_{\hat{\bcV}} (\bar{\boldsymbol{\ce}}_{ijk})} \Big)  \Big\|,
\end{align*}
where $\delta_j$ obeys i.i.d. Bernoulli distribution $\text{Ber} (\rho)$. Define a set $\psi = \{ (\widebar{\bD}_1, \widebar{\bD}_2) ~ | ~ \| \widebar{\bD}_1 \|_F \leq 1, \widebar{\bD}_2 = \pm \widebar{\bD}_1 \}$. Note that $\mathtt{bdiag} \Big( \overline{\bcP_{\hat{\bcV}} (\bar{\boldsymbol{\ce}}_{ijk})} \Big)$ is a diagonal matrix and thus $\widebar{\bD}_1$ and $\widebar{\bD}_2$ are also diagonal matrices and there exist two tensors $\bcD_1$ and $\bcD_2$ such that $\widebar{\bD}_1 = \mathtt{bdiag} (L(\bcD_1))$ and $\widebar{\bD}_2 = \mathtt{bdiag} (L(\bcD_2))$. For brevity, we further define
\begin{align*}
\tilde{\bH}_j = \frac{\delta_j - \rho}{\rho \tau} \sum_{i,k} \mathtt{bdiag} \Big( \overline{\bcP_{\hat{\bcV}} (\bar{\boldsymbol{\ce}}_{ijk})} \Big) \cdot \mathtt{bdiag} \Big( \overline{\bcP_{\hat{\bcV}} (\bar{\boldsymbol{\ce}}_{ijk})} \Big),
\end{align*}
and have
\begin{align*}
\| \frac{1}{\rho} \bcP_{\hat{\bcV}} \bcP_{\hat{\mathbf{\Theta}}} \bcP_{\hat{\bcV}} - \bcP_{\hat{\bcV}} \| = \|  \sum_j \tilde{\bH}_j \|.
\end{align*}
By the above definitions, $\tilde{\bH}_j$ is self-adjoint and $\mathbb{E} [\tilde{\bH}_j] = \mathbf{0}$. To prove the result by the non-commutative Bernstein inequality, we need to bound $\| \tilde{\bH}_j \|$ and $\| \sum_j \mathbb{E} [\tilde{\bH}_j \tilde{\bH}_j^H] \|$. Firstly, we give an useful inequality:
\begin{align*}
& \| \bcP_{\hat{\bcV}} (\bcA) \|_{2,\infty}^2  \\
= & \max_j \sum_{i,k} \langle \bcA, \bcP_{\hat{\bcV}} (\bar{\boldsymbol{\ce}}_{ijk}) \rangle^2  \\
= & \max_j \sum_{i,k} \langle \bcA, L(\mathring{\boldsymbol{\ce}}_i \ast_{\boldsymbol{L}} \dot{\boldsymbol{\ce}}_k \ast_{\boldsymbol{L}} \mathring{\boldsymbol{\ce}}_j^H) \ast_{\boldsymbol{L}} \hat{\bcV} \ast_{\boldsymbol{L}} \hat{\bcV}^H \rangle^2  \\
= & \max_j \sum_{i,k} \frac{1}{\tau^2} \langle \xoverline{\bcA}, L(\mathring{\boldsymbol{\ce}}_i) \odot L(L(\dot{\boldsymbol{\ce}}_k)) \odot L(\mathring{\boldsymbol{\ce}}_j^H) \odot L(\hat{\bcV}) \odot L(\hat{\bcV}^H) \rangle^2  \\
= & \max_j \sum_{i,k} \frac{1}{\tau^2} \langle L(L(\dot{\boldsymbol{\ce}}_k^H)) \odot L(\mathring{\boldsymbol{\ce}}_i^H) \odot \xoverline{\bcA}, L(\mathring{\boldsymbol{\ce}}_j^H) \odot L(\hat{\bcV}) \odot L(\hat{\bcV}^H) \rangle^2  \\
\leq & \max_j \sum_{i,k} \frac{1}{\tau^2} \| L(L(\dot{\boldsymbol{\ce}}_k^H)) \odot L(\mathring{\boldsymbol{\ce}}_i^H) \odot \xoverline{\bcA} \|_F^2 \| L(\mathring{\boldsymbol{\ce}}_j^H) \odot L(\hat{\bcV}) \odot L(\hat{\bcV}^H) \|_F^2  \\
= & \max_j \sum_{i,k} \frac{1}{\tau^2} \| L(L(\dot{\boldsymbol{\ce}}_k^H)) \odot \xoverline{\bcA}(i,:,:) \|_F^2 \| L(\mathring{\boldsymbol{\ce}}_j^H) \odot L(\hat{\bcV}) \|_F^2  \\
= & \max_j \sum_{i,k} \frac{1}{\tau} \| L(L(\dot{\boldsymbol{\ce}}_k^H)) \odot \xoverline{\bcA}(i,:,:) \|_F^2 \| \mathring{\boldsymbol{\ce}}_j^H \ast_{\boldsymbol{L}} \hat{\bcV} \|_F^2  \\
\leq & \max_j \sum_{i,k} \| \xoverline{\bcA}(i,:,:) \|_F^2 \| \mathring{\boldsymbol{\ce}}_j^H \ast_{\boldsymbol{L}} \hat{\bcV} \|_F^2  \\
= & \max_j n_3 \| \xoverline{\bcA} \|_F^2 \| \mathring{\boldsymbol{\ce}}_j^H \ast_{\boldsymbol{L}} \hat{\bcV} \|_F^2  \\
= & n_3 \tau \| \bcA \|_F^2 \max_j \| \mathring{\boldsymbol{\ce}}_j^H \ast_{\boldsymbol{L}} \hat{\bcV} \|_F^2  \\
\leq & \frac{\mu r n_3}{n_2} \| \bcA \|_F^2.
\end{align*}
Then we can bound $\| \tilde{\bH}_j \|$ in the following:
\begin{align*}
& \| \tilde{\bH}_j \|  \\
= & \sup_{\psi} \frac{\delta_j - \rho}{\rho \tau} \sum_{i,k} \langle \mathtt{bdiag} \Big( \overline{\bcP_{\hat{\bcV}} (\bar{\boldsymbol{\ce}}_{ijk})} \Big), \widebar{\bD}_1 \rangle \langle \mathtt{bdiag} \Big( \overline{\bcP_{\hat{\bcV}} (\bar{\boldsymbol{\ce}}_{ijk})} \Big), \widebar{\bD}_2 \rangle  \\
\leq & \sup_{\psi} \frac{\tau}{\rho} |\delta_j - \rho| \sum_{i,k} | \langle \bcD_1, \bcP_{\hat{\bcV}} (\bar{\boldsymbol{\ce}}_{ijk}) \rangle | | \langle \bcD_2, \bcP_{\hat{\bcV}} (\bar{\boldsymbol{\ce}}_{ijk}) \rangle |  \\
= & \sup_{\psi} \frac{\tau}{\rho} |\delta_j - \rho| \sum_{i,k} | \langle \bcP_{\hat{\bcV}} (\bcD_1), \bar{\boldsymbol{\ce}}_{ijk} \rangle |^2  \\
\leq & \sup_{\psi} \frac{\tau}{\rho} \sum_{i,k} \langle \bcP_{\hat{\bcV}} (\bcD_1), \bar{\boldsymbol{\ce}}_{ijk} \rangle^2  \\
\leq & \sup_{\psi} \frac{\tau}{\rho} \| \bcP_{\hat{\bcV}} (\bcD_1) \|_{2,\infty}^2  \\
\leq & \sup_{\psi} \frac{\tau}{\rho} \frac{\mu r n_3}{n_2} \| \bcD_1 \|_F^2  \\
\leq & \frac{\mu r n_3}{\rho n_2} := \nu.
\end{align*}

Note that $\mathbb{E}[(\rho^{-1} \delta_j - 1)^2] = \rho^{-1} (1 - \rho) \leq \rho^{-1}$. We further obtain
\begin{align*}
& \| \sum_j \mathbb{E} [\tilde{\bH}_j \tilde{\bH}_j^H] \|  \\
= & \sup_{\psi} \sum_j \mathbb{E}[(\rho^{-1} \delta_j - 1)^2] \tau^2 \Big( \sum_{i,k} \langle \bcD_1, \bcP_{\hat{\bcV}} (\bar{\boldsymbol{\ce}}_{ijk}) \rangle \langle \bcD_2, \bcP_{\hat{\bcV}} (\bar{\boldsymbol{\ce}}_{ijk}) \rangle \Big)^2  \\
\leq & \frac{\tau^2}{\rho} \sup_{\psi} \sum_j \Big( \sum_{i,k} | \langle \bcD_1, \bcP_{\hat{\bcV}} (\bar{\boldsymbol{\ce}}_{ijk}) \rangle | | \langle \bcD_2, \bcP_{\hat{\bcV}} (\bar{\boldsymbol{\ce}}_{ijk}) \rangle | \Big)^2  \\
= & \frac{\tau^2}{\rho} \sup_{\psi} \sum_j \Big( \sum_{i,k} \langle \bcD_1, \bcP_{\hat{\bcV}} (\bar{\boldsymbol{\ce}}_{ijk}) \rangle^2 \Big)^2  \\
= & \frac{\tau^2}{\rho} \sup_{\psi} \sum_j \| \bcP_{\hat{\bcV}} (\bcD_1) \|_{2,\infty}^2 \Big( \sum_{i,k} \langle \bcD_1, \bcP_{\hat{\bcV}} (\bar{\boldsymbol{\ce}}_{ijk}) \rangle^2 \Big)  \\
= & \frac{\tau^2}{\rho} \frac{\mu r n_3}{\tau n_2} \sup_{\psi} \sum_j \Big( \sum_{i,k} \langle \bcD_1, \bcP_{\hat{\bcV}} (\bar{\boldsymbol{\ce}}_{ijk}) \rangle^2 \Big)  \\
= & \frac{\tau \mu r n_3}{\rho n_2} \sup_{\psi} \sum_j \Big( \sum_{i,k} \langle \bcD_1, \bcP_{\hat{\bcV}} (\bar{\boldsymbol{\ce}}_{ijk}) \rangle^2 \Big)  \\
\leq & \frac{\tau \mu r n_3}{\rho n_2} \sup_{\psi} \| \bcD_1 \|_F^2  \\
\leq & \frac{\mu r n_3}{\rho n_2} := \omega.
\end{align*}
Since $\epsilon$ is small, $\omega / \nu = 1 > \epsilon$. By Lemma~\ref{lem:Bernstein}, we have
\begin{align*}
& \mathbb{P} \Big( \| \bcP_{\hat{\bcV}} - \frac{1}{\rho} \bcP_{\hat{\bcV}} \bcP_{\hat{\mathbf{\Theta}}} \bcP_{\hat{\bcV}} \| > \epsilon \Big)  \\
= & \mathbb{P} \Big( \| \sum_j \mathbb{E} [\tilde{\bH}_j] \| > \epsilon \Big)  \\
\leq & (n_1 + n_2) n_3 \exp \Big( - \frac{3 \epsilon^2}{8 \omega} \Big)  \\
\leq & (n_1 + n_2) n_3 \exp \Big( - \frac{3 \epsilon^2 \rho n_2}{8 \mu r n_3} \Big).
\end{align*}
Let $\rho \geq c_2 \mu r n_3 \log(n_{(1)}) / (\epsilon^2 n_2)$. Then the following inequality holds.
\begin{align*}
& \mathbb{P} \Big( \| \bcP_{\hat{\bcV}} - \frac{1}{\rho} \bcP_{\hat{\bcV}} \bcP_{\hat{\mathbf{\Theta}}} \bcP_{\hat{\bcV}} \| \leq \epsilon \Big)  \\
= & 1 - \Big( \| \bcP_{\hat{\bcV}} - \frac{1}{\rho} \bcP_{\hat{\bcV}} \bcP_{\hat{\mathbf{\Theta}}} \bcP_{\hat{\bcV}} \| > \epsilon \Big)  \\
\geq & 1 - (n_1 + n_2) n_3 \exp \Big( - \frac{3 \epsilon^2 \rho n_2}{8 \mu r n_3} \Big)  \\
\geq & 1 - 2 n_3 (n_{(1)})^{- \frac{3 c_2}{8} + 1}.
\end{align*}
By choosing an appropriate $c_2$, we have $\mathbb{P} (\| \bcP_{\hat{\bcV}} - \frac{1}{\rho} \bcP_{\hat{\bcV}} \bcP_{\hat{\mathbf{\Theta}}} \bcP_{\hat{\bcV}} \| \leq \epsilon) \geq 1 - n_{(1)}^{-10}$. The proof is completed.
\end{proof}

\subsubsection{Proof of Corollary~\ref{coro:pthetapv}}

\begin{proof}
From Lemma~\ref{lem:projerrorbound} we have
\begin{align*}
\| \bcP_{\hat{\bcV}} - \frac{1}{1 - \rho} \bcP_{\hat{\bcV}} \bcP_{\hat{\mathbf{\Theta}}} \bcP_{\hat{\bcV}} \| \leq \epsilon,
\end{align*}
provided that $1 - \rho \geq c_2 \mu r n_3 \log(n_{(1)}) / (\epsilon^2 n_2)$. Note that $\bcI = \bcP_{\hat{\mathbf{\Theta}}} + \bcP_{\hat{\mathbf{\Theta}}^{\perp}}$, we have
\begin{align*}
\| \bcP_{\hat{\bcV}} - \frac{1}{1 - \rho} \bcP_{\hat{\bcV}} \bcP_{\hat{\mathbf{\Theta}}^{\perp}} \bcP_{\hat{\bcV}} \| = \frac{1}{1 - \rho} \| \rho \bcP_{\hat{\bcV}} - \bcP_{\hat{\bcV}} \bcP_{\hat{\mathbf{\Theta}}} \bcP_{\hat{\bcV}} \|.
\end{align*}
Then, by the triangular inequality
\begin{align*}
\| \bcP_{\hat{\bcV}} \bcP_{\hat{\mathbf{\Theta}}} \bcP_{\hat{\bcV}} \| \leq \| \bcP_{\hat{\bcV}} \bcP_{\hat{\mathbf{\Theta}}} \bcP_{\hat{\bcV}} - \rho \bcP_{\hat{\bcV}} \| + \| \rho \bcP_{\hat{\bcV}} \| \leq (1 - \rho) \epsilon + \rho.
\end{align*}
The proof is completed by using $\| \bcP_{\hat{\mathbf{\Theta}}} \bcP_{\hat{\bcV}} \|^2 = \| \bcP_{\hat{\bcV}} \bcP_{\hat{\mathbf{\Theta}}} \bcP_{\hat{\bcV}} \|$.
\end{proof}

\section{ADDITIONAL EXPERIMENTS}
\label{sec:moreexperiment}

\subsection{Application to Outlier Detection}

Here, we provide results for the missing data experiments on the COIL20-MIRFLICKR-25k dataset in Table~\ref{tab:coil20-miss}.

\begin{table}[htbp]
\tiny
\centering
\caption{AUC of outlier detection and clustering results (ACC, NMI and PUR) on COIL20-MIRFLICKR-25k dataset for missing data experiments.}
\setlength{\tabcolsep}{0.1em}{\begin{tabular}{c||c|c|c|c||c|c|c|c||c|c|c|c||c|c|c|c}
\hline
\# Outlier & \multicolumn{8}{c||}{100} & \multicolumn{8}{c}{200}  \\
\hline
\multirow{2}{*}{Methods} & \multicolumn{4}{c||}{$\delta = 10\%$} & \multicolumn{4}{c||}{$\delta = 20\%$} & \multicolumn{4}{c||}{$\delta = 10\%$} & \multicolumn{4}{c}{$\delta = 20\%$}  \\
\cline{2-17}
 & AUC & ACC & NMI & PUR & AUC & ACC & NMI & PUR & AUC & ACC & NMI & PUR & AUC & ACC & NMI & PUR  \\
\hline
TNN+TRPCA-DFT & 0.7686 & 0.5661 & 0.7055 & 0.6422 & 0.7676 & 0.5621 & 0.7021 & 0.6407 & 0.7393 & 0.5942 & 0.7092 & 0.6499 & 0.7374 & 0.5905 & 0.7087 & 0.6467  \\
TNN+TRPCA-DCT & 0.7739 & 0.5354 & 0.6907 & 0.6322 & 0.7726 & 0.5428 & 0.6923 & 0.6343 & 0.7442 & 0.5740 & 0.7028 & 0.6395 & 0.7422 & 0.5732 & 0.6976 & 0.6339  \\
TNN+OR-TPCA-DFT & 0.0218 & 0.5613 & 0.6566 & 0.5980 & 0.0158 & 0.5708 & 0.6621 & 0.6049 & 0.0070 & 0.5780 & 0.6675 & 0.6089 & 0.0050 & 0.5734 & 0.6646 & 0.6048  \\
TNN+TLRR-DFT & 0.8170 & 0.5554 & 0.6417 & 0.5997 & \textbf{0.8158} & 0.5616 & 0.6505 & 0.6080 & 0.8032 & 0.5600 & 0.6396 & 0.5986 & \textbf{0.8009} & 0.5675 & 0.6488 & 0.6069  \\
TNN+TLRR-DCT & 0.7871 & 0.6268 & 0.7188 & 0.6711 & 0.7876 & 0.6251 & 0.7177 & 0.6695 & 0.7670 & 0.6692 & 0.7415 & 0.7036 & 0.7694 & 0.6680 & 0.7432 & 0.7036  \\
\hline
TNN+OR-TLRR-DFT & 0.7888 & 0.6667 & 0.7499 & 0.6935 & 0.7841 & 0.6653 & 0.7474 & 0.6923 & 0.7653 & 0.6657 & 0.7409 & 0.6868 & 0.7604 & 0.6636 & 0.7408 & 0.6867  \\
TNN+OR-TLRR-DCT & 0.7657 & 0.5757 & 0.6970 & 0.6376 & 0.7617 & 0.5798 & 0.6980 & 0.6405 & 0.7361 & 0.6038 & 0.7119 & 0.6546 & 0.7351 & 0.6116 & 0.7156 & 0.6608  \\
TLRR-EWZF-DFT & 0.8095 & 0.5438 & 0.6255 & 0.5849 & 0.7933 & 0.5160 & 0.6132 & 0.5621 & 0.7923 & 0.5682 & 0.6380 & 0.6034 & 0.7752 & 0.5497 & 0.6182 & 0.5757  \\
TLRR-EWZF-DCT & 0.7610 & 0.6446 & 0.7512 & 0.6781 & 0.7269 & 0.6618 & 0.7518 & 0.6913 & 0.7065 & 0.6391 & 0.7482 & 0.6712 & 0.6648 & 0.6742 & 0.7505 & 0.6931  \\
OR-TLRR-EWZF-DFT & 0.7919 & 0.6430 & 0.7346 & 0.6719 & 0.7905 & 0.6051 & 0.7153 & 0.6434 & 0.7658 & 0.6685 & 0.7426 & 0.6885 & 0.7630 & 0.6489 & 0.7299 & 0.6753  \\
OR-TLRR-EWZF-DCT & \textbf{0.8382} & \textbf{0.6750} & \textbf{0.7586} & \textbf{0.7118} & 0.7955 & \textbf{0.6763} & \textbf{0.7548} & \textbf{0.6981} & \textbf{0.8225} & \textbf{0.6803} & \textbf{0.7598} & \textbf{0.7042} & 0.7496 & \textbf{0.6887} & \textbf{0.7603} & \textbf{0.7038}  \\
\hline
\end{tabular}}
\label{tab:coil20-miss}
\end{table}

\subsection{Application to Image Clustering}

\begin{table}[htbp]
\centering
\caption{Clustering results (ACC, NMI and PUR) on the noisy Umist dataset.}
\setlength{\tabcolsep}{0.1em}{\begin{tabular}{c||c|c|c||c|c|c||c|c|c||c|c|c}
\hline
\multirow{2}{*}{Methods} & \multicolumn{3}{c||}{$\sigma^2 = 0.1$} & \multicolumn{3}{c||}{$\sigma^2 = 0.2$} & \multicolumn{3}{c||}{$\sigma^2 = 0.3$} & \multicolumn{3}{c}{$\sigma^2 = 0.4$}  \\
\cline{2-13}
 & ACC & NMI & PUR & ACC & NMI & PUR & ACC & NMI & PUR & ACC & NMI & PUR  \\
\hline
R-PCA & 0.4252 & 0.5825 & 0.4739 & 0.4230 & 0.5820 & 0.4712 & 0.4185 & 0.5794 & 0.4679 & 0.4215 & 0.5835 & 0.4724  \\
OR-PCA & 0.3955 & 0.5762 & 0.4544 & 0.3985 & 0.5776 & 0.4558 & 0.3950 & 0.5703 & 0.4504 & 0.3975 & 0.5740 & 0.4554  \\
LRR & 0.3400 & 0.4358 & 0.3952 & 0.3255 & 0.4044 & 0.3748 & 0.3178 & 0.3924 & 0.3643 & 0.3170 & 0.3898 & 0.3634  \\
SSC & 0.4848 & 0.6662 & 0.5762 & 0.4810 & 0.6618 & 0.5727 & 0.4846 & 0.6631 & 0.5767 & 0.4762 & 0.6595 & 0.5706  \\
SSC-OMP & 0.4144 & 0.5670 & 0.5093 & 0.4062 & 0.5573 & 0.5006 & 0.4203 & 0.5668 & 0.5105 & 0.4094 & 0.5574 & 0.5061  \\
TRPCA-DFT & 0.5701 & 0.7186 & 0.6324 & 0.5683 & 0.7129 & 0.6305 & 0.5581 & 0.7037 & 0.6219 & 0.5624 & 0.7052 & 0.6258  \\
TRPCA-DCT & 0.5747 & 0.7218 & 0.6389 & 0.5692 & 0.7167 & 0.6358 & 0.5649 & 0.7104 & 0.6324 & 0.5673 & 0.7090 & 0.6336  \\
OR-TPCA & 0.3240 & 0.3642 & 0.3648 & 0.3170 & 0.3556 & 0.3588 & 0.3170 & 0.3508 & 0.3570 & 0.3195 & 0.3530 & 0.3565  \\
TLRR-DFT & 0.4437 & 0.5426 & 0.5143 & 0.4216 & 0.5100 & 0.4810 & 0.4198 & 0.5057 & 0.4778 & 0.4177 & 0.5020 & 0.4729  \\
TLRR-DCT & 0.4956 & 0.6425 & 0.5657 & 0.4834 & 0.6300 & 0.5573 & 0.4712 & 0.6087 & 0.5458 & 0.4720 & 0.6045 & 0.5437  \\
\hline
OR-TLRR-DFT & \textbf{0.5992} & \textbf{0.7429} & \textbf{0.6667} & \textbf{0.5955} & \textbf{0.7408} & \textbf{0.6628} & \textbf{0.5820} & \textbf{0.7320} & \textbf{0.6525} & \textbf{0.5902} & \textbf{0.7317} & \textbf{0.6570}  \\
OR-TLRR-DCT & 0.4930 & 0.6618 & 0.5493 & 0.4907 & 0.6583 & 0.5452 & 0.4909 & 0.6581 & 0.5448 & 0.4984 & 0.6595 & 0.5546  \\
\hline
\end{tabular}}
\label{tab:umistnoisy}
\end{table}

\begin{table}[htbp]
\centering
\caption{Clustering results (ACC, NMI and PUR) on the noisy USPS dataset.}
\setlength{\tabcolsep}{0.1em}{\begin{tabular}{c||c|c|c||c|c|c||c|c|c}
\hline
\multirow{2}{*}{Methods} & \multicolumn{3}{c||}{SR = 0.3} & \multicolumn{3}{c||}{SR = 0.5} & \multicolumn{3}{c}{SR = 0.7}  \\
\cline{2-10}
 & ACC & NMI & PUR & ACC & NMI & PUR & ACC & NMI & PUR  \\
\hline
R-PCA & 0.4633 & 0.4100 & 0.4912 & 0.4572 & 0.4044 & 0.4876 & 0.4566 & 0.3999 & 0.4816  \\
OR-PCA & 0.4489 & 0.4015 & 0.4807 & 0.4482 & 0.4014 & 0.4806 & 0.4468 & 0.3953 & 0.4773  \\
LRR & 0.2911 & 0.2146 & 0.3111 & 0.2503 & 0.1603 & 0.2632 & 0.2540 & 0.1375 & 0.2664  \\
SSC & 0.3890 & 0.4065 & 0.4618 & 0.4025 & 0.4119 & 0.4670 & 0.4266 & 0.4125 & 0.4690  \\
SSC-OMP & 0.1424 & 0.0984 & 0.1939 & 0.1623 & 0.1149 & 0.2147 & 0.2105 & 0.1517 & 0.2624  \\
TRPCA-DFT & 0.3417 & 0.2902 & 0.3767 & 0.3362 & 0.2797 & 0.3744 & 0.3506 & 0.2756 & 0.3739  \\
TRPCA-DCT & 0.3399 & 0.2903 & 0.3768 & 0.3380 & 0.2795 & 0.3748 & 0.3564 & 0.2817 & 0.3781  \\
OR-TPCA & 0.3219 & 0.2533 & 0.3468 & 0.3342 & 0.2685 & 0.3573 & 0.3588 & 0.2787 & 0.3754  \\
TLRR-DFT & 0.2762 & 0.1624 & 0.2892 & 0.2129 & 0.0899 & 0.2233 & 0.1795 & 0.0518 & 0.1870  \\
TLRR-DCT & 0.3526 & 0.2688 & 0.3965 & 0.3203 & 0.2190 & 0.3557 & 0.2610 & 0.1386 & 0.2802  \\
\hline
OR-TLRR-DFT & \textbf{0.4876} & \textbf{0.4228} & \textbf{0.5251} & \textbf{0.5055} & \textbf{0.4263} & \textbf{0.5348} & \textbf{0.4929} & \textbf{0.4252} & \textbf{0.5317}  \\
OR-TLRR-DCT & 0.4069 & 0.3452 & 0.4460 & 0.4074 & 0.3532 & 0.4555 & 0.4246 & 0.3519 & 0.4638  \\
\hline
\end{tabular}}
\label{tab:uspsnoisy}
\end{table}

In order to study the robustness of our methods for dealing with sample-specific corruptions, we add white Gaussian noise to 20\% of the images in the Umist dataset, and we use $\mathbf{\Theta}$ to denote the indices of the selected images. We use $\bcX^{\mathrm{noise}}$ to denote the noisy samples. For each sample $\bcX^{\mathrm{noise}}(:,j,:)$, $j \in \mathbf{\Theta}$, $\bcX^{\mathrm{noise}}(:,j,:)$ can be expressed as $\bcX^{\mathrm{noise}}(:,j,:) = \mathtt{ivec} (\mathtt{vec} (\bcX(:,j,:)) + \boldsymbol{\varphi}_j)$, where $\boldsymbol{\varphi}_j \in \R^{n_1n_3}$ is additive white Gaussian noise with zero mean and variance $\sigma^2 \| \bcX(:,j,:) \|_F^2$. In this experiment, $\sigma^2$ ranges from 0.1 to 0.4 and for each $\sigma^2$, we generate $\bcX^{\mathrm{noise}}$ 10 times. We can again observe from Table~\ref{tab:umistnoisy} that OR-TLRR-DFT achieves the highest clustering accuracy, which suggests the effectiveness of the proposed method for clustering the corrupted data.

We finally consider the USPS dataset, which contains 9298 images of $16 \times 16$ handwritten digits. In this experiment, 100 images are randomly selected from each of the 10 digits; hence $n_2 = 1000$. For each image, we randomly shift the digit horizontally with respect to the center by 3 pixels either side (left or right). Then we generate noisy data by adding white Gaussian noise to different percentages of the shifted image samples, where the sampling rate (SR) is set to be 0.3, 0.5 and 0.7. The noise level $\sigma^2$ is now fixed to be 0.2. One can infer from Table~\ref{tab:uspsnoisy} that OR-TLRR-DFT consistently outperforms other compared methods. By introducing column-sparse noise, the robustness of the proposed method is also verified.

\end{document}